\documentclass[arxiv, 12pt]{colt2025}

\ifarxivsubmission
  \makeatletter
  \renewcommand{\jmlrmaketitle}{%
    \thispagestyle{plain}
    \noindent{\Large\bfseries\@title\par}\vskip 1.5ex%
    \@jmlr@authors\vskip 1.5ex%
    \@date\par\vskip 1ex%
    \let\@thanks\@empty
  }
  \makeatother
\fi

\usepackage{cleveref}
\usepackage{xcolor}
\colorlet{darkgreen}{green!50!black}
\usepackage{tikz}
\usetikzlibrary{3d}
\usepackage{framed}
\usepackage{mdframed}
\usetikzlibrary{matrix,arrows,decorations,shapes,trees,positioning}
\tikzset{
    >=latex,
    punkt/.style={
           rectangle,
           rounded corners,
           draw=black, very thick,
           text width=6.5em,
           minimum height=2em,
           text centered},
    pil/.style={
           ->,
           double,
           thick,
           shorten <=2pt,
           shorten >=2pt,},
    punkti/.style={
           rectangle,
           rounded corners,
           draw=black, very thick,
           text width=26.5em,
           minimum height=2em,
           text centered},
    punktii/.style={
           rectangle,
           rounded corners,
           draw=black, very thick,
           text width=18.5em,
           minimum height=2em,
           text centered}
}

\usepackage{amsmath} 

\newcommand{\Ex}{\mathop{\mathbb{E}}\displaylimits}

\renewcommand{\H}{\mathcal{H}}

\usepackage{enumitem}

\newtheorem*{theorem*}{Theorem}
\newtheorem*{lemma*}{Lemma}
\newtheorem*{example*}{Example}
\newtheorem*{proposition*}{Proposition}

\title{Data Selection for ERMs}
\usepackage{times}



\coltauthor{%
 \Name{Steve Hanneke}\Email{steve.hanneke@gmail.com}\\
 \addr Department of Computer Science, Purdue University
 \AND
 \Name{Shay Moran}\Email{smoran@technion.ac.il}\\
 \addr Departments of Mathematics, Computer Science, and Data and Decision Sciences, Technion and Google Research
 \AND
 \Name{Alexander Shlimovich} \Email{ashlimovich@campus.technion.ac.il}\\
 \addr Department of Mathematics, Technion
 \AND
 \Name{Amir Yehudayoff}\Email{yehudayoff@technion.ac.il}\\
 \addr Department of Mathematics, Technion and Department of Computer Science, Copenhagen University
}

\begin{document}

\maketitle

\begin{abstract}
Learning theory has traditionally followed a model-centric approach, focusing on designing optimal algorithms for a fixed natural learning task (e.g., linear classification or regression). In this paper, we adopt a complementary data-centric perspective, whereby we fix a natural learning rule and focus on optimizing the training data. Specifically, we study the following question: given a learning rule \(\mathcal{A}\) and a data selection budget \(n\), how well can \(\mathcal{A}\) perform when trained on at most \(n\) data points selected from a population of \(N\) points? We investigate when it is possible to select \(n \ll N\) points and achieve performance comparable to training on the entire population.

We address this question across a variety of empirical risk minimizers. Our results include optimal data-selection bounds for mean estimation, linear classification, and linear regression. Additionally, we establish two general results: a taxonomy of error rates in binary classification and in stochastic convex optimization. Finally, we propose several open questions and directions for future research. 
\end{abstract}

\begin{keywords}%
Data Selection, Empirical Risk Minimizers, Compression Schemes, Machine Teaching, Active Learning.
\end{keywords}

\vspace{-3pt}\section{Introduction}\vspace{-3pt}

Roughly speaking, machine learning stands on two legs: \emph{(i) Data and (ii) Training algorithms.} While much of the research and development in machine learning has focused on improving training algorithms -- through advanced architectures, optimization techniques, and novel learning paradigms -- the selection and quality of data also play a crucial role. Recognizing this, Andrew Ng has championed a data-centric approach to AI, emphasizing the critical role of high-quality data in machine learning systems~\citep{ng2021datacentric}. He contrasts \emph{model-centric} methodologies, which focus on improving algorithms to handle noisy or imperfect data, with \emph{data-centric} methodologies, which prioritize enhancing the quality and representativeness of the data itself while holding the algorithm fixed.

Historically, learning theory has predominantly adopted a model-centric perspective, focusing on understanding and improving training algorithms. A common approach is to fix the learning task—often modeled by a hypothesis class—and then study or design optimal algorithms tailored to that task. This framework, grounded in simplifying assumptions such as independent and identically distributed (IID) samples or realizability within a hypothesis class, provides clarity and mathematical rigor for algorithmic design. However, these assumptions often overlook the intricacies of data selection, leaving this crucial aspect largely unexplored. 

Our study focuses on the following basic theoretical problem, inspired by the data-centric perspective: given a large batch of \( N \) data points (representing the population), how can we select a subset of \( n \) examples (\( n \ll N \)) such that training a fixed, natural training algorithm -- say, an Empirical Risk Minimizer (ERM) with respect to a natural loss -- exclusively on these \( n \) examples yields a model whose performance, in terms of loss, is nearly as good as a model trained on the entire population? This question is a basic example of a data-centric problem, aligning with the growing recognition of the importance of data quality in machine learning.

\begin{center}
    \begin{framed}
        \begin{minipage}{0.9\textwidth}
            \begin{center}
                {\textbf{ Overarching Question}}
                
                \vspace{0.2cm}
                
                Given a natural learning rule $\mathcal{A}$ and a data selection-budget \( n \), how well can $\mathcal{A}$ perform when trained on at most \( n \) data points from the population?
            \end{center}
        \end{minipage}
    \end{framed}
\end{center}

This question is also motivated by other perspectives. Computationally, it relates to preprocessing techniques to select a small subset of the training set, thereby simplifying the training process and reducing computational overhead. Statistically, it connects to deriving generalization bounds through sample compression arguments.

\vspace{-3pt}\subsection{Warmup: Mean Estimation}
As a basic case study, consider the following simple regression task: suppose we have a dataset \( D = \{z_1, \ldots, z_N\} \subseteq \mathbb{R} \), where \( D \) is treated as a multiset (i.e., the order does not matter, but repetitions are allowed). The goal is to output a point \( h \in \mathbb{R} \) that minimizes the squared loss:
\[
L_D(h) = \frac{1}{N}\sum_{z_i \in D} (h - z_i)^2.
\]
A simple calculation shows that the minimizer \( h^\star = h^\star(D) \) is the mean \( h^\star = \frac{1}{N}\sum_i z_i \). Accordingly, we consider the natural learning rule which, when given a sequence of numbers \( z_1, \ldots, z_n \), outputs their average \( \frac{1}{n}\sum_i z_i \).

Our question becomes: can we select a small subdataset of points from the dataset \( D \) such that applying the algorithm to this subdataset yields a guarantee close to the optimum value? More formally, given a dataset \( D \) and a selection budget \( n\geq 1 \), we want to investigate how closely we can approximate
\( L_D^\star = \min_{h\in \mathbb{R}} L_D(h) = L_D(h^\star)\).
Let
\[
L_D^\star(n) = \min_{\substack{z_1, \ldots, z_n \in D}} L_D\Biggl(\frac{1}{n}\sum_{i} z_i\Biggr).
\]
Thus, \( L_D^\star(n) \) measures the performance when selecting the best possible \( n \) points for training. Note that \( L_D^\star(N) = L_D^\star \).

In the following theorem, we quantitatively characterize the optimal multiplicative approximation factor between \( L_D^\star(n) \) and \( L_D^\star \), for any selection budget \( n \).
\begin{mdframed}
\begin{theorem}[Mean Estimation]\label{t:mean}
For every \( n \geq 1 \),
\[
\sup_{D \subseteq \mathbb{R}} \frac{L_D^\star(n)}{L_D^\star} = \frac{2n}{2n-1},
\]
where \( D \) ranges over all finite multisets of \( \mathbb{R} \). Above, the ratio \(\frac{L^\star_D(n)}{L^\star_D}\) is defined to be \(1\) when \(L_D^\star(n)= L_D^\star=0\). If only \(L_D^\star=0\), the ratio is defined as \(\infty\).
\end{theorem}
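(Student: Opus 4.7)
The plan is to reduce to centered data, reformulate the bound as a subset-sum balancing problem, and then establish the lower bound by an explicit construction and the upper bound by a case analysis on $N$ relative to $2n$.

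\textbf{Reduction.} Translating $D$ by a constant $c$ sends $L_D(h)$ to $L_D(h-c)$, so the ratio $L_D^\star(n)/L_D^\star$ is translation-invariant and I may assume $\bar{D}:= \frac{1}{N}\sum_i z_i = 0$. Then $L_D(h) = h^2 + L_D^\star$ with $L_D^\star = \sigma^2(D) := \frac{1}{N}\sum_i z_i^2$, and writing $T_S := \sum_{z \in S} z$ the ratio becomes $1 + \min_{|S|=n} T_S^2/(n^2 \sigma^2(D))$. The claim thus reduces to showing
\[
\sup_{D} \min_{|S|=n} \frac{T_S^2}{\sigma^2(D)} \;=\; \frac{n^2}{2n-1}.
\]

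\textbf{Lower bound.} I would exhibit $D^\star$ consisting of one copy of some $a>0$ together with $2n-1$ copies of $b = -a/(2n-1)$, so that $\bar{D^\star} = 0$ and $\sigma^2(D^\star) = a^2/(2n-1)$. The only $n$-subsets respecting multiplicities either contain the sole copy of $a$, giving $T_S = a + (n-1)b = na/(2n-1)$, or not, giving $T_S = nb = -na/(2n-1)$; hence $\min_S T_S^2 = n^2 a^2/(2n-1)^2$, which saturates the claimed ratio.

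\textbf{Upper bound.} For the reverse inequality I would split by $N$. When $n \leq N \leq 2n$, draw $S$ uniformly at random from the $n$-subsets of $D$. The standard without-replacement variance formula gives $\mathbb{E}[T_S^2] = n(N-n)\sigma^2/(N-1)$, and the algebraic identity $n(N-n)/(N-1) \leq n^2/(2n-1)$ holds precisely when $N \leq 2n$, so the probabilistic method gives the desired bound. The main obstacle is the case $N > 2n$, where uniform sampling produces too large an expectation. Here I would appeal to local optimality of the minimizer $S^\star$: assuming WLOG $T^\star := T_{S^\star} > 0$, any single-element swap $i \in S^\star \leftrightarrow j \notin S^\star$ produces an $n$-subset with $|T| \geq T^\star$, forcing $z_j \geq z_i$ or $z_i - z_j \geq 2T^\star$. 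Combining this structural "gap" condition with $\sum_D z_i = 0$ and Cauchy--Schwarz lower bounds on $\sigma^2(D)$ coming from the partition of $D$ into elements above and below $m := \max_{j \notin S^\star} z_j$ should yield $T^{\star 2} \leq n^2 \sigma^2(D)/(2n-1)$. The delicate step is that the naive single-swap intermediate-value bound $\min_S T_S^2 \leq (\max D - \min D)^2/4$ is strictly weaker than needed, so one must combine the local-optimality constraints across several complement elements to recover the tight constant $1/(2n-1)$ attained at $D^\star$.
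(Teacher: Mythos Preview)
Your reduction and lower bound are correct and match the paper's. Your without-replacement sampling argument for $n\le N\le 2n$ is a pleasant alternative to the paper's two-point analysis and is genuinely different in flavor.

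The gap is the case $N>2n$, which is the entire content of the upper bound. You correctly observe that uniform sampling gives $\mathbb{E}[T_S^2]=n(N-n)\sigma^2/(N-1)$, which exceeds $n^2\sigma^2/(2n-1)$ once $N>2n$, so something else is needed. But what you offer in its place is not a proof: you derive the single-swap dichotomy $z_j\ge z_i$ or $z_i-z_j\ge 2T^\star$, then say that combining such constraints ``across several complement elements'' together with Cauchy--Schwarz ``should yield'' the bound. You also explicitly flag that the naive swap bound $(\max D-\min D)^2/4$ is too weak. This is an honest description of an obstacle, not a resolution of it. I do not see how to push the swap constraints through to the tight constant $1/(2n-1)$ without an additional structural idea; in particular, local optimality of $S^\star$ gives you information about pairwise differences, but the target inequality compares $T^{\star 2}$ to a \emph{global} second moment $\sigma^2$, and bridging these requires more than Cauchy--Schwarz on a partition.

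The paper's missing idea is a Carath\'eodory-type reduction: for any finitely supported distribution $D$ (centered, say), there is a distribution $D'$ supported on at most \emph{two} points of $\operatorname{supp}(D)$ with the same mean and $\sigma^2(D')\le\sigma^2(D)$. Since $\operatorname{supp}(D')\subseteq\operatorname{supp}(D)$, the achievable $n$-point averages from $D'$ are also achievable from $D$, so the ratio can only go up in passing to $D'$. This collapses the entire upper bound to the two-point case, which is then handled by an explicit grid argument on the interval $[z_1,z_2]$. That reduction is exactly what lets you bypass the $N>2n$ difficulty, and without it (or a completed swap argument, which you have not supplied) the upper bound is unproven.

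A minor point: your formulation ``$\min_{|S|=n}$'' with ``respecting multiplicities'' reads as selecting a size-$n$ submultiset of $D$, whereas the paper allows $z_1,\ldots,z_n$ to be chosen from $\operatorname{supp}(D)$ with arbitrary repetition. For the upper bound this only helps you, but for the lower bound you should check your tight example still works under the more permissive selection rule (it does: both $0$ and $1/n$ are at distance $1/(2n)$ from the mean $1/(2n)$).
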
    
\end{mdframed}
This result provides a worst-case guarantee: for any dataset \( D \), there exists a selection of \( n \) points whose average achieves a loss at most \( \left( 1 + \frac{1}{2n-1} \right) \) times the optimal loss, and no better guarantee is possible in general. It is likely that this bound can be improved in specific cases of interest—such as when the mean is one of the data points or when a data point is very close to the mean.

The above result is stated in terms of multiplicative approximation guarantees rather than additive ones. Multiplicative guarantees are natural in this setting, as they remain invariant under scaling, whereas additive guarantees are less meaningful due to the unbounded nature of the loss.

\vspace{-3pt}\paragraph{Proof Sketch.}
The lower bound is achieved by a dataset \( D \) consisting of \( 2n-1 \) copies of \( 0 \) and one copy of \( 1 \), yielding \( \frac{L_D^\star(n)}{L_D^\star} = \frac{2n}{2n-1} \). The analysis reduces to a simple calculation; see Section~\ref{t:mean:lower} for details.

The upper bound is the more technically challenging part of the proof. It builds on the following generalized version of Carathéodory's Theorem, which simplifies the analysis of \( L_D^\star(n) \) by reducing it to the case where \( D \) is supported on just two points. The result is stated for general dimension \( d \) and is used here for \( d = 1 \), but later it will also be applied to higher dimensions (\( d > 1 \)).
We use the following notation: let \( f_1, \ldots, f_n \) be real functions defined on the same domain. Define \( \mathtt{conv}(f_1, \dots, f_n) \) to be the set of all functions \( g \) such that \( g = \sum \alpha_i f_i \), where \( \alpha_i \geq 0 \) and \( \sum \alpha_i = 1 \).

\vspace{-3pt}\begin{proposition}[Carath\'eodory's Theorem for Convex Functions]\label{prop:car}
Let \( K \subseteq \mathbb{R}^d \) be convex and closed, and let \( f_1, \dots, f_n : K \to \mathbb{R} \) be strictly convex functions. Then, for any \( g \in \mathtt{conv}(f_1, \dots, f_n) \), there exist indices \( i_1 \leq \ldots \leq i_{d+1} \) and a function \( g' \in \mathtt{conv}(f_{i_1}, \dots, f_{i_{d+1}}) \) such that:
\begin{enumerate}
    \item \(\arg\min_{x\in K}g'(x) = \arg\min_{x\in K}g(x)\), and
    \item  \(\min_{x\in K}g'(x) \leq \min_{x\in K}g(x)\).
\end{enumerate}
\end{proposition}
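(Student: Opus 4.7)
My plan is to combine first-order optimality with a ``push-down'' Carath\'eodory argument in $\R^{d+1}$. Let $x^\star \in \arg\min_{x \in K} g(x)$, unique by strict convexity of $g$, and set $c^\star = g(x^\star)$. Writing $g = \sum_i \alpha_i f_i$, the optimality condition $0 \in \partial g(x^\star) + N_K(x^\star)$ together with the convex sum-rule for subdifferentials lets me choose subgradients $v_i \in \partial f_i(x^\star)$ so that $\sum_i \alpha_i v_i$ lies in the closed convex cone $C := -N_K(x^\star) \subseteq \R^d$.

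Next, I lift each function to a point in one higher dimension: set $\tilde w_i := (v_i, f_i(x^\star)) \in \R^{d+1}$, form the polytope $P := \mathtt{conv}(\tilde w_1,\dots,\tilde w_n)$, and let $T := C \times (-\infty, c^\star]$. The averaged point $\bar w := \sum_i \alpha_i \tilde w_i$ lies in $P \cap T$. I then slide $\bar w$ in the direction $(0,\dots,0,-1)$ inside $P$, and call $q$ the last point on this ray that still lies in $P$ (which exists by compactness). The first $d$ coordinates of $q$ coincide with those of $\bar w$ and hence still lie in $C$, while the last coordinate has only decreased, so $q \in T$; moreover $q$ lies on the boundary of $P$. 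Hence $q$ belongs to some proper face $F$ of $P$ with $\dim F \leq d$, and since $F = \mathtt{conv}(\{\tilde w_i : \tilde w_i \in F\})$, Carath\'eodory applied inside the affine hull of $F$ expresses $q$ as a convex combination $q = \sum_{j=1}^{d+1} \beta_j \tilde w_{i_j}$ of at most $d+1$ of the $\tilde w_i$.

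Defining $g' := \sum_{j=1}^{d+1} \beta_j f_{i_j} \in \mathtt{conv}(f_{i_1},\dots,f_{i_{d+1}})$, the two required conclusions drop out of the two halves of $q$: the last coordinate gives $g'(x^\star) \leq c^\star = g(x^\star)$, which is condition~2; the first $d$ coordinates give $\sum_j \beta_j v_{i_j} \in C = -N_K(x^\star)$, so $0 \in \partial g'(x^\star) + N_K(x^\star)$, meaning $x^\star$ minimizes $g'$ on $K$, and strict convexity of $g'$ (a convex combination of strictly convex functions) forces this minimizer to be unique, yielding $\arg\min g' = \{x^\star\} = \arg\min g$, which is condition~1. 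The most delicate step is shaving the sparsity from $d+2$ down to $d+1$: a direct application of Carath\'eodory to $\bar w \in P \subset \R^{d+1}$ would give $d+2$ functions, and the push-down step is exactly what places $q$ on a proper face rather than in the interior of $P$, saving one term.
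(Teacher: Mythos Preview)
Your proof is correct and follows a genuinely different route from the paper's. The paper argues by \emph{inductive sparsification}: at the minimizer $\xi$, it picks subgradients $y_i\in\partial f_i(\xi)$ with $\sum_i\lambda_i y_i=0$, uses the fact that for $n>d+1$ the $y_i$ are affinely dependent to find a direction $(\beta_i)$ with $\sum\beta_i y_i=0$ and $\sum\beta_i=0$, slides the weights $\lambda_i(t)=\lambda_i+t\beta_i$ until one hits zero, and chooses the endpoint that makes $g(\xi,t)$ smaller; then it recurses. Your argument is one-shot and geometric: you lift each function to the point $(v_i,f_i(x^\star))\in\R^{d+1}$, push the barycenter straight down in the last coordinate until it hits the boundary of the polytope, and invoke Carath\'eodory inside the resulting proper face. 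The lifting is exactly what lets you control the subgradient constraint and the value constraint simultaneously, and the push-down is what buys the extra saving from $d+2$ to $d+1$---a point you identify clearly.

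Two remarks on what each approach buys. Your treatment is arguably cleaner on the constrained case: you work with the full optimality condition $0\in\partial g(x^\star)+N_K(x^\star)$ and keep the first $d$ coordinates of $q$ equal to those of $\bar w$, so membership in $-N_K(x^\star)$ is preserved automatically; the paper's proof, by contrast, appeals to the unconstrained criterion $0\in\partial g(\xi)$ and does not explicitly carry the normal cone through the sparsification. On the other hand, the paper's iterative scheme is more algorithmic (it literally tells you which coefficient to zero out next) and makes transparent why the minimum value can only go down at each step, whereas your argument packages that monotonicity into the single push-down move. Both arguments rely on the same additivity of subdifferentials and on strict convexity only at the very end, to force uniqueness of the minimizer of $g'$.
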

The classical Carathéodory's Theorem asserts that if \( x, x_1, \ldots, x_n \in \mathbb{R}^d \) and \( x \in \mathtt{conv}(x_1, \ldots, x_n) \), then there exist indices \( i_1 \leq \ldots \leq i_{d+1} \) such that \( x \in \mathtt{conv}(x_{i_1}, \ldots, x_{i_{d+1}}) \). This result can be derived as a special case of Proposition~\ref{prop:car} by choosing \( f_i(x) = \|x - x_i\|_2^2 \),(noting that the function \( \sum \alpha_i f_i(x) \) is minimized when \( x = \sum \alpha_i x_i \)). Furthermore, the additional guarantee provided by Item 2 of Proposition~\ref{prop:car} instantiates in this setting as follows: in the language of probability theory, the classical Carathéodory asserts that for every finitely supported random variable \( X \) in \( \mathbb{R}^d \), there exists a random variable \( Y \), supported on at most \( d+1 \) points from the support of \( X \), such that \( \mathbb{E}[X] = \mathbb{E}[Y] \). Proposition~\ref{prop:car} extends this result by guaranteeing that the variance satisfies \( \mathtt{Var}(Y) \leq \mathtt{Var}(X) \).
The proof of Proposition~\ref{prop:car} follows a similar inductive sparsification process for the convex combination as in the classical Carathéodory Theorem (See \citep[Theorem 1.2.3, p.~6]{Matousek2002}, originally from \citep{Carathéodory1907}.) . The key difference is the need for a more careful selection during sparsification to ensure the second item (the inequality between the minimum values) is preserved.


Using Proposition~\ref{prop:car}, the proof of Theorem~\ref{t:mean} proceeds as follows. The loss \( L_D^\star \) as a function of~\(x\) is a convex combination of \( N \) strictly convex functions, \( f_i(x) = (x - z_i)^2 \). We use Proposition~\ref{prop:car} to argue that it suffices to consider datasets \( D \) supported on two points, \( z_1 \) and \( z_2 \). An explicit analysis of this case reveals that the largest possible ratio occurs when \( D \) contains \( 2n-1 \) copies of \( z_1 \) and 1 copy of \( z_2 \). An elementary calculation then yields the stated bound. The complete proofs of Theorem~\ref{t:mean} and Proposition~\ref{prop:car} are provided in Section~\ref{proofs}.


\vspace{-6pt}\subsection*{Organization}

We study questions of data selection in two main contexts: classification and stochastic convex optimization, with a particular focus on linear classification and linear regression.

The remainder of this manuscript is organized as follows. Section~\ref{sec:mainbinary} focuses on classification, presenting two main results: one for the class of linear classifiers (Section~\ref{sec:linbinary}) and another that provides a taxonomy for general hypothesis classes (Section~\ref{sec:genbin}). Section~\ref{sec:mainreg} parallels this structure in the context of regression, covering results for linear regression (Section~\ref{sec:linreg}) and stochastic convex optimization (Section~\ref{sec:sco}). Section~\ref{sec:open} discusses open problems and potential directions for future research. Related work is discussed throughout the paper, with a more comprehensive review provided in Section~\ref{sec:related}. To enhance readability, we include proof sketches and overviews alongside the main results, while full proofs appear in the appendix (Section~\ref{proofs}).

\vspace{-6pt}\section{Classification}\label{sec:mainbinary}
\subsection{Linear Classification}\label{sec:linbinary}

We begin by considering the class of linear classifiers in \(\mathbb{R}^d\). A \(d\)-dimensional linear classifier (or halfspace) is a function of the form \(x \mapsto \mathtt{sign}(w(x) + b)\), where \(w:\mathbb{R}^d \to \mathbb{R}\) is a linear functional, and \(b \in \mathbb{R}\) is a bias term. Let \(\mathcal{H}_d\) denote the class of \(d\)-dimensional linear classifiers. This class is arguably the most studied class in learning theory, and there is a rich variety of learning algorithms that have been developed for it.

In this section, we consider the realizable setting, meaning that the dataset \(D = \{(x_i, y_i)\}_{i=1}^N\) is consistent with a linear classifier \(h \in \mathcal{H}_d\); that is, \(h(x_i) = y_i\) for all \(i \leq N\). We focus on natural ERMs, such as the one that maximizes the margin: a halfspace \(h\) where the separating hyperplane has maximal distance from the training set. For convenience, we refer to this as the max-margin algorithm. More generally, we focus on continuous ERMs, which we now formalize.

In the following definition, we rely on two key notions: (i) a \emph{proper algorithm}, which is an algorithm whose output classifier always belongs to the hypothesis class (in the context of linear classifiers, this means that the algorithm always outputs a linear classifier), and (ii) for a label sequence \(\bar{y} = (y_1, \ldots, y_n) \in \{\pm 1\}^n\) the set \( R_{\bar{y}} \) which captures all realizable datasets whose label sequence is $\bar{y}$:
\[
R_{\bar{y}} = \{(x_1, \ldots, x_n) :  \{(x_i, y_i)\}_{i=1}^n \text{ is realizable by } \mathcal{H}_d\}.
\]
Note that, as a topological subspace of \((\mathbb{R}^d)^n\), the set \( R_{\bar{y}} \) is open.\footnote{This follows because \(\{(x_i, y_i)\}_{i=1}^n\) is realizable if and only if \(\mathtt{conv}\{x_i : y_i = +1\}\) and \(\mathtt{conv}\{x_i : y_i = -1\}\) are disjoint. Since the convex hulls are compact sets, they are disjoint if and only if the distance between them is strictly positive. By the continuity of the distance function, there exists an open neighborhood \(U\) around \((x_1, \ldots, x_n)\) such that
\(\mathtt{conv}\{x'_i : y_i = +1\}\) and \(\mathtt{conv}\{x'_i : y_i = -1\}\) are disjoint for all \((x'_1, \ldots, x'_n) \in U\), and hence
\(\{(x'_i, y_i)\}_{i=1}^n\) is realizable.}

\vspace{-3pt}\begin{definition}
We say that a proper algorithm \(A\) is \emph{continuous} if, for every \(\bar{y} \in \{\pm 1\}^n\), the map \(F: R_{\bar{y}} \to \mathbb{R}^{d+1}\), which takes as input \((x_1, \ldots, x_n) \in R_{\bar{y}}\) and outputs the parameters $(w,b)\in\mathbb{R}^{d+1}$ of the linear classifier \(A((x_i, y_i)_{i=1}^n)\), is continuous.\footnote{Linear classifiers admit multiple parameterizations. To ensure uniqueness, we adopt a canonical parameterization by normalizing $w$ so that its \(\ell_2\)-norm is 1. In the case when $w=0$ we normalize by setting $\lvert b\rvert =1$.}
\end{definition}
Thus, an ERM is continuous if the parameters of the linear classifier it outputs are a continuous function of its input. Many practical and well-known algorithms are continuous, for example, those that reduce via a surrogate loss to continuous optimization (e.g., gradient-based algorithms). In particular, the max-margin algorithm, as mentioned above, is continuous.

In this setting, our question becomes: given a realizable dataset \( D = \{z_i\}_{i=1}^N \), where each \( z_i = (x_i, y_i) \) is a labeled example, a selection budget \( n \), and a natural learning rule \( A \) (such as the max-margin algorithm or any other continuous ERM), what is the minimum classification loss that can be achieved by training \( A \) on just \( n \) points from \( D \)?

More formally, for a classifier \( h : \mathbb{R}^d \to \{\pm 1\} \), let the classification loss be defined as:
\[
L_D(h) = \frac{1}{N}\sum_{i=1}^N 1[h(x_i) \neq y_i],
\]
where \( 1[\cdot] \) is the indicator function. We aim to study the quantity:
\[
L^\star_D(n ; A) = \min_{z_1, \ldots, z_n \in D} L_D(A(z_1, \ldots, z_n)).
\]
Here, \( L^\star_D(n; A) \) represents the minimum possible classification loss that \( A \) can achieve on \( D \) when trained on a subdataset of \( n \) examples.

Note that because \( D \) is realizable and \( A \) is an ERM, we have \( L^\star_D(N; A) = 0 \). This is because when given the entire dataset, \( A \) outputs a consistent halfspace that correctly classifies all the examples in \( D \), resulting in a loss of~$0$.

We now state the main result for linear classification under the max-margin algorithm and for continuous ERMs.

\begin{mdframed}
\begin{theorem}[Linear Classification]\label{thm:linbinary}
Let \(A^\star\) denote the max-margin algorithm. Then,
\[
\sup_{D \in \mathtt{Real}(\mathcal{H}_d)} L^\star_D(n ; A^\star) =
\begin{cases}
0 & \text{if } n > d, \\
\frac{1}{2} & \text{if } n \leq d,
\end{cases}
\]
where $D$ ranges over all realizable datasets.
Furthermore, \(A^\star\) is optimal in the sense that for every continuous ERM \(A\) (and even for any continuous proper learner),
\[
(\forall n \leq d): \quad \sup_{D \in \mathtt{Real}(\mathcal{H}_d)} L^\star_D(n ; A) \geq \frac{1}{2}.
\]
\end{theorem}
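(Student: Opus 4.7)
The plan is to split the theorem into an upper bound for max-margin (handled in both cases $n>d$ and $n\leq d$) and a lower bound for arbitrary continuous proper learners (the case $n\leq d$), and to prove each by a separate argument.

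For the upper bound when $n>d$, I would invoke the compression property of max-margin in $\R^d$: the max-margin separator of any realizable dataset is determined by at most $d+1$ support vectors (by KKT / dual dimension counting). Since $n\geq d+1$, I select these support vectors as my subset (padding with duplicates to reach $n$; duplicates do not affect the SVM). Max-margin run on this subset reconstructs the original separator, giving loss $0$. For the upper bound when $n\leq d$, I would select $n$ copies of a single point in $D$ labeled with the majority label $y^\star\in\{\pm 1\}$ of $D$. The max-margin QP on such single-label inputs has degenerate optimum $w=0$, $b=y^\star$, producing the constant classifier $h\equiv y^\star$; its loss on $D$ equals the minority fraction, at most $1/2$.

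For the lower bound, fix a continuous proper learner $A$ and $n\leq d$; I want to construct a realizable $D$ with $L^\star_D(n;A)\geq 1/2$. For the base case $d=n=1$, I would use the 2-point family $D_T=\{(-T,-),(T,+)\}$ for small $T>0$. By continuity of $A$ on the connected realizable sets $R_{(\pm)}=\R$, the limits $A((0,\pm))$ are well-defined halfspaces whose parameters $(w_\pm,b_\pm)$ classify $0$ with sign $\mathtt{sign}(b_\pm)$. A case analysis over the four possible sign combinations shows that for sufficiently small $T$, each single-point selection from $D_T$ misclassifies one of the two points (the key observation being that continuity propagates the sign of the limit halfspace at $0$ to a neighborhood of $0$, which contains both $-T$ and $T$). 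Hence $L^\star_{D_T}(1;A)\geq 1/2$.

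For general $d$ with $n\leq d$, I would generalize via a symmetric higher-dimensional family containing antipodal pairs $\{(\pm Tv_i,\pm)\}_i$ shrinking toward the origin. Same-labeled $n$-selections are handled by the same continuity/limit argument: the limit halfspace $A((0,\ldots,0,y))$ classifies the origin with a definite sign, which by continuity propagates to misclassifying the opposite-labeled antipodes for small $T$. The main obstacle is mixed-label selections, whose inputs live in a realizable open set that excludes the degenerate limit $T=0$, and where the learner could a priori output a halfspace close to a true separator. Here I would exploit the fact that with $n\leq d$ labeled inputs, the consistent halfspaces form a family of dimension $\geq 1$ (since the normalized parameter space is $d$-dimensional and $n$ labeled points impose $n\leq d$ constraints), and apply a Borsuk--Ulam or degree-theoretic topological argument on a carefully chosen symmetric $D$ to show that no continuous assignment of halfspaces to selections can avoid misclassifying at least half of $D$. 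Verifying this topological obstruction in full generality is the most delicate part of the proof.
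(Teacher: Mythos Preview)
Your upper-bound arguments are correct and essentially match the paper's.

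The lower-bound approach has a genuine gap already in the base case $d=n=1$. Your two-point family $D_T=\{(-T,-),(T,+)\}$ does not force error $1/2$ for every continuous ERM: take $A((x,+))=(1,b_+(x))$ with $b_+(x)=-x/2$ for $x\ge 0$ and $b_+(x)=-2x$ for $x<0$ (and any continuous consistent extension on inputs labeled $-$). This is a continuous ERM with $A((0,+))=(1,0)$, and on input $(T,+)$ with $T>0$ it outputs $x'\mapsto\mathtt{sign}(x'-T/2)$, which correctly classifies both points of $D_T$; hence $L^\star_{D_T}(1;A)=0$ for all $T>0$. The failure mode is precisely the case your sketch glosses over: when the limiting bias $b_+$ vanishes, ``the sign of the limit halfspace at $0$'' is undefined, and continuity cannot propagate any definite sign to a neighborhood of $0$. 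The same obstruction undermines the higher-dimensional antipodal family for same-label selections, and the Borsuk--Ulam argument you allude to for mixed-label selections is not developed enough to evaluate; note in particular that your datasets always admit a perfect homogeneous separator, so you must rule out that a continuous learner finds it, and it is not clear any topological obstruction does this.

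The paper's lower bound takes a completely different route. Rather than a small symmetric family plus a topological obstruction, it uses the probabilistic method to build a \emph{large} dataset $D$ lying in the hyperplane $\{x_d=0\}$: the first $d-1$ coordinates of each $x_i$ are i.i.d.\ uniform in $[0,1]$, and the labels $y_i$ are i.i.d.\ uniform in $\{\pm1\}$ independent of the features. Uniform convergence for the VC class $\mathcal{H}_d$ then guarantees that \emph{every} halfspace has empirical error at least $\tfrac12-\eta$ on $D$, while general position guarantees every subset of at most $d$ points is realizable. One then perturbs to $D_\epsilon$ by shifting each $x_i$ by $y_i\epsilon e_d$ (making $D_\epsilon$ realizable) and invokes continuity of $A$ only to transfer the $\tfrac12-\eta$ bound from selections in $D$ to the corresponding selections in $D_\epsilon$. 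The key conceptual difference: the paper's dataset makes every halfspace bad, so the bound is oblivious to what the proper learner outputs; your datasets always have a good halfspace, and you are left to argue the learner cannot find it.
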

\end{mdframed}
\paragraph{Proof Sketch.}
The result that \( L^\star_D(n ; A^\star) = 0 \) for all \( n \geq d+1 \) with the max-margin algorithm is classical and can be found in~\citep*{vapnik:74}; see also Appendix A of \citet*{LongL20}.

We now give a brief overview of our proof of the second part, which shows that if \(n \leq d\), it is impossible to achieve error less than \(1/2\) with \emph{any} continuous proper learner. (Note that error \(1/2\) can be achieved by selecting from only two constant hypotheses: the constant \(+1\) and the constant~\(-1\).\footnote{These two constant hypotheses assign the same label to all inputs. On any dataset, at least one of these constant functions has error at most \(1/2\), specifically the one that corresponds to the majority label.})
To prove this, we employ the probabilistic method to construct, for any arbitrarily small \(\eta > 0\), a \((d-1)\)-dimensional dataset \( D \subseteq \mathbb{R}^d \times \{\pm1\} \) with the following properties:
\begin{enumerate}[itemsep=1pt]
    \item Every subset of \(D\) containing at most \(d\) points is realizable by \(\mathcal{H}_d\).
    \item Every halfspace \(h \in \mathcal{H}_d\) has a classification error on \(D\) of at least \(\frac{1}{2} - \eta\).
    \item Every open neighborhood of \(D\), when viewed as a point in \((\mathbb{R}^d)^{\lvert D\rvert}\), includes a realizable dataset.
\end{enumerate}

Using these properties, and leveraging the continuity of the assumed proper learner, we establish the existence of a realizable dataset in the neighborhood of \(D\) on which its error is at least nearly \(\frac{1}{2}\).

\medskip

The continuity assumption in Theorem~\ref{thm:linbinary} is necessary for this result. Figure~\ref{fig:non-continuous-erm} illustrates a natural ERM \(A\) that is not continuous and achieves \(L^\star_D(n; A) = 0\) for every \(n > d-1\). This figure demonstrates how non-continuous ERMs can exploit discontinuities in decision boundaries to achieve better performance. Exploring general ERMs that are not necessarily continuous is a direction we leave for future work.

\begin{figure}[ht]
    \centering
    \begin{tikzpicture}[scale=1]
        \begin{scope}[xshift=-4cm]
            \fill[blue!20, opacity=0.6] (-4,0) -- (-1.5,0) -- (-1.5,2) -- (-4,2) -- cycle;
            \fill[red!20, opacity=0.6] (-1.5,0) -- (4,0) -- (4,2) -- (-1.5,2) -- cycle;
            \draw[ultra thick,blue] (-4,0) -- (-1.5,0);
            \draw[ultra thick,red] (-1.5,0) -- (4,0);
            \foreach \x in {-2.3, -3, -2.7, -2} {
                \fill[blue] (\x,0) circle (3pt);
            }
            \foreach \x in {-0.5, -1, -1.5,  0, 1.1, 1.7, 2.2, 3, 1.4} {
                \fill[red] (\x,0) circle (3pt);
            }
            \draw[black] (-1.5, 0) circle (4pt);
        \end{scope}

        \begin{scope}[xshift=4cm]
            \fill[blue!20, opacity=0.6] (-3,-2) -- (3,2) -- (3,3) -- (-3,3) -- cycle;
            \fill[red!20, opacity=0.6] (-3,-2) -- (3,2) -- (3,-2) -- (-3,-2) -- cycle;
            \draw[very thick, blue] (-3,-2) -- (0,0);
            \draw[very thick, red] (0,0) -- (3,2);
            \fill[blue] (-1.5,-1) circle (3pt);
            \fill[red] (0, 0) circle (3pt);
            \draw[black] (-1.5, -1) circle (4pt);
            \draw[black] (0, 0) circle (4pt);
            \foreach \coord in {(1,2), (2,2), (2.5,2.5), (-2,0)} {
                \fill[blue] \coord circle (3pt);
            }
            \foreach \coord in {(2,-1.5), (-1.5,-1.5), (0,-0.5), (-1,-1.5), (2.3, 1)} {
                \fill[red] \coord circle (3pt);
            }
        \end{scope}
    \end{tikzpicture}
    \caption{An example of a non-continuous ERM $A$ satisfying \(L^\star_D(n=d; A) = 0\), illustrated for dimensions \(d = 1\) and \(d = 2\). The separating hyperplane is chosen so that it is the affine span of \(d\) input points; the order of these points encodes which of the two halfspaces is labeled `\(+\)' and which is labeled `\(-\)'. If the \(d\) points on the hyperplane include both `\(+\)' and `\(-\) labels, the hyperplane is recursively labeled by dividing it into two halves, assigning half of it \(+\) and the other half \(-\), in a consistent manner. Note that the resulting classifier forms a generalized half-space where both halves are convex sets. However, it need not be either open or closed. See the 2D example (right picture) for an illustration.}
    \label{fig:non-continuous-erm}
\end{figure}
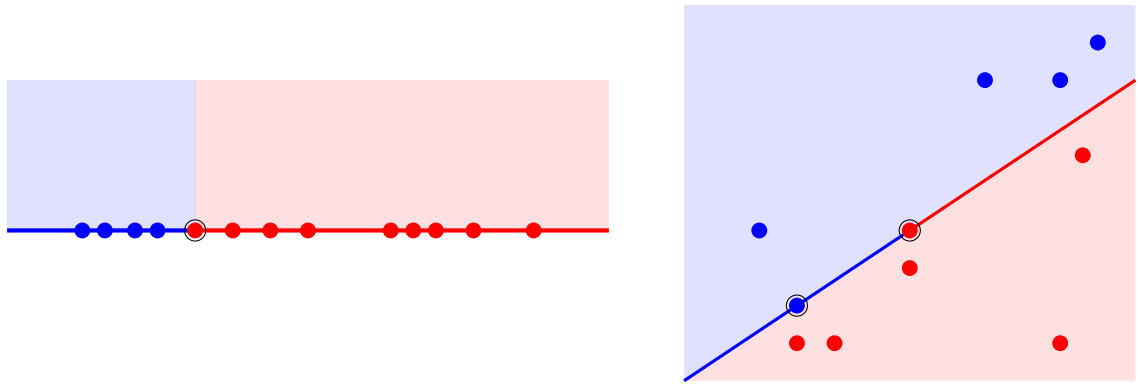

\vspace{-30pt}\subsubsection{General Classes}\label{sec:genbin}

We now shift our attention to ERMs over general hypothesis classes \(\mathcal{H} \subseteq \{\pm 1\}^X\), where \(\mathcal{H}\) denotes the set of candidate classifiers from which the ERM selects one that minimizes the loss. Since we now consider general classes, we also allow for arbitrary ERMs and arbitrary datasets.

Let \(A\) be an ERM over \(\mathcal{H}\), and let \(D\) be a dataset. We define the \emph{minimum regret} achievable by~\(A\) when trained on a subdataset of \(n\) points from \(D\) as:
\[
\mathtt{R}^\star_{\mathcal{H}}(n; A, {D}) = \min_{z_1, \ldots, z_n \in D} \left[L_D(A(z_1, \ldots, z_n)) - \min_{h \in \mathcal{H}} L_D(h)\right],
\]
where the regret is measured relative to the best hypothesis in \(\mathcal{H}\). By definition, this quantity satisfies \(0 \leq \mathtt{R}^\star_{\mathcal{H}}(n; A,  {D}) \leq 1\), since \(A\) is an ERM over \(\mathcal{H}\).


To address our overarching question in this general setting, where there is no universally preferred ERM for general hypothesis classes, we focus on the following quantity, which captures the worst-case regret achievable for arbitrary ERMs over \(\mathcal{H}\):
\[
\mathtt{R}_{\mathcal{H}}^\star(n) = \sup_{A, D} \mathtt{R}^\star_{\mathcal{H}}(n; A),
\]
where the supremum is taken over all ERMs \(A\) over \(\mathcal{H}\) and all (finite) datasets \(D\).\footnote{It is also natural to study the quantity \(\inf_{A} \sup_{D} \mathtt{R}^\star_{\mathcal{H}}(n; A)\), which focuses on the best possible ERM. This variant relates to proper sample compression schemes, a long-standing open problem for general VC classes (see Section~\ref{sec:related} for further discussion).}
The following theorem characterizes the behavior of \(\mathtt{R}_{\mathcal{H}}^\star(n)\) by classifying all hypothesis classes into one of three distinct regimes as \(n\) grows:
\[
\mathtt{R}_{\mathcal{H}}^\star(n)\in \Bigl\{0, \tilde\Theta\left(\frac{1}{n}\right), 1\Bigr\},
\]
for all sufficiently large \(n\).

\begin{mdframed}
\begin{theorem}[Binary Classification]\label{thm:binary}
Every hypothesis class \(\mathcal{H} \subseteq \{\pm 1\}^X\) satisfies exactly one of the following:
\begin{enumerate}
    \item \(\mathtt{R}_{\mathcal{H}}^\star(n) = 1\), for all \(n \in \mathbb{N}\). \hfill (Trivial Rate)
    \item \(\frac{C_1}{n} \leq \mathtt{R}_{\mathcal{H}}^\star(n) \leq  \frac{C_2 \cdot\log n}{n}\), for all \(n \in \mathbb{N}\). Here \( C_1=C_1(\H),C_2=C_2(\H)\) are positive constants that depend on \(\mathcal{H}\) (but not on $n$). \hfill (Linear Rate)
    \item \(\mathtt{R}_{\mathcal{H}}^\star(n) = 0\), for all sufficiently large \(n \geq n_0(\mathcal{H})\). \hfill (Zero Rate)
\end{enumerate}
\end{theorem}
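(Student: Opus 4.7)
My plan is to split the analysis along two dichotomies: finiteness of the VC dimension separates the trivial rate from the other two regimes, and an additional combinatorial property $(\ast)$ separates zero rate from linear rate. First I would show $\mathtt{R}_\mathcal{H}^\star(n) = 1$ for all $n$ iff $\vc(\mathcal{H}) = \infty$. If $\vc(\mathcal{H}) = \infty$, fix $n$ and pick a shattered set $T$ of size $m \gg n$; form $D$ by labeling $T$ with some $h_0 \in \mathcal{H}$, so $\min_h L_D(h) = 0$. For every subset $S \subseteq D$ of size $n$, shatterability yields an $h_S \in \mathcal{H}$ agreeing with $h_0$ on the $x$-coordinates of $S$ and disagreeing on all of $T \setminus S$. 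An adversarial ERM outputs $h_S$; this is a valid ERM (zero training error) yet satisfies $L_D(h_S) = (m-n)/m \to 1$, forcing $\mathtt{R}_\mathcal{H}^\star(n) = 1$. Conversely, if $\vc(\mathcal{H}) = d < \infty$, agnostic uniform convergence ensures that on any $D$ a uniformly random $n$-subset makes ERM regret at most $O(\sqrt{d/n})$ with positive probability, so $\mathtt{R}_\mathcal{H}^\star(n) < 1$ for $n$ sufficiently large.

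Next, under $\vc(\mathcal{H}) = d < \infty$, I would prove the upper bound $\mathtt{R}_\mathcal{H}^\star(n) \leq O(d \log n / n)$ by reducing to the realizable case. Given $D$ and an ERM $A$, let $h^\star \in \arg\min_h L_D(h)$ and set $D' = \{(x,y) \in D : h^\star(x) = y\}$; then $D'$ is realizable by $h^\star$. The classical $\epsilon$-net theorem for VC classes produces an $n$-subset $S \subseteq D'$ such that every $h \in \mathcal{H}$ consistent with $S$ disagrees with $h^\star$ on at most an $\epsilon = O(d \log n / n)$ fraction of $D'$. Since $A(S)$ must be consistent with $S$ (as $S \subseteq D'$ is realizable), the decomposition $L_D(A(S)) \leq L_{D'}(A(S)) + L_D(h^\star)$ yields regret $O(d \log n / n)$.

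The core step is distinguishing zero rate from linear rate within the finite-VC regime. I would show this reduces to a combinatorial property $(\ast)$: there exists $n_0$ such that for every finite $D$ and every ERM $A$ there is some $n_0$-subset $S \subseteq D$ with $A(S) \in \arg\min_h L_D(h)$. The direction $(\ast) \Rightarrow$ zero rate is immediate. For the converse, note that failure of $(\ast)$ at level $n_0$ is quantitative: since on any dataset $D$ all positive regrets are integer multiples of $1/|D|$, a hard $D_{n_0}$ witnessing failure supplies a lower bound of at least $1/|D_{n_0}|$ on every $n_0$-subset's regret. The plan is then to amplify, by building a single large dataset consisting of many disjoint domain-translated copies of $D_{n_0}$ together with a copy-wise product ERM; any $n$-subset of this combined dataset can affect at most $n$ copies, leaving a positive fraction ``unrepaired'' and forcing $\mathtt{R}_\mathcal{H}^\star(n) \geq \Omega(1/n)$.

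The main obstacle is the amplification step: embedding many copies of the hard configuration into a single dataset in a way that respects $\mathcal{H}$ and produces a genuine ERM of $\mathcal{H}$ on the union. This requires identifying a minimal combinatorial witness to the failure of $(\ast)$ that supports a product/embedding structure inside the class. If a general embedding turns out unavailable, an alternative is to construct, for each large $n$, a bespoke adversarial dataset directly, in the spirit of the construction used in Theorem~\ref{thm:linbinary}, that exhibits $\Omega(1/n)$ regret without going through an explicit product step.
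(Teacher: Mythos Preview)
Your treatment of the trivial rate (infinite VC) and of the upper bound \(O(d\log n/n)\) under finite VC via \(\varepsilon\)-nets is correct and essentially matches the paper.

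The genuine gap is in separating zero rate from linear rate. Your property \((\ast)\) is not a combinatorial characterization at all: it is literally the statement \(\mathtt{R}_\mathcal{H}^\star(n_0)=0\), so ``\((\ast)\Rightarrow\) zero rate'' carries no content, and ``not \((\ast)\Rightarrow\Omega(1/n)\)'' is the entire difficulty. Your amplification plan---embedding disjoint domain-translated copies of a hard configuration and taking a product ERM---fails for general \(\mathcal{H}\): an arbitrary class \(\mathcal{H}\subseteq\{\pm1\}^X\) need not admit any translation or product structure, so there is no mechanism to place independent copies inside the class or to define a legitimate ERM of \(\mathcal{H}\) that acts coordinatewise. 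You flag this obstacle yourself, but the proposed fallback (``a bespoke adversarial dataset for each \(n\)'') is exactly what remains to be found.

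The paper's resolution is to replace \((\ast)\) by a concrete combinatorial parameter, the \emph{star number} \(\mathfrak{s}_\mathcal{H}\) (the largest \(m\) such that some realizable \(m\)-point dataset has, for each of its points, a hypothesis in \(\mathcal{H}\) disagreeing with the center labeling on exactly that point). When \(\mathfrak{s}_\mathcal{H}=\infty\), the star configuration of size \(n+1\) is itself the bespoke dataset: an adversarial ERM on any \(n\)-subset can output a spoke hypothesis and incur regret \(\geq 1/(n+1)\). No amplification is needed because the star number being infinite already furnishes hard instances of every size. Conversely, when \(\mathfrak{s}_\mathcal{H}<\infty\), the family \(\operatorname{err}(\mathcal{H}\mid D')\) has star number at most \(\mathfrak{s}_\mathcal{H}\), and a result of Hanneke--Yang gives \(\varepsilon\)-nets of size \(\mathfrak{s}_\mathcal{H}\) for \emph{every} \(\varepsilon>0\), yielding zero rate for \(n\geq\mathfrak{s}_\mathcal{H}\). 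This is the missing idea: the right dichotomy inside finite VC is finiteness of the star number, not the tautological property \((\ast)\).
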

\end{mdframed}

Each item in the taxonomy is associated with a combinatorial characterization, relating it to classical notions in learning theory:
\begin{itemize}
    \item \textbf{Item 1 (Trivial Rate):} Hypothesis classes \(\mathcal{H}\) with unbounded VC dimension (i.e., not PAC learnable).
    \item \textbf{Item 2 (Linear Rate):} Hypothesis classes \(\mathcal{H}\) with finite VC dimension but unbounded star number (i.e., PAC learnable but not actively PAC learnable, as characterized in \cite{hanneke:15b}).
    \item \textbf{Item 3 (Zero Rate):} Hypothesis classes \(\mathcal{H}\) with finite star number (i.e., actively PAC learnable). 
\end{itemize}

See Section~\ref{sec:proofsbinary} for more details and the full proof.

The results in Theorem~\ref{thm:binary} show that selecting the best examples allows for significantly faster rates compared to those achieved in PAC learning, which relies on random examples. For hypothesis classes with finite VC dimension, the PAC learning rate decreases as \(1 / \sqrt{n}\), where \(n\) is the number of training examples. In contrast, data selection achieves much faster rates: linear in Item 2 and zero after a certain point in Item 3.\footnote{In PAC learning, for hypothesis classes with unbounded VC dimension, the error rate is \(1/2\), paralleling Item 1 in the theorem.}

\vspace{-3pt}\paragraph{Proof Sketch.}
The behavior of \(\mathtt{R}_{\mathcal{H}}^\star(n)\) is closely connected to the concept of \(\varepsilon\)-nets, a well-studied notion in combinatorics and geometry. For a family of sets \(\mathcal{F}\) over a domain \(X\) and a distribution \(P\) over \(X\), an \(\varepsilon\)-net is a subdset \(N \subseteq X\) such that \(N \cap F \neq \emptyset\) for every \(F \in \mathcal{F}\) with \(P(F) \geq \varepsilon\).

We show that \(\mathtt{R}_{\mathcal{H}}^\star(n)\) is essentially determined by the minimum possible size of \(\varepsilon\)-nets for a family of sets \(\mathcal{F} = \mathcal{F}(\mathcal{H})\) corresponding to \(\mathcal{H}\). This connection, together with known results on \(\varepsilon\)-nets, establishes the different cases in Theorem~\ref{thm:binary}.

\vspace{-3pt}\paragraph{Theorem~\ref{thm:linbinary} vs.\ Theorem~\ref{thm:binary}.}
Theorem~\ref{thm:linbinary} applies to the class of  $d$-dimensional half-spaces, which belongs to the second category (finite VC dimension, infinite star number) whenever $d \geq 2$. However, its result aligns with the last category, exhibiting a zero-rate behavior. This discrepancy arises because Theorem~\ref{thm:linbinary} considers a specific ERM—the max-margin classifier—while Theorem~\ref{thm:binary} assumes an arbitrary (worst-case) ERM. This distinction highlights how natural ERMs for structured hypothesis classes can lead to significantly better performance in data selection.
\medskip

One natural question that remains open is to refine the rates in Item 2. Notice that there is a logarithmic gap between the upper and lower bounds in this case. Using results from the theory of \(\varepsilon\)-nets, it can be shown that both bounds are tight in the sense that there exist classes for which the upper bound is tight and others for which the lower bound is tight. It remains an open problem to provide a full taxonomy of all possible asymptotic rates of \(\mathtt{R}_{\mathcal{H}}^\star(n)\) between \(1/n\) and \(\log (n) / n\). This is essentially equivalent to providing a corresponding taxonomy for the sizes of \(\varepsilon\)-nets.

\vspace{-3pt}\section{Stochastic Convex Optimization}\label{sec:mainreg}

We now turn to studying data selection for regression problems. This section presents two results that parallel our findings on classification in the previous section. First, in Section~\ref{sec:linreg}, we consider the setting of linear regression. Then, in Section~\ref{sec:sco}, we present a general result in the broader framework of stochastic convex optimization (SCO).

\vspace{-3pt}\paragraph{Stochastic Convex Optimization.}
Stochastic convex optimization (SCO) is a special case of the general learning setting introduced by \citet*{Vapnik1998}, where the loss functions are convex~\citep*{Shalev-ShwartzSSS09}. An SCO problem is defined by a convex hypothesis (or parameter) space \( \mathcal{W} \subseteq \mathbb{R}^d \) and an abstract set of examples \( \mathcal{Z} \), where each \( z \in \mathcal{Z} \) is associated with a convex loss function \( \ell_z : \mathcal{W} \to \mathbb{R} \). A learning problem is specified by a distribution \( D \) over \( \mathcal{Z} \), and the goal is, given a finite sample \( z_1, \ldots, z_n \), to compute a hypothesis \( w \in \mathcal{W} \) whose population loss \( L_D(w) = \mathbb{E}_{z \sim D}[\ell_z(w)] \) is nearly optimal, i.e., close to \( \inf_{w \in \mathcal{W}} L_D(w) \).

\vspace{-9pt}\begin{example}[Linear Regression]\label{ex:linreg}
Linear regression provides a simple example of an SCO problem: the parameter space is \( \mathcal{W} = \mathbb{R}^d \), the example space is \( \mathbb{R}^d \times \mathbb{R} \), and each example \( z = (x, y) \), where \( x \in \mathbb{R}^d \) and \( y \in \mathbb{R} \), is associated with the squared loss \( \ell_z(w) = (w \cdot x - y)^2 \), where “\(\cdot\)” denotes the standard dot product in \( \mathbb{R}^d \).    
\end{example}
\vspace{-4mm}
More generally, SCO can model a wide range of supervised learning problems with convex loss functions.

\vspace{-3pt}\paragraph{Weighted Data Selection.}
We consider a fractional (or convex) relaxation of the data selection problem, where the selector is allowed to choose convex combinations of the losses associated with the selected examples. Such weighted relaxations are commonly used in the context of \emph{coresets}, as further discussed below.
Specifically, given a dataset \( D = \{z_i\}_{i=1}^N \) representing the population, the goal is to select a dataset $D'$ of \( n \ll N \) examples \( z_{i_1}, \ldots, z_{i_n} \in D \) along with non-negative coefficients \( \alpha_1, \ldots, \alpha_n \), such that applying the ERM to the weighted loss \(L_{D'}(w) = \sum_{j=1}^n \alpha_j \ell_{z_{i_j}}(w)\)
produces an hypothesis \( w_{D'} \in \mathcal{W} \) whose loss {\(L_D(w_{D'}) = \frac{1}{N} \sum_{i=1}^N \ell_{z_i}(w)_{D'}\)}
is nearly optimal, i.e., close to \( \inf_{w \in \mathcal{W}} L_D(w) \).

To formalize this, we define the following:

\vspace{-3pt}\begin{definition}
Consider an SCO problem with parameter space \( \mathcal{W} \) and an example space \( \mathcal{Z} \). Let \( D \) be a (finite) dataset, and let \( A \) be a learning rule. For every \( n \in \mathbb{N} \), define:
\[
L^\star_D(n ; A) = \inf_{\substack{z_1, \ldots, z_n \in D, \\ F \in \mathtt{conv}(\ell_{z_1}, \ldots, \ell_{z_n})}} L_D(A(F)).
\]
\end{definition}
Weighted selection of data, as considered here, is a well-established approach in closely related work, particularly in the study of \emph{coresets}~\citep*{bachem2017,Lucic18,feldman2020,Feldman20}. Coresets are small, weighted subsets of a dataset that preserve key properties of the full data, often enabling efficient optimization and learning while maintaining theoretical guarantees. Many coreset techniques achieve this by assigning weights to selected examples, ensuring that the loss landscape of the subdataset closely approximates that of the entire dataset—an approach closely related to our formulation.  

Moreover, this weighted relaxation of data selection aligns well with ERMs for convex optimization, as ERMs naturally operate over weighted objective functions. In fact, the weighted optimization problem induced by this relaxation is of the same type as that which ERM algorithms are designed to solve. The weighted and unweighted optimization problems (corresponding to the original data selection problem) also share similar properties, such as smoothness and Lipschitz continuity.


\vspace{-3pt}\section{Linear Regression}\label{sec:linreg}

We now present our results for linear regression (Example~\ref{ex:linreg}). In linear regression, there is a unique parameter that minimizes the loss whenever the training dataset \( S = \{z_i\}_{i=1}^n \) is full-dimensional, that is, it contains \( d \) examples \( z_{i_j} = (x_{i_j}, y_{i_j}) \) such that the \( x_{i_j} \) form a basis of \( \mathbb{R}^d \). Thus, all ERMs agree on full-dimensional training datasets. When the training dataset is not full-dimensional, there is a linear subspace  of minimizers of nonzero dimension, and the ERM rule needs to break ties. In such cases, we focus on ERMs that break ties continuously, such as the learning rule that outputs the hypothesis with minimal Euclidean norm among all empirical risk minimizers. We refer to this learning rule as \emph{min-norm ERM}.

What does it formally mean for an ERM to break ties continuously? A natural definition is that the map that takes as input \((z_1, \ldots, z_n) \in (\mathbb{R}^d \times \mathbb{R})^n\) and outputs the parameters of \(A((x_i, y_i)_{i=1}^n)\) should itself be continuous. However, as we shall see in the proof of Theorem~\ref{thm:linreg}, no ERM satisfies this stringent requirement. To address this, we define continuity in a weaker sense.

\vspace{-6pt}\begin{definition}\label{def:ctserm}
Let \(n \leq d\), and let \(E_n \subseteq (\mathbb{R}^d \times \mathbb{R})^n\) denote the set of all datasets \( (x_1, y_1), \ldots, (x_n, y_n) \) such that the \( x_i \)'s are linearly independent. We say that a learning rule \(A\) is \emph{weakly-continuous} if, for every \(n \leq d\), the map that takes as input \( ((x_1, y_1), \ldots, (x_n, y_n)) \in E_n \) and outputs the parameters of \(A((x_i, y_i)_{i=1}^n)\), is continuous on \(E_n\).
\end{definition}
\vspace{-6pt}
This definition requires tie-breaking to be continuous only on datasets where the \( x_i \)'s are linearly independent. For example, the min-norm ERM satisfies this definition. Similarly, any learning rule that selects an ERM by minimizing a continuous regularization function satisfied the definition.

Although this definition does not fully formalize the intuitive requirement of continuous tie-breaking - because it only requires tie-breaking on linearly independent datasets - it ensures that any continuous tie-breaking rule satisfies the definition. There are, however, tie-breaking rules that are not continuous, but still satisfy this weaker definition. 
We note that working with the weaker Definition~\ref{def:ctserm} strengthens our theorem, as it broadens the scope of our impossibility result to include any ERM that satisfies this more general and less restrictive definition.

We now state the main result for linear regression.


\begin{mdframed}
\begin{theorem}[Linear Regression]\label{thm:linreg}
Let \(A^\star\) denote the min-norm ERM. Then,
\[
\sup_{D} \frac{L^\star_D(n ; A^\star)}{L^\star_D} =
\begin{cases}
1 & \text{if } n {\geq 2d}, \\
d+1 & \text{if } n = d, \\
\infty & \text{if } n < d,
\end{cases}
\]
where \(D\) ranges over all finite datasets and \( L^\star_D = \inf_{w \in \mathcal{W}} L_D(w) \) denotes the optimal loss. Above, the ratio \(\frac{L^\star_D(n ; A^\star)}{L^\star_D}\) is defined to be \(1\) when both the numerator and denominator are \(0\). If only the denominator is \(0\), the ratio is defined as \(\infty\).

Furthermore, the lower bound in the case $n<d$ holds for every weakly continuous ERM.
\end{theorem}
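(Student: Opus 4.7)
I handle the three regimes separately, using the identity $L_D(w) - L_D^\star = (w-w^\star)^\top \bar M(w-w^\star)$ with $\bar M = \frac{1}{N}\sum_i x_i x_i^\top$. Up to restricting to $\operatorname{span}\{x_i\}$, I may assume this span equals $\R^d$, so $L_D$ has a unique minimizer $w^\star$ and the residuals $r_i := y_i - w^\star \cdot x_i$ satisfy $\sum_i r_i x_i = 0$.

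\textbf{Case $n \geq 2d$.} The plan is a two-step Carathéodory argument. First, pick $d$ linearly independent vectors $x_{i_1}, \ldots, x_{i_d}$ from $D$; this guarantees $w^\star \in \operatorname{span}\{x_{i_k}\}$, so the min-norm ERM of any full-rank weighted loss supported on a superset of these indices coincides with the usual minimizer. Second, since $-\sum_{k=1}^{d} r_{i_k} x_{i_k} = \sum_{i \notin \{i_k\}} r_i x_i$ lies in the conic hull of the leftover residual-weighted vectors, Carathéodory's theorem for cones in $\R^d$ yields at most $d$ further indices $j_1, \ldots, j_d$ and coefficients $\beta_\ell \geq 0$ with $-\sum_{k} r_{i_k} x_{i_k} = \sum_{\ell} \beta_\ell r_{j_\ell} x_{j_\ell}$. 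The weighted loss $F$ combining these $\leq 2d$ examples (unit weights on the $i_k$'s, weights $\beta_\ell$ on the $j_\ell$'s, then normalized) has $w^\star$ as its unique minimizer and a full-rank Hessian, so $A^\star(F) = w^\star$ and the ratio equals $1$.

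\textbf{Case $n = d$.} For the lower bound I use the regular-simplex dataset: $(d+1)$ unit vectors $x_i \in \R^d$ with $\sum_i x_i = 0$ and $x_i \cdot x_j = -1/d$ for $i \neq j$, all labeled $y_i = 1$. Then $w^\star = 0$, $L_D^\star = 1$, and for any $d$-subset the interpolating solution is $w_F = -d \cdot x_j$ (where $x_j$ is the omitted vertex), giving $L_D(w_F) = d+1$. The matching upper bound follows from a weighted-selection argument using $\sum_i r_i x_i = 0$ to cancel residuals across $d$ carefully chosen examples, with a Cauchy--Schwarz bound on the resulting quadratic form; details are deferred to the appendix.

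\textbf{Case $n < d$.} Given a weakly-continuous ERM $A$, I construct a realizable dataset $D$ (so $L_D^\star = 0$) such that every $n$-subset with every weighting yields $A(F) \neq w^\star$, forcing infinite ratio. The structural point: for linearly-independent $x_j$'s and $n < d$, the ERM set of any weighted loss $F$ is $w^\star + \operatorname{span}\{x_j\}^\perp$, an affine subspace of dimension $d - n \geq 1$, so $A(F) - w^\star$ lies in the nontrivial complement. The continuous image $I(\bar x) := \{A(F_{\bar x, \alpha, w^\star}) - w^\star : \alpha \in \Delta_n\}$ is thus a compact connected subset of this $(d-n)$-dimensional space. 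The plan is a transversality/continuity argument: for generic $w^\star$ (modulo at most one ``bad'' $w^\star$ determined by $A$), iteratively select spanning points $x_1, \ldots, x_m \in \R^d$ so that $0 \notin I(\bar x)$ for every $n$-subset $\bar x$ of the resulting dataset; by openness, this is feasible. The hard part will be controlling the fact that $I(\bar x)$ could, in principle, traverse the ERM set as $\alpha$ ranges over $\Delta_n$; this is handled by the nontriviality of $\operatorname{span}\{x_j\}^\perp$ (guaranteed by $n < d$) together with a compactness argument that compares distinct candidate minimizers via their intersection slab $(w_1 - w_2)^\perp$, which contains $n$ linearly-independent vectors precisely because $n < d$.
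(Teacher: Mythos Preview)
Your Case~1 argument is correct and pleasantly more elementary than the paper's: the paper invokes Steinitz's theorem to find $2d$ gradients whose convex hull contains $0$ in its interior, whereas you first fix $d$ linearly independent feature vectors (forcing full rank of the Hessian) and then apply conic Carath\'eodory to the residual equation $\sum_i r_i x_i = 0$ to pick at most $d$ further points. Both yield the same $2d$ bound; yours avoids the separate case analysis the paper needs when the gradients fail to span $\R^d$.

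However, your Case~2 lower bound is wrong. The regular-simplex dataset does \emph{not} force ratio $d+1$, because the infimum in $L_D^\star(d;A^\star)$ ranges over all $d$-tuples $z_1,\ldots,z_d\in D$ \emph{with repetition}. If the selector picks the single example $(x_1,1)$ repeated $d$ times, the min-norm ERM returns $w=x_1$, and then $w\cdot x_j=-1/d$ for $j\neq 1$, so
\[
L_D(w)=\frac{1}{d+1}\Bigl[0+d\bigl(-\tfrac{1}{d}-1\bigr)^2\Bigr]=\frac{d+1}{d},
\]
which for $d\ge 2$ is far below $d+1$. A correct lower-bound construction must be designed so that \emph{degenerate} selections (fewer than $d$ distinct points) are also bad; the paper achieves this by taking labels $y_i=ic$ with $c>\sqrt{2}(d+1)$, so that any selection missing two basis directions already incurs loss $\ge c^2/2\ge (d+1)^2$ on some excluded point.

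Your Case~2 upper bound is not a proof: ``a Cauchy--Schwarz bound on the resulting quadratic form; details deferred'' does not indicate how to get the sharp constant $d+1$. The paper does not prove this from scratch either; it invokes the volume-sampling identity of Derezi\'nski and Warmuth (2017), which states that for $S$ drawn proportionally to $\det(X_S)^2$, one has $\mathbb{E}[L(w^\star(S))]=(d+1)L(w^\star)$ exactly. Unless you have a genuinely new argument here, you should cite that result.

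Your Case~3 sketch is too vague to assess. The paper's route is concrete: it reuses the random $(d-1)$-dimensional dataset from the linear-classification lower bound (so every halfspace has error $\ge\tfrac12-\eta$, hence every linear regressor has squared loss $\ge\tfrac12-\eta$), then perturbs along $e_d$ to make it exactly realizable while preserving, by weak continuity, the large loss of every $n$-point selection. Your abstract transversality plan may be salvageable, but as written it does not explain why the compact image $I(\bar x)$ must miss $0$ for some choice of data, nor how the ``intersection slab'' argument actually concludes.
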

\end{mdframed}


This result differs from Theorem~\ref{t:mean} in the Warm-Up section, where we analyzed mean estimation under unweighted data selection and showed that achieving optimal performance with a finite selection budget was impossible. Here, we demonstrate that if weighted data selection is allowed, then optimal performance can be attained using a selection budget of only \( 2d \) examples. As in the warm-up example, we again consider multiplicative approximation guarantees rather than additive ones, as they are scale-invariant and mathematically cleaner for unbounded losses. However, it would be interesting to explore the analogous question for additive regret (when the parameter space \(\mathcal{W}\) and the example space \(\mathcal{Z}\) are compact); we discuss this further in the future work section.

Theorem~\ref{thm:linreg} parallels Theorem~\ref{thm:linbinary}, which handled continuous ERMs for linear classification, with a curious difference in the case when \(n = d\): in regression, the case of \(n = d\) is distinct in that a non-trivial multiplicative approximation guarantee of \(d+1\) is achievable, unlike classification where no non-trivial guarantee exists.

\vspace{-6pt}\paragraph{Proof Sketch.}
The first item is similar to our variant of Carath\'eodory’s Theorem (Proposition~\ref{prop:car}). However, it does not follow directly since the loss functions \(\ell_z\) in linear regression are not strictly convex. To circumvent this, we rely on a related theorem by Steinitz~\citep[p.8]{matousek}, originally from \citet{Steinitz1916}, which states that given a \( d \)-interior point of the convex hull of \( n \) points, there exists a subset with at most \( 2d \) points that also has this point as a \( d \)-interior point. 
A further subtlety arises when some of the individual loss gradients \( \nabla \ell_z \) vanish at the min-norm solution. This case requires a different handling, and while more intricate, it is still possible to match the optimal performance with a selection budget of at most \( d + d' \leq 2d \), where \( d' \) denotes the dimension of the subspace spanned by the nonzero gradients.

Item 2 follows from a probabilistic argument based on determinantal point processes by \citet{Derezinski17}. Item 3 follows from a similar approach to that used to establish the parallel statement in Theorem~\ref{thm:linbinary}, adapted to linear regression and the squared loss.


The above result provides a near-complete taxonomy of the multiplicative approximation guarantees for the min-norm ERM, with the case \( d < n < 2d \) remaining open. The result only establishes that the approximation ratio in this range is \( O(d) \), and it would be interesting to determine exact bounds in this regime. Notably, Example~\ref{ex:linreg:n_geq_2d} demonstrates that for any \( n < 2d \), the approximation ratio exceeds 1, confirming that perfect approximation is unattainable in this range.

\begin{example}\label{ex:linreg:n_geq_2d}
    To illustrate the Necessity of  \(n \geq 2d\) in Theorem~\ref{thm:linreg}, we construct an example demonstrating that when \( n < 2d \), the loss \( L^\star_D(n; A^\star) \) can exceed the optimal loss \( L^\star_D \). Let us construct the dataset \( D \) as follows:
\[
D = \left\{ (e_i, 2) \right\}_{i=1}^d \cup \left\{ (-e_i, 1) \right\}_{i=1}^d,
\]
where \( e_i \) denotes the standard basis vectors in \( \mathbb{R}^d \). 
Since the feature vectors are the standard basis, the squared loss minimization problem decomposes to $d$ independent problems, one for each coordinate. The optimal solution is \(w^\star = \left( \frac{1}{2}, \frac{1}{2}, \ldots, \frac{1}{2} \right) \in \mathbb{R}^d\), and its loss is \(\frac{9}{4}\).
Now, consider selecting any subset \( D' \subset D \) with \( n < 2d \) examples, and let $w'$ denote the optimal solution w.r.t $D'$. Since \( D' \) contains fewer than \(2d\) points out of \(2d\), there exists at least one coordinate \( i \in \{1, 2, \ldots, d\} \) such that \( D' \) does not include both points \( (e_i, 2) \) and \( (-e_i, 1) \). (i) If \( D' \) includes \( (e_i, 2) \) but excludes \( (-e_i, 1) \) then \(w'(i) = 2\). Thus, its individual loss on the excluded point \( (-e_i, 1) \) is  \(\left( -w'(i) - 1 \right)^2 = (-2 - 1)^2 = 9\) while on \((e_i, 2)\) individual loss is \(0\), and hence $L_D(w') \geq L_D(w^\star) + \frac{9 + 0-9/2}{d}>L_D(w^\star)$. The case when \( D' \) includes \( (-e_i, 1) \) but excludes \( (e_i, 2) \)) is analyzed similarly.
(ii) If \( D' \) excludes both \( (e_i, 2) \) and \( (-e_i, 1) \) then $L_{D'}(w)$ does not depend on the $i$'th coordinate of $w$, hence, by the min-norm property \(w'(i) = 0\) and its individual losses on the excluded points \( (e_i, 2) \) and \( (-e_i, 1) \) are respectively $(0-2)^2=4$ and $(0-1)^2=1$. Thus,
$L_D(w') \geq L_D(w^\star) + \frac{5/2 - 9/4}{d} > L_D(w^\star)$.
\end{example}



\vspace{-9pt}\subsection{A General Result for Strictly Convex Losses}\label{sec:sco}
\vspace{-3pt}
We now present a general result for stochastic convex optimization (SCO) problems.
\vspace{-3pt}
\begin{mdframed}
\begin{theorem}[Stochastic Convex Optimization]\label{thm:sco}
Consider a SCO problem with a parameter space \(\mathcal{W} \subseteq \mathbb{R}^d\) that is closed and convex. Assume further that each loss function \(\ell_z\) for \(z \in \mathcal{Z}\) is strictly convex. Then, for any ERM \(A\), every dataset \(D\), and any \(n > d\):
\[
L^\star_D(n ; A) = L^\star_D.
\]
{Moreover, the strict convexity and \(n > d\) assumptions are necessary, as illustrated by Examples~\ref{ex:strict_convexity} and \ref{ex:requirement_n_greater_d}) in Section~\ref{app:examples}.}
\end{theorem}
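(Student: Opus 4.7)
The plan is to reduce Theorem~\ref{thm:sco} directly to Proposition~\ref{prop:car}. Writing $D = \{z_1, \ldots, z_N\}$, the population loss
\[
L_D(w) = \frac{1}{N}\sum_{i=1}^N \ell_{z_i}(w)
\]
is itself an element of $\conv(\ell_{z_1}, \ldots, \ell_{z_N})$, namely the uniform combination. Each $\ell_{z_i}$ is strictly convex by hypothesis, so I can invoke Proposition~\ref{prop:car} with $K = \mathcal{W}$, $f_i = \ell_{z_i}$, and $g = L_D$ to obtain indices $i_1 \leq \cdots \leq i_{d+1}$ and a function $F \in \conv(\ell_{z_{i_1}}, \ldots, \ell_{z_{i_{d+1}}})$ satisfying $\arg\min_{w \in \mathcal{W}} F(w) = \arg\min_{w \in \mathcal{W}} L_D(w)$.

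Next I use strict convexity twice. First, $F$ is a convex combination of strictly convex functions and is therefore itself strictly convex, so $\arg\min_{\mathcal{W}} F$ is either empty or a singleton; the same holds for $L_D$. Since the definition of an ERM presupposes that the empirical infimum is attained, both sets are nonempty and so both equal $\{w^\star\}$ for a common minimizer $w^\star$. Second, because $F$ has a unique minimizer, every ERM applied to $F$ -- regardless of any tie-breaking convention -- must return precisely this point, i.e., $A(F) = w^\star$. Consequently $L_D(A(F)) = L_D(w^\star) = L^\star_D$.

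To finish, I need a selection of exactly $n$ examples rather than $d+1$. Since $n > d$, I pad the list $z_{i_1}, \ldots, z_{i_{d+1}}$ with $n - (d+1) \geq 0$ arbitrary additional points from $D$, assigning them coefficient $0$ in the convex combination defining $F$. The resulting function is unchanged, so $L_D(A(F)) = L^\star_D$, which gives $L^\star_D(n;A) \leq L^\star_D$; the reverse inequality is immediate since any $A(F) \in \mathcal{W}$ satisfies $L_D(A(F)) \geq L^\star_D$.

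Given Proposition~\ref{prop:car}, the argument is essentially a one-line corollary, so I do not anticipate a serious obstacle. The only items that require a moment of care are (i)~verifying that strict convexity propagates from the $\ell_{z_i}$ to any convex combination, so that each candidate $F$ produced along the sparsification in Proposition~\ref{prop:car} still has a unique minimizer on $\mathcal{W}$, and (ii)~handling the formal distinction between ``infimum'' and ``minimum'' in the definition of the ERM. Neither is substantive. The payoff of the argument is that strict convexity is exactly what forces every ERM -- not just a particular min-norm-style rule -- to recover $w^\star$, which is why the theorem applies uniformly to all ERMs and why the examples in Section~\ref{app:examples} showing tightness must drop one of the two hypotheses (strict convexity, or $n > d$).
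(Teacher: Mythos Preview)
Your proposal is correct and matches the paper's own argument, which simply states that the theorem ``follows directly from Proposition~\ref{prop:car}.'' You have spelled out the details the paper omits --- in particular, that strict convexity makes the shared $\arg\min$ a singleton and hence pins down the output of \emph{any} ERM on the sparsified $F$ --- and your padding-with-zero-weights step to reach exactly $n$ selected points is the natural way to close the argument.
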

\end{mdframed}
\vspace{-3pt}
The proof of this theorem follows directly from Proposition~\ref{prop:car}.
This result highlights the benefits of using regularization to induce strict convexity in optimization problems. By ensuring strict convexity, one can perform data selection with as few as  \(n = d+1\)  points, significantly reducing the selection budget. More broadly, regularization stabilizes the optimal solution by guaranteeing that it is determined by at most \(d+1\) examples—similar to the role of support vectors in SVMs.

\vspace{-10pt}\section{Future Research}\label{sec:open}

\vspace{-4pt}This work explored optimal data selection for natural algorithms in the contexts of classification and stochastic convex optimization. In classification, we focused on unweighted data selection, while in stochastic convex optimization, we considered weighted data selection.

\vspace{-4pt}\paragraph{Data Selection in Regression.}  
A natural extension of our study is to explore unweighted data selection for regression problems. For example, our analysis can be adapted to show that the results in Theorem~\ref{thm:linreg} for linear regression in the cases \(n = d\) and \(n < d\) also hold for unweighted data selection. Studying the case of \(n > d\) in the unweighted setting remains an open question. It would also be interesting to study weighted data selection for other ERMs, beyond the min-norm ERM addressed in Theorem~\ref{thm:linreg}.  

Additionally, the case \( d < n < 2d \) is not handled by Theorem~\ref{thm:linreg}, and it would be interesting to characterize the approximation guarantees in this regime.

\vspace{-4pt}\paragraph{Additive Approximation Guarantees.}
In this work, we focused on multiplicative approximation guarantees in the context of stochastic convex optimization, as these are natural for unbounded loss functions. Exploring additive approximation guarantees for compact SCO problems is an interesting direction for future research, particularly to understand how they depend on properties such as Lipschitz continuity, smoothness, strong convexity, and the diameter of the parameter space.

\vspace{-4pt}\paragraph{Non-Continuous ERMs in Classification.}
A natural direction for future research is to relax the continuity assumption in linear classification. Specifically, is it possible to construct an ERM for linear classification that achieves non-trivial guarantees with $n \leq d$? As shown in Figure~\ref{fig:non-continuous-erm}, non-continuous ERMs can leverage discontinuities in decision boundaries in the case $n = d$.

\vspace{-4pt}\paragraph{High-Dimensional Mean Estimation.}
The mean estimation problem analyzed in Theorem~\ref{t:mean} naturally extends to higher dimensions, formulated as a stochastic convex optimization problem with \(\mathcal{W} = \mathbb{R}^d\), \(\mathcal{Z} = \mathbb{R}^d\), and loss functions defined as \(\ell_z(h) = \|z - h\|^2\). An interesting question is how the results of Theorem~\ref{t:mean} generalize as the dimension \(d\) increases.

For every \( d \geq 1 \), the following bounds hold:
\[
\frac{2n}{2n-1} \leq \sup_{D \subseteq \mathbb{R}^d} \frac{L_D^\star(n)}{L_D^\star} \leq \frac{n+1}{n}.
\]
The lower bound is tight for \( d = 1 \), as established in Theorem~\ref{t:mean}. The upper bound becomes asymptotically tight as the dimension tends to infinity:
\[
\lim_{d \to \infty} \sup_{D \subseteq \mathbb{R}^d} \frac{L_D^\star(n)}{L_D^\star} = \frac{n+1}{n}.
\]
We prove these results is in Section~\ref{sec:p20p21}. 

An open question is to determine the exact tight bounds for fixed dimensions \(d = 2, 3, \ldots\). Another interesting direction is to study bounds for weighted data selection in this problem.

\section{Related Work}\label{sec:related}

While most of learning theory has predominantly followed a model-centric approach, focusing on designing and analyzing algorithms, there are a few subareas that can be viewed as more data-centric. One notable example is the study of sample compression schemes~\citep*{littlestone:86}, which aim to derive generalization bounds for algorithms whose outputs depend on a small subset of selected input examples. Another prominent example is active learning~\citep*{cohn:94, balcan:06, hanneke:fntml}, where the goal is to achieve effective learning by labeling as few examples as possible. A related variant studied by \citet*{pmlr-v97-dasgupta19a} considers a black-box active learning model, where the learner interacts with an oracle that returns hypotheses instead of labels. Similarly, \citet*{pmlr-v119-cicalese20a} develop an active learning algorithm with improved bounds. 

More recently, there has been a line of work in machine teaching and coresets that relates to our problem. A \emph{coreset} is a small weighted subset of a dataset that approximately preserves some key property (e.g., loss, margin, or clustering cost) of the full dataset. This allows for more efficient computation without significant loss in accuracy. Several works have explored different approaches to constructing coresets. \citet*{NEURIPS2019_475fbefa} propose a new algorithm for computing Carath\'eodory sets with improved efficiency, reducing the complexity to \( O(nd) \). In the context of approximation, \citet*{Feldman20} show that low-rank matrix approximations can estimate distances to compact sets spanned by \( k \) vectors in \( \mathbb{R}^d \) up to a \((1+\epsilon)\) factor. Expanding this direction, \citet*{Feldman24} explore a learning-based approach to constructing coresets, while \citet*{Feldman23} and \citet*{borsos2020coresets} study coreset methods tailored to neural networks. 

Similar questions have also been explored within machine teaching. For instance, \citet{Ma18a} study a setting where a teacher, who knows the target concept, selects a small subdataset from a sample drawn IID from the population to train the learner. This setup is closely related to our problem, with the key difference that the selection process in our setting operates directly on $D$ rather than an intermediate sample. Their analysis focuses on the maximum likelihood estimator for the mean of a Gaussian and the large-margin classifier in one dimension.

\section{Proofs}\label{proofs}

\subsection{Carath\'eodory's Theorem for Convex Functions}

\begin{proposition*}[Carath\'eodory's Theorem for Convex Functions]
Let \( K \subseteq \mathbb{R}^d \) be convex and closed, and let \( f_1, \dots, f_n : K \to \mathbb{R} \) be strictly convex functions. Then, for any \( g \in \mathtt{conv}(f_1, \dots, f_n) \), there exist indices \( i_1 \leq \ldots \leq i_{d+1} \) and a function \( g' \in \mathtt{conv}(f_{i_1}, \dots, f_{i_{d+1}}) \) such that:
\begin{enumerate}
    \item \(\arg\min_{x \in K} g'(x) = \arg\min_{x \in K} g(x)\), and
    \item \(\min_{x \in K} g'(x) \leq \min_{x \in K} g(x)\).
\end{enumerate}
\end{proposition*}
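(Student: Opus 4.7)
The plan is to adapt the classical inductive sparsification proof of Carathéodory's theorem by lifting the data into $\mathbb{R}^{d+1}$, so that each sparsification step preserves both the first-order optimality condition (giving item~1) and the value at the minimizer (giving item~2).

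First, since each $f_i$ is strictly convex and $\sum_i \alpha_i = 1$, the function $g = \sum_i \alpha_i f_i$ is itself strictly convex and hence attains a unique minimizer $x^\star \in K$. By the first-order optimality condition for convex optimization over $K$, one can choose subgradients $v_i \in \partial f_i(x^\star)$ such that $-\sum_i \alpha_i v_i \in N_K(x^\star)$, where $N_K(x^\star)$ is the normal cone to $K$ at $x^\star$. (When the $f_i$ are differentiable, $v_i = \nabla f_i(x^\star)$.) Now lift each $f_i$ to the point $P_i := (v_i, f_i(x^\star)) \in \mathbb{R}^{d+1}$. If the support $S = \{i : \alpha_i > 0\}$ has size at least $d+2$, the points $\{P_i : i \in S\}$ are affinely dependent, yielding scalars $\beta_i$, not all zero and vanishing outside $S$, with $\sum_i \beta_i = 0$, $\sum_i \beta_i v_i = 0$, and $\sum_i \beta_i f_i(x^\star) = 0$.

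Next, perform the sparsification step. Set $\alpha_i(t) = \alpha_i + t \beta_i$ and choose $t$ of smallest magnitude such that $\alpha_i(t) \ge 0$ for every $i$ with equality for some index. The three invariants $\sum_i \alpha_i(t) = 1$, $\sum_i \alpha_i(t) v_i = \sum_i \alpha_i v_i$, and $\sum_i \alpha_i(t) f_i(x^\star) = g(x^\star)$ follow immediately from the three identities satisfied by $\beta$. The new function $g_t = \sum_i \alpha_i(t) f_i$ is strictly convex, and since $\sum_i \alpha_i(t) v_i \in \partial g_t(x^\star)$ while $-\sum_i \alpha_i(t) v_i = -\sum_i \alpha_i v_i \in N_K(x^\star)$, the point $x^\star$ is a first-order stationary point of $g_t$ on $K$; strict convexity upgrades this to $x^\star$ being the unique minimizer of $g_t$, with $\min_K g_t = g_t(x^\star) = g(x^\star) = \min_K g$. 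Iterating strictly reduces the support size by one each time until at most $d+1$ indices remain, which produces the desired $g'$ (with item~2 in fact holding as equality).

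The main obstacle I anticipate is the careful bookkeeping of the first-order optimality condition in the (possibly non-smooth and boundary) setting. One must commit at the outset to a specific tuple of subgradients $(v_1, \ldots, v_n) \in \partial f_1(x^\star) \times \cdots \times \partial f_n(x^\star)$ certifying optimality of $g$, use precisely this tuple to build the lifted points $P_i$, and then verify after each sparsification step that $\sum_i \alpha_i(t) v_i$ still lies in $\partial g_t(x^\star)$ and certifies optimality of $x^\star$ for $g_t$ on $K$. In the smooth unconstrained case this is immediate, but in general it requires invoking the sum rule for subdifferentials together with the fact that subgradients scale linearly.
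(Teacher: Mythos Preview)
Your proof is correct and in fact establishes a slightly stronger conclusion than the paper's, but it proceeds by a genuinely different route.

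The paper works entirely in $\mathbb{R}^d$: it picks subgradients $y_i \in \partial f_i(\xi)$ with $\sum_i \lambda_i y_i = 0$, uses that $n-1 > d$ vectors $y_i - y_n$ are linearly dependent to obtain $\beta_i$ with $\sum_i \beta_i = 0$ and $\sum_i \beta_i y_i = 0$, and then slides along the segment $\lambda_i(t) = \lambda_i + t\beta_i$. Since the value $g(\xi,t) = \sum_i \lambda_i(t) f_i(\xi)$ is only \emph{linear} in $t$ (not constant), the paper must examine both endpoints of the feasible interval and pick the one where $g(\xi,t)$ is smaller; this is what delivers item~2 as an inequality.

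Your lifting to $\mathbb{R}^{d+1}$ via $P_i = (v_i, f_i(x^\star))$ buys an extra linear constraint $\sum_i \beta_i f_i(x^\star) = 0$ for free from affine dependence, which makes $g_t(x^\star)$ \emph{constant} in $t$. Consequently either endpoint works, no comparison is needed, and item~2 holds with equality throughout the induction. This is a cleaner bookkeeping device and yields the sharper statement $\min_K g' = \min_K g$. Your treatment of the constrained case via the normal cone $N_K(x^\star)$ is also more explicit than the paper's, which tacitly invokes the unconstrained criterion $0 \in \partial g(\xi)$.
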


Before we begin the proof, we recall some basic concepts from convex function analysis.
A function \( f : \mathbb{R}^d \to \mathbb{R} \) is \emph{strictly convex} if, for all \( x, y \in \mathbb{R}^d \) with \( x \neq y \), and for any \( \lambda \in (0, 1) \), it holds that
$f(\lambda x + (1-\lambda)y) < \lambda f(x) + (1-\lambda)f(y).$
This definition is like the definition of convex functions, but with a strict inequality replacing the non-strict one. A key property of strictly convex functions is that they have a unique global minimum, as the strict inequality prevents any other point from achieving the same minimum value.
A vector \( g \in \mathbb{R}^d \) is a \emph{subgradient} of a convex function \( f : \mathbb{R}^d \to \mathbb{R} \) at a point \( \xi \in \mathbb{R}^d \) if:
\[
f(y) \geq f(\xi) + \langle g, y - \xi \rangle \quad \text{for all } y \in \mathbb{R}^d.
\]
The set of all subgradients of \( f \) at \( \xi \) is called the \emph{subdifferential} and is denoted \( \partial f(\xi) \).
The \emph{Minkowski sum} of two sets \( A, B \subseteq \mathbb{R}^d \) is defined as:
\[
A + B = \{a + b \mid a \in A, b \in B\}.
\]
We will use the following fundamental result:
\begin{proposition}[Additivity of Subgradients \texorpdfstring{\citep[p. 223]{rockafellar1997convex}}{}] \label{prop:addsub}
Let \( K \subseteq \mathbb{R}^d \) be convex and closed, and let \( f, g : K \to \mathbb{R} \) be convex functions. For any \( \xi \in K \), it holds that:
\[
\partial \left(f + g\right)(\xi) = \partial f(\xi) + \partial g(\xi).
\]
\end{proposition}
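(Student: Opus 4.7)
The claim decomposes naturally into two inclusions of very different difficulty, so I would address them separately.

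The inclusion $\partial f(\xi)+\partial g(\xi)\subseteq\partial(f+g)(\xi)$ is essentially a definitional matter: given $u\in\partial f(\xi)$ and $v\in\partial g(\xi)$, I would write the two defining subgradient inequalities at an arbitrary $y\in K$ and add them, obtaining $(f+g)(y)\ge(f+g)(\xi)+\langle u+v,y-\xi\rangle$, which is precisely the assertion that $u+v\in\partial(f+g)(\xi)$. No topology is needed for this half.

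For the reverse inclusion $\partial(f+g)(\xi)\subseteq\partial f(\xi)+\partial g(\xi)$, I would reduce to a separation argument in $\mathbb{R}^{d+1}$. Given $w\in\partial(f+g)(\xi)$, first normalize: translate so $\xi=0$, subtract the constants $f(0),g(0)$, and subtract the linear form $\langle w,\cdot\rangle$ from (say) $f$. After these reductions $f(0)=g(0)=0$, $0\in\partial(f+g)(0)$, and the task is to produce $u\in\partial f(0)$ with $-u\in\partial g(0)$. Extend $f,g$ to all of $\mathbb{R}^d$ by declaring them $+\infty$ off $K$; the two functions remain convex. Introduce the convex sets
\[
C_1=\{(y,t)\in\mathbb{R}^{d+1}:y\in K,\;t>f(y)\},\qquad C_2=\{(y,t)\in\mathbb{R}^{d+1}:y\in K,\;t\le -g(y)\}.
\]
The assumption $0\in\partial(f+g)(0)$ means $f(y)+g(y)\ge 0$ on $K$, i.e.\ $f(y)\ge -g(y)$, so $C_1\cap C_2=\varnothing$. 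The set $C_1$ is open and nonempty (it contains $(0,\epsilon)$ for every $\epsilon>0$), and $C_2$ is nonempty (it contains $(0,0)$), so Hahn--Banach yields a nonzero $(u,-\lambda)\in\mathbb{R}^{d}\times\mathbb{R}$ and a scalar $c$ with $\langle u,y\rangle-\lambda t\le c$ on $C_2$ and $\langle u,y\rangle-\lambda t\ge c$ on $C_1$.

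Three technical checks finish the argument. First, comparing values on $(0,\epsilon)\in C_1$ and $(0,0)\in\overline{C_2}$ as $\epsilon\downarrow 0$ forces $c=0$ and $\lambda\ge 0$. Second, I must rule out $\lambda=0$; here the fact that $f,g$ are finite-valued on the common convex set $K$ plays the role of the usual Moreau--Rockafellar qualification condition: if $\lambda=0$ then $\langle u,\cdot\rangle\le 0$ on $K-\{0\}$ and $\ge 0$ on $K-\{0\}$, forcing $u=0$ and contradicting $(u,-\lambda)\ne 0$. Third, after rescaling so $\lambda=1$, the inequality on $C_1$ reads $t\ge\langle u,y\rangle$ for all $t>f(y)$, hence $f(y)\ge\langle u,y\rangle=\langle u,y-0\rangle+f(0)$, giving $u\in\partial f(0)$; symmetrically, the inequality on $C_2$ gives $-u\in\partial g(0)$. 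Undoing the normalization yields $w=u'+v'$ with $u'\in\partial f(\xi)$ and $v'\in\partial g(\xi)$.

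The main obstacle I would expect is the non-verticality step $\lambda>0$, which in general requires a constraint qualification (e.g.\ a relative-interior intersection condition on the effective domains); the proposal avoids that pitfall by exploiting that both $f$ and $g$ are \emph{real-valued} on the same convex domain $K$, so the effective domains coincide and their relative interiors intersect automatically. Everything else is bookkeeping: the normalization step, writing out Hahn--Banach for a nonempty open convex set and a disjoint nonempty convex set, and translating the separating hyperplane back into subgradient inequalities.
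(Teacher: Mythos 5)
Your easy inclusion ($\partial f(\xi)+\partial g(\xi)\subseteq\partial(f+g)(\xi)$) is correct, and the separation strategy in $\mathbb{R}^{d+1}$ for the reverse inclusion is the standard Moreau--Rockafellar argument (the paper cites this result rather than proving it). However, as written the proof has two linked gaps, both stemming from the hypothesis that $K$ is a \emph{closed} subset of $\mathbb{R}^d$ that need not equal $\mathbb{R}^d$. First, $C_1=\{(y,t):y\in K,\ t>f(y)\}$ is \emph{not} open in $\mathbb{R}^{d+1}$: for any boundary point $y_0\in\partial K$, the point $(y_0,f(y_0)+1)$ lies in $C_1$, yet every neighborhood of it contains points whose first coordinate lies outside $K$. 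So the ``open-versus-disjoint'' form of Hahn--Banach you invoke does not apply, and you are left only with the weak finite-dimensional separation theorem, which does not by itself forbid vertical hyperplanes. Second, and relatedly, the step that rules out $\lambda=0$ is incorrect in general: with $\lambda=0$ your two inequalities only give $\langle u,y\rangle=0$ for all $y\in K$, which forces $u=0$ only when $K$ spans $\mathbb{R}^d$. If $K$ lies in a proper affine subspace, $u$ may be any nonzero vector orthogonal to that subspace, giving a vertical separating hyperplane that contains both $C_1$ and $C_2$ and carries no subgradient information.

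Both problems are fixed simultaneously by replacing your strict-separation appeal with Rockafellar's \emph{proper} separation theorem: two nonempty convex sets in $\mathbb{R}^{n}$ whose relative interiors are disjoint admit a separating hyperplane that does not contain both of them. Here $C_1\cap C_2=\emptyset$, so certainly $\mathrm{ri}(C_1)\cap\mathrm{ri}(C_2)=\emptyset$. Properness then excludes the degenerate case: as noted above, any vertical separator must contain both sets. The remaining bookkeeping in your writeup---forcing $c=0$, normalizing $\lambda$, reading off the two subgradient inequalities, and undoing the translation---is sound (modulo a sign convention in the direction of the separating inequality). An alternative repair is to run your argument inside $\mathrm{aff}(K)$, where $K$ has nonempty relative interior and your non-verticality reasoning works as stated, and then observe that subgradients in $\mathbb{R}^d$ differ from subgradients computed within $\mathrm{aff}(K)$ only by vectors orthogonal to $\mathrm{aff}(K)-\xi$, which can be split arbitrarily between $\partial f(\xi)$ and $\partial g(\xi)$.
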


This result generalizes the familiar fact that the gradient of a sum is the sum of the gradients, extending it to convex functions that are not necessarily differentiable.
We will also use the following simple fact:
\begin{lemma}\label{lem:sub_max}
Let \( f : \mathbb{R}^d \to \mathbb{R} \) be strictly convex. Then \( 0 \in \partial f(\xi) \) if and only if \( \xi \) is the unique global minimum of \( f \).
\end{lemma}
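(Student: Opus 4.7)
The plan is to observe that the claim decomposes into two nearly definitional implications, linked by the fact that a global minimum of a strictly convex function must be unique.

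First, I would unpack the subgradient inequality at $\xi$ with $g=0$: the condition $0 \in \partial f(\xi)$ is exactly
\[
f(y) \ge f(\xi) + \langle 0, y - \xi\rangle = f(\xi) \quad \text{for all } y \in \mathbb{R}^d,
\]
so $0 \in \partial f(\xi)$ is equivalent to $\xi$ being a global minimum of $f$. This reduces the lemma to showing that, under strict convexity, every global minimum is the \emph{unique} global minimum.

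For the forward direction, suppose $0 \in \partial f(\xi)$, so $\xi$ is a global minimum. If some $\eta \neq \xi$ also satisfied $f(\eta) \le f(\xi)$, then strict convexity applied at $\lambda = 1/2$ would yield
\[
f\!\left(\tfrac{\xi + \eta}{2}\right) < \tfrac{1}{2} f(\xi) + \tfrac{1}{2} f(\eta) \le f(\xi),
\]
contradicting the global minimality of $\xi$. Hence $\xi$ is the unique global minimum.

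For the reverse direction, if $\xi$ is the unique global minimum then $f(y) \ge f(\xi)$ for all $y$, which is precisely the subgradient inequality for $g = 0$. I do not expect any real obstacle: the only subtlety is being careful to invoke strict convexity with a \emph{strict} inequality so that a second minimizer is ruled out, rather than merely showing convexity of the set of minimizers.
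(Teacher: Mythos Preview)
Your proposal is correct and follows essentially the same approach as the paper: both directions hinge on reading the subgradient inequality with $g=0$ as the definition of a global minimum, with strict convexity supplying uniqueness. The only difference is that you spell out the midpoint argument for uniqueness, whereas the paper simply invokes the standard fact that a strictly convex function has at most one minimizer.
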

\begin{proof}
    Suppose \(0 \in \partial f(\xi)\). By the definition of the subgradient:
    \[
    f(y) \geq f(\xi) + \langle 0, y - \xi \rangle = f(\xi), \quad \forall y \in \mathbb{R}^d.
    \]
    Thus, \(f(\xi)\) is a global minimum. Since \(f\) is strictly convex, the global minimum is unique, so \(\xi\) is the unique minimizer. Conversely, if \(\xi\) is the unique global minimum, then \(f(y) \geq f(\xi)\) for all \(y \in \mathbb{R}^d\). By the subgradient definition, \(0 \in \partial f(\xi)\). This proves the claim.
\end{proof}
\begin{proof}[Proof of Proposition~\ref{prop:car}]
The proof follows a similar inductive sparsification process for the convex combination as in the classical Carathéodory Theorem (see, e.g.~\citep{Matousek2002} page 6 Theorem 1.2.3, originally in ~\citep{Carathéodory1907}). The key difference is the need for a more careful selection during sparsification to ensure the second item (the inequality between the minimum values) is preserved.

We prove the result by induction on \( n \), the number of functions in the convex combination. The base case \( n \leq d+1 \) is trivial since \( g' = g \) satisfies the conclusion.  
For \( n > d+1 \), write \( g \in \mathtt{conv}(f_1, \dots, f_n) \) as:
\[
g = \sum_{i=1}^n \lambda_i f_i, \quad \text{where } \lambda_i \geq 0 \text{ and } \sum_{i=1}^n \lambda_i = 1.
\]
Further assume that $\lambda_i>0$ for all $i$; if this is not the case, and $\lambda_i=0$
for some $i$, then we can remove $f_i$ from the convex combination and apply induction.
Let \( \xi = \arg\min_{x \in K} g(x) \). By Lemma~\ref{lem:sub_max}, \( 0 \in \partial g(\xi) \), so:
\[
0 = \sum_{i=1}^n \lambda_i y_i, \quad \text{with } y_i \in \partial f_i(\xi). \tag{Proposition~\ref{prop:addsub}}
\]
Since \( n > d+1 \), the vectors \( \{y_n - y_1, \dots, y_n - y_{n-1}\} \subset \mathbb{R}^d \) are linearly dependent. Thus, there exist coefficients \( \beta_1, \dots, \beta_n \) such that $\sum_{i=1}^n \beta_i y_i = 0$, where $\sum_{i=1}^n \beta_i = 0$.
Define \( \lambda_i(t) = \lambda_i + t \beta_i \) and the corresponding functions:
\[
g(x,t) = \sum_{i=1}^n \lambda_i(t) f_i(x).
\]
Notice that \( \sum_{i=1}^n \lambda_i(t) = 1 \) for all $t$ and, since $\lambda_i(0)=\lambda_i>0$, it follows that for a small enough \( \lvert t\rvert>0 \), \( \lambda_i(t) \geq 0 \). Let \( T \) be the maximal interval containing \( t = 0 \) such that \( \lambda_i(t) \geq 0 \) for all \( i \). At least one \( \lambda_i(t) \) becomes zero at the endpoints \( t_0, t_1 \) of \( T \).
Since for every fixed $x$, \( g(x,t) \) is linear in \( t \), its minimum over \( K \) occurs at \( t_0 \) or \( t_1 \). Let \( t^* \in \{t_0, t_1\} \) minimize \( g(\xi,t) \). At \( t = t^* \), the corresponding function:
\[
g_{t^\star}(x) = g(x, t^\star) = \sum_{i=1}^n \lambda_i(t^*) f_i(x)
\]
involves fewer than \( n \) strictly positive coefficients and satisfies the conditions:
\[
\arg\min_{x \in K} g_{t^\star}(x) = \arg\min_{x \in K} g(x), \quad \min_{x \in K} g_{t^\star}(x) \leq \min_{x \in K} g(x).
\]
Applying induction on $g_{t^\star}$ finishes the proof of Proposition.
\end{proof}

\subsection{Mean Estimation: Proof of Theorem~\ref{t:mean}}

\begin{theorem*}[Theorem~\ref{t:mean} Restatement]
    For every \( n \geq 1 \),
\[
\sup_{D \subseteq \mathbb{R}} \frac{L_D^\star(n)}{L_D^\star} = \frac{2n}{2n-1},
\]
where \( D \) ranges over all finite multisets of \( \mathbb{R} \). Above, the ratio \(\frac{L^\star_D(n)}{L^\star_D}\) is defined to be \(1\) when \(L_D^\star(n)= L_D^\star=0\). If only \(L_D^\star=0\), the ratio is defined as \(\infty\).
\end{theorem*}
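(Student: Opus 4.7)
The proof splits into a matching lower bound and an upper bound that reduces to the two-point case via Proposition~\ref{prop:car}.

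\textbf{Lower bound.} I would take $D$ to consist of $2n-1$ copies of $0$ together with a single copy of $1$. Direct calculation gives $h^\star = 1/(2n)$ and $L_D^\star = (2n-1)/(2n)^2$. Any selection of $n$ elements from $D$ contains either $n$ zeros or $n-1$ zeros plus the unique $1$; both produce averages ($0$ and $1/n$) whose $L_D$-value equals $1/(2n)$, giving the ratio $2n/(2n-1)$.

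\textbf{Upper bound via Proposition~\ref{prop:car}.} I would apply the proposition with $d = 1$ to the strictly convex functions $f_i(x) = (x-z_i)^2$. Since $L_D = \frac{1}{N}\sum_i f_i$ is a convex combination, one obtains indices $j_1, j_2$ and a coefficient $\alpha \in [0,1]$ such that $g' = \alpha f_{j_1} + (1-\alpha) f_{j_2}$ has the same minimizer as $L_D$ and $\min g' \leq L_D^\star$. A short computation identifies $\arg\min g' = h^\star = \alpha z_{j_1} + (1-\alpha) z_{j_2}$ and $\min g' = \alpha(1-\alpha)(z_{j_1}-z_{j_2})^2$, yielding the key inequality
\[
\alpha(1-\alpha)(z_{j_1}-z_{j_2})^2 \;\leq\; L_D^\star.
\]
Then I would choose $k \in \{0,\dots,n\}$ as the nearest integer to $\alpha n$ and select $k$ copies of $z_{j_1}$ together with $n-k$ copies of $z_{j_2}$. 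Because $h^\star$ is the global mean of $D$, the cross term in expanding $L_D(\bar z)$ vanishes and one obtains the clean identity $L_D(\bar z) = L_D^\star + \delta^2$, where $\delta = (k/n - \alpha)(z_{j_1}-z_{j_2})$. The theorem then reduces to showing $\delta^2 \leq L_D^\star/(2n-1)$.

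\textbf{The main obstacle: bounding $\delta^2$.} I would split on whether $\alpha$ lies in the interior $[1/(2n),\,1-1/(2n)]$ or near a boundary. On the interior, integer rounding gives $|\delta| \leq |z_{j_1}-z_{j_2}|/(2n)$, while $\alpha(1-\alpha) \geq (2n-1)/(4n^2)$; combined with the key inequality this yields $\delta^2 \leq L_D^\star/(2n-1)$. Near a boundary, say $\alpha < 1/(2n)$, rounding forces $k = 0$ and $\bar z = z_{j_2}$, hence $\delta = \alpha(z_{j_1}-z_{j_2})$ and $\delta^2/L_D^\star \leq \alpha/(1-\alpha) \leq 1/(2n-1)$; the case $\alpha > 1-1/(2n)$ is symmetric. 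This boundary regime -- where integer rounding buys nothing over the trivial single-point choice -- is both where the constant $2n/(2n-1)$ is saturated and where a less careful Carath\'eodory-style argument would lose a factor, so this is the step I expect to require the most attention. The degenerate case $L_D^\star = 0$ (all $z_i$ equal) is trivial, as any selection averages to the common value.
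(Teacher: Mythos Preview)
Your proposal is correct and follows the same high-level architecture as the paper: the identical lower-bound construction, the reduction to two support points via Proposition~\ref{prop:car}, and then a case analysis of the two-point problem. The place where you diverge is Step~2. The paper normalizes the two-point distribution to mean~$0$ and variance~$1$, proves an auxiliary lemma that $|z_1 z_2|\le 1$ under this normalization, and then splits on whether $|z_1|\le 1/\sqrt{2n-1}$ (using a grid-spacing argument in the complementary case). You instead keep the weight $\alpha$ as the primary parameter, use the clean identity $\min g'=\alpha(1-\alpha)(z_{j_1}-z_{j_2})^2\le L_D^\star$, and split on whether $\alpha\in[1/(2n),\,1-1/(2n)]$. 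The two case splits are in fact equivalent after the paper's normalization (the paper's threshold $|z_1|=1/\sqrt{2n-1}$ corresponds exactly to $\alpha=1-1/(2n)$), but your parametrization is more direct: it avoids the normalization step, dispenses with the separate product lemma, and makes the saturation at the boundary transparent. Nothing is lost, and the write-up is shorter.
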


It is convenient to allow the dataset $D$ to be any finitely supported distribution, rather than a finite multiset. Importantly, the data-selection process remains unchanged in this formulation: the data selector can choose any sequence of \(n\) points from the support of \(D\) and is evaluated based on the loss incurred by the average of these \(n\) points. This formulation is equivalent to the original problem, as finite multisets correspond to rational distributions, and rational distributions are dense in the set of all finitely supported distributions.

Throughout the proof, we rely on the following lemma, which can be derived through a straightforward calculation:

\begin{lemma}\label{lem:13}
    Let \(D\) be a distribution supported on \( \mathbb{R}^d \). 
    For any point \( h \in \mathbb{R}^d \), the squared loss function \( L_D(h) \) satisfies:
    \[
    L_D(h) = L_D^\star + \|h - \mu_D\|^2,
    \]
    where \( \mu_D = \Ex_{z\sim D}z \) is the mean of \( D \).
\end{lemma}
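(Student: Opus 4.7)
The lemma is the standard bias--variance decomposition for squared loss applied to a finitely supported distribution $D$ on $\mathbb{R}^d$, so the plan is a short direct calculation using the ``add and subtract $\mu_D$'' trick.

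The main step is to write, for every $z$ in the support of $D$,
\[
\|h - z\|^2 = \|(h - \mu_D) + (\mu_D - z)\|^2 = \|h - \mu_D\|^2 + 2\langle h - \mu_D,\, \mu_D - z\rangle + \|\mu_D - z\|^2.
\]
Taking expectation over $z \sim D$, the first term is constant in $z$ and remains $\|h - \mu_D\|^2$, the cross term vanishes because $\Ex_{z\sim D}[\mu_D - z] = \mu_D - \mu_D = 0$ (pulling the constant vector $h-\mu_D$ out of the inner product), and the third term becomes $\Ex_{z\sim D}\|\mu_D - z\|^2 = L_D(\mu_D)$. Hence
\[
L_D(h) = \|h - \mu_D\|^2 + L_D(\mu_D).
\]

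To finish, I would observe that $L_D(\mu_D) = L_D^\star$. This is immediate from the identity just derived: since $\|h - \mu_D\|^2 \geq 0$ with equality iff $h = \mu_D$, the function $h \mapsto L_D(h)$ is minimized at $h = \mu_D$, and the minimum value equals $L_D(\mu_D)$. Substituting back gives $L_D(h) = L_D^\star + \|h - \mu_D\|^2$, as claimed.

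There is no real obstacle here; the only thing to be careful about is justifying that the cross term in the expansion indeed vanishes, which follows from linearity of expectation applied coordinatewise (or, equivalently, from the bilinearity of the inner product together with the fact that $h - \mu_D$ does not depend on $z$). The argument works identically for $d = 1$, which is the case used in the proof of Theorem~\ref{t:mean}, and for general $d$, which is needed in the high-dimensional extension discussed in Section~\ref{sec:open}.
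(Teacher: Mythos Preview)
Your proof is correct and is exactly the ``straightforward calculation'' the paper alludes to without spelling out; the paper does not give an explicit proof of Lemma~\ref{lem:13}, so your add-and-subtract-$\mu_D$ expansion with the vanishing cross term is precisely what is intended.
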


\paragraph{Lower Bound.} \label{t:mean:lower}
We begin by proving the lower bound using the simple dataset \(D\) consisting of \(2n-1\) copies of \(0\) and a single copy of \(1\). A direct calculation yields the optimal loss:
\[
L^\star_D = \frac{2n - 1}{2n}\Bigl(\frac{1}{2n} - 0\Bigr)^2 + \frac{1}{2n}\Bigl(\frac{1}{2n} - 1\Bigr)^2 = \frac{2n-1}{(2n)^2}.
\]
If we select only \(n\) points from \(D\), the achievable averages are restricted to the set \(\{0, k/n : k < n\}\). Applying Lemma~\ref{lem:13}, we compute:
\[
\frac{L^\star_D(n)}{L^\star_D} = \frac{L^\star_D + (1/2n)^2}{L^\star_D} = \frac{2n}{2n-1},
\]
and hence $\sup_{D \subseteq \mathbb{R}} \frac{L_D^\star(n)}{L_D^\star} \geq \frac{2n}{2n-1}$.
\paragraph{Upper Bound.}  
We now turn to the upper bound, which requires more work. Following the proof outline, we proceed in two steps:  
(i) use Proposition~\ref{prop:car} to reduce the problem to the case where \(D\) consists of only two points, and  
(ii) explicitly analyze this reduced case.

\paragraph{Step 1: Reduction to two points.}
Let \( D \) be a (finitely supported) distribution over \( \mathbb{R} \).
The case of $L^\star_D=0$ is trivial (in this case $D$ is a dirac distribution and $\frac{L_D^\star(n)}{L_D^\star} = 1$), and hence we assume that $L^\star_D > 0$.
We will prove that there exist a distribution $D'$ supported on two points $z_1,z_2\in \mathtt{supp}(D)$ such that for all $n$
\[\frac{L^\star_D(n)}{L^\star_D} \leq  \frac{L^\star_{D'}(n)}{L^\star_{D'}}.\]
For each \( z_i \in \mathtt{supp}(D) \), define \( f_i(x) = (x - z_i)^2 \). Since \( f_i(x) \) is strictly convex, the loss function \( L_D(x) \) is expressed as:
\[
L_D(x) = \mathbb{E}_{z \sim D}[(x - z)^2] = \sum_{z_i \in S} p_i (x - z_i)^2,
\]
where \( p_i \) are the probabilities associated with \( z_i \) under \( D \).
By Proposition~\ref{prop:car}, the strictly convex function \( L_D(x) \), being a convex combination of strictly convex functions, can be approximated by a function \( g' \), which is a convex combination of at most two functions \( f_1(x) \) and \( f_2(x) \). Without loss of generality, let these functions correspond to points \( z_1 \) and \( z_2 \). Define a new distribution \( D' \) supported only on \( \{z_1, z_2\} \), with probabilities \( p_1 \) and \( p_2 \) corresponding to the coefficients in the convex combination defining \( g' \). Then:
\[
g'(x) = L_{D'}(x) = p_1 (x - z_1)^2 + p_2 (x - z_2)^2.
\]
By the second property of Proposition~\ref{prop:car}, \( L_{D'}^\star \leq L_D^\star \), and hence:
\begin{align*}
\frac{L_D^\star(n)}{L_D^\star} &= 1 + \frac{\min_{z_1,\ldots,z_n\in \mathtt{supp}(D)} \lvert\mu_D - \bar z \rvert^2}{L_D^\star} \tag{Lemma~\ref{lem:13}}\\
&\leq 
1 + \frac{\min_{z_1,\ldots,z_n\in \mathtt{supp}(D')}\lvert\mu_{D'} - \bar z \rvert^2}{L_{D'}^\star}
\tag{$\mathtt{supp}(D')\subseteq \mathtt{supp}(D)$, $L_{D'}^\star\leq L_{D}^\star$, $\mu_D = \mu_{D'}$ }\\
&= \frac{L_{D'}^\star(n)}{L_{D'}^\star},
\end{align*}
where $\bar z$ above is the average $\bar z = (z_1,\ldots z_n)/n$, and $\mu_{D},\mu_{D'}$ are the means of $D$ and $D'$ respectively.
This reduction allows us to focus on distributions supported on exactly two points, \(\mathtt{supp}(D) = \{z_1, z_2\}\). Since translating or rescaling the support does not affect the ratio \(\frac{L_D^\star(n)}{L_D^\star}\), we can assume without loss of generality that \(\mathbb{E}[D] = 0\), \(\mathrm{Var}(D) = 1\), and \(z_1 < 0 < z_2\) with \(|z_2| \geq |z_1|\).

\paragraph{Step 2: Analysis of the two-point case.}
We use the following lemma:

\begin{lemma}\label{lem:14}
    Let \( D \) be a distribution supported on two points \( z_1, z_2 \in \mathbb{R} \) with expectation \( \mathbb{E}_D[X] = 0 \) and variance \( \mathrm{Var}_D[X] \leq 1 \). Then, \( \lvert z_1 \cdot z_2 \rvert \leq 1 \).
\end{lemma}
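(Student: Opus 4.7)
The plan is to give a short direct computation. Let $p_1, p_2 \geq 0$ with $p_1 + p_2 = 1$ denote the probabilities assigned by $D$ to $z_1, z_2$, respectively. If $p_1 = 0$ or $p_2 = 0$, then $D$ is supported on a single point, which by $\mathbb{E}_D[X]=0$ must be $0$, so $z_1 z_2 = 0$ and the claim is trivial. So I will assume $p_1, p_2 > 0$.

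The first observation is that the two points must lie on opposite sides of the origin (or one of them is $0$): the equation $p_1 z_1 + p_2 z_2 = 0$ forces $z_1 z_2 \leq 0$, so $|z_1 z_2| = -z_1 z_2$. The next step is to solve the mean equation for $z_1$, namely $z_1 = -\tfrac{p_2}{p_1} z_2$, and substitute this into both $|z_1 z_2|$ and the variance expression $\mathrm{Var}_D[X] = p_1 z_1^2 + p_2 z_2^2$. A routine simplification gives
\[
|z_1 z_2| \;=\; \tfrac{p_2}{p_1}\, z_2^2
\qquad\text{and}\qquad
\mathrm{Var}_D[X] \;=\; \tfrac{p_2^2}{p_1}\, z_2^2 + p_2 z_2^2 \;=\; \tfrac{p_2}{p_1}\, z_2^2,
\]
so in fact $|z_1 z_2| = \mathrm{Var}_D[X] \leq 1$, which is exactly the claim.

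There is no real obstacle here; the only thing to be mildly careful about is the degenerate cases where one of $p_1, p_2$ is zero or where $z_2 = 0$ (in which case $z_1 = 0$ as well by the mean-zero condition). Both are handled in one line. The slick way to remember this identity is that for any two-point distribution with mean zero, the variance equals $|z_1 z_2|$, which is exactly the content of the lemma.
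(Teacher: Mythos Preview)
Your proof is correct and in fact cleaner than the paper's. The paper argues by contradiction: assuming $|z_1 z_2| > 1$, it considers the interval $[-1/z_1, z_1]$ (taking $z_1 > 0$), observes that both support points lie outside it, and computes the loss at the interval's center to deduce $\mathrm{Var}_D[X] > 1$. Your approach is a direct computation that actually establishes the \emph{equality} $|z_1 z_2| = \mathrm{Var}_D[X]$ for any mean-zero two-point distribution, from which the lemma follows immediately. This identity is strictly more informative than what the paper proves and makes the dependence on the variance bound transparent; the paper's geometric argument, while also short, obscures this exact relationship.
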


We will first use Lemma~\ref{lem:14} to complete the proof of Theorem~\ref{t:mean}, deferring the proof of the lemma to the end. By the lemma, \( \lvert z_1 \cdot z_2 \rvert \leq 1 \). Since $\mathrm{Var}_D[X] \leq 1$, it follows that $z_1 \leq 1$. Now, consider two cases: (i) if \( z_1 \leq \frac{1}{\sqrt{2n-1}} \), then by selecting only $z_1$, we have
    $L_D^\star(n) \leq 1 + \lvert \mu_D - z_1 \rvert^2 = 1 + \lvert z_1 \rvert^2 \leq 1 + \frac{1}{2n-1}$,
    which achieves the desired bound.
    (ii) Else, \( z_1 > \frac{1}{\sqrt{2n-1}} \). Consider the function \( z_1 + \frac{1}{z_1} \), and observe that it is decreasing for \( 0 < z_1 \leq 1 \). Therefore,
    \[
    z_1 + \frac{1}{z_1} \leq \frac{1}{\sqrt{2n-1}} + \sqrt{2n-1} = \frac{2n}{\sqrt{2n-1}}.
    \]
    By Lemma~\ref{lem:14}, the length of the interval \( [z_2, z_1] \) is bounded by \( z_1 + \frac{1}{z_1} \leq \frac{2n}{\sqrt{2n-1}} \). The averages of all possible selections of \(n\) points from \(D\) form a uniform grid along this interval, with the spacing between consecutive points on the grid at most \( \frac{2}{\sqrt{2n-1}} \). Since \(\mu_D = 0\) lies within the interval, its distance to the nearest point on the grid is at most \( \frac{1}{\sqrt{2n-1}} \). Thus, using Lemma~\ref{lem:13}, we have:
\[
L_D^\star(n) \leq 1 + \left( \frac{1}{\sqrt{2n-1}} \right)^2 = \frac{2n}{2n-1}.
\]
To conclude, in both cases, we achieve the bound
$L_D^\star(n) \leq \frac{2n}{2n-1}$, which completes the proof.

\begin{proof}[Proof of Lemma~\ref{lem:14}]
    The loss function is given by \( L_D(x) = \mathbb{E}_D[(x - X)^2] \), which can be expressed as:
    \[
    L_D(x) = \mathrm{Var}_D[X] + (x - \mu_D)^2,
    \]
    where \( \mu_D = \mathbb{E}_D[X] = 0 \). Thus, \( L_D(x) = \mathrm{Var}_D[X] + x^2 \).

    Assume towards contradiction that \( \lvert z_1 \cdot z_2 \rvert > 1 \), and without loss of generality, assume \( z_1 > 0 \). Then, the interval \( \left[-\frac{1}{z_1}, z_1\right] \) does not contain \( z_2 \). 
    Let $c = \frac{z_1 - \frac{1}{z_1}}{2}$ denote the center of this interval
    Any point in the support of \( D \) has distance at least \( d = \frac{z_1 + \frac{1}{z_1}}{2} \) from \( c \). Substituting into \( L_D(c) \), we have $L_D(c) = \mathrm{Var}_D[X] + c^2 > d^2$.
    Therefore,
    \begin{align*}
    \mathrm{Var}_D[X] &> d^2 - c^2 \\
                      &= \frac{z_1^2 + \frac{1}{z_1^2} + 2}{4} - \frac{z_1^2 + \frac{1}{z_1^2} - 2}{4}\\
                      &= 1,
    \end{align*}
    which is a contradiction.
    Therefore, \( \mathrm{Var}_D[X] \leq 1 \) implies \( \lvert z_1 \cdot z_2 \rvert \leq 1 \) as stated.
\end{proof}

\subsection{Classification}\label{sec:proofsbinary}

\subsubsection{Proof of Theorem~\ref{thm:linbinary}}
\begin{theorem*}[Theorem~\ref{thm:linbinary} Restatement]
Let \(A^\star\) denote the max-margin algorithm. Then,
\[
\sup_{D \in \mathtt{Real}(\mathcal{H}_d)} L^\star_D(n ; A^\star) =
\begin{cases}
0 & \text{if } n > d, \\
\frac{1}{2} & \text{if } n \leq d,
\end{cases}
\]
where $D$ ranges over all realizable datasets.
Furthermore, \(A^\star\) is optimal in the sense that for every continuous ERM \(A\) (and even for any continuous proper learner),
\[
(\forall n \leq d): \quad \sup_{D \in \mathtt{Real}(\mathcal{H}_d)} L^\star_D(n ; A) \geq \frac{1}{2}.
\]
\end{theorem*}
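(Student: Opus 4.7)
The plan is to dispatch the three assertions of the theorem in order of increasing difficulty. For \(n>d\) the bound \(\sup_D L^\star_D(n;A^\star)=0\) is classical: by the sample-compression property of max-margin, the max-margin halfspace of any realizable dataset is determined by at most \(d+1\) support vectors, so selecting these support vectors as the training subset yields a consistent classifier and hence zero loss on \(D\) (see \citep*{vapnik:74} and Appendix~A of \citet*{LongL20}). For \(n\le d\) the matching upper bound \(\sup_D L^\star_D(n;A^\star)\le 1/2\) follows from the trivial choice of a single majority-labeled training point, where the convention for max-margin on a one-point input yields the constant classifier of that label. The substantive content is the lower bound \(\sup_D L^\star_D(n;A)\ge 1/2\) for every continuous proper learner when \(n\le d\), which is where my plan concentrates.

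Following the paper's probabilistic-method outline, for every \(\eta>0\) I would build a dataset \(D=\{(x_i,y_i)\}_{i=1}^N\) in \(\mathbb{R}^d\times\{\pm 1\}\) satisfying the three listed properties. I would place \(N\) points \(x_1,\dots,x_N\) in general position on a fixed \((d-1)\)-dimensional affine subspace \(H\subseteq\mathbb{R}^d\) (say along the moment curve in \(H\), so every \(d\)-subset is affinely independent inside \(H\)), and draw labels i.i.d.\ uniform in \(\{\pm 1\}\). Property~1 is deterministic: any \(d\) points in general position in \(H\) are shatterable by the \((d-1)\)-dim halfspaces of \(H\), which extend to \(\mathcal{H}_d\). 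Property~2 follows from Sauer's lemma plus Hoeffding and a union bound: halfspaces of \(\mathbb{R}^d\) restricted to \(H\) induce at most \(O(N^d)\) distinct dichotomies on the \(x_i\)'s, each of which agrees with more than a \((1/2+\eta)\)-fraction of the random labels with probability at most \(e^{-2N\eta^2}\), so \(N=\Omega(d\log(1/\eta)/\eta^2)\) suffices. Property~3 I would establish by perturbing each \(x_i\) off \(H\) along the unit normal with sign \(y_i\); the perturbed dataset \(D_\epsilon\) is then realized by the hyperplane \(H\) itself.

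With \(D\) in hand, the continuity step concludes the argument. Given any continuous proper learner \(A\) and a realizable family \(D_\epsilon\to D\) from property~3, fix any size-\(n\) subset \(S\subseteq D\) with label sequence \(\bar y\); property~1 places \(S\in R_{\bar y}\), openness of \(R_{\bar y}\) places the corresponding \(S_\epsilon\subseteq D_\epsilon\) there for small \(\epsilon\), and continuity of \(A\) yields convergence of the canonical parameters \(A(S_\epsilon)=(w_\epsilon,b_\epsilon)\to A(S)=(w^\star,b^\star)\). Property~2 gives \(L_D(w^\star,b^\star)\ge 1/2-\eta\), and provided that no \(x_j\in D\) lies on the limit hyperplane \(\{w^\star\cdot x+b^\star=0\}\), the per-example indicator losses are locally constant so that \(L_{D_\epsilon}(A(S_\epsilon))\to L_D(w^\star,b^\star)\ge 1/2-\eta\). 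Minimizing over the finitely many size-\(n\) subsets \(S\) and sending \(\eta\to 0\) yields \(\sup_{D'} L^\star_{D'}(n;A)\ge 1/2\).

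I expect the main obstacle to be precisely this boundary caveat: because every \(x_i\in D\) lies on \(H\), a pathological continuous learner could output an \(A(S)\) whose hyperplane coincides with \(H\) with a sign orientation matching the perturbation direction, in which case every \(x_j\in D\) sits on the limit hyperplane, the per-example convergence of errors breaks, and \(A(S_\epsilon)\) may classify \(D_\epsilon\) almost perfectly even though \(A(S)\) incurs error \(\approx 1/2\) on \(D\). I would address this by allowing the construction to depend on \(A\): for each label sequence \(\bar y\), one can first jiggle the \(x_i\)'s inside \(H\) so that no hyperplane output by \(A\) on a size-\(n\) subset of \(D\) with labels \(\bar y\) contains points of \(D\) outside that subset, and then choose the lifting direction off \(H\) to misalign with the (finitely many) such hyperplanes. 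Since properties~1--3 are open conditions on the configuration and lifting direction, a simultaneous choice preserves all three, and formalizing this genericity step into a clean argument is the technical heart of the proof.
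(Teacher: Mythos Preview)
Your plan is essentially the paper's proof: points in general position on the $(d-1)$-flat $H=\{x(d)=0\}$ (the paper draws them i.i.d.\ uniform in $[0,1]^{d-1}\times\{0\}$ rather than using the moment curve, but either gives Property~1), i.i.d.\ uniform $\pm1$ labels, Property~2 via VC uniform convergence (your Sauer--Hoeffding union bound is the same thing unpacked), and the lift $D_\epsilon=\{(x_i+y_i\epsilon e_d,y_i)\}$ for realizability; continuity of $A$ on each size-$n$ subset then transfers the $\tfrac12-\eta$ lower bound from $D$ to $D_\epsilon$.

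The one substantive divergence is the boundary step. Rather than tailoring the construction to $A$ as you propose, the paper handles points on the limit hyperplane by a counting argument: it asserts, from general position, that for $(w,b)=A(D')$ at most $d$ of the $x_i$ can satisfy $w\cdot x_i+b=0$, and simply absorbs these into an additive $d/N$ slack, concluding $L_{D_\epsilon}(A(D'_\epsilon))\ge\tfrac12-\eta-d/N$ for all small enough $\epsilon$ and then taking $\epsilon^\star$ below the minimum over the finitely many subsets. This replaces your $A$-dependent jiggling and choice of lifting direction with a one-line bound. Your specific worry---that the output hyperplane might coincide with $H$ itself, putting all $N$ points on the boundary---is not explicitly excluded by the paper's appeal to Property~1 (which constrains $d$-subsets, not $(d{+}1)$-subsets on a common hyperplane), so the paper is terse at exactly the spot you flag as delicate; but away from that degenerate case the $d/N$-slack argument is the intended route and is considerably lighter than the genericity step you outline.
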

\begin{proof}
The fact that the max-margin algorithm \(A^\star\) satisfies \(\sup_{D \in \mathtt{Real}(\mathcal{H}_d)} L^\star_D(n ; A^\star) = 0\) whenever \(n > d\) follows from \citep*{vapnik:74}; see also Appendix A of \citet*{LongL20}. It remains to show that when \(n \leq d\), \(\sup_{D \in \mathtt{Real}(\mathcal{H}_d)} L^\star_D(n ; A^\star) \leq \frac{1}{2}\) for the max-margin algorithm, and that \(\sup_{D \in \mathtt{Real}(\mathcal{H}_d)} L^\star_D(n ; A) \geq \frac{1}{2}\) for any continuous proper learner \(A\).

The first part follows from the observation that when presented with only positively labeled examples or only negatively labeled examples, 
the max-margin algorithm outputs the constant \(+1\) or \(-1\) hypothesis, respectively. 
Thus, by selecting a single example from \(D\) whose label matches the majority label, it follows that
$\sup_{D \in \mathtt{Real}(\mathcal{H}_d)} L^\star_D(n ; A^\star) \leq \frac{1}{2}.$

We now turn to the more challenging task of proving that for any continuous proper learner \( A \), we have \(\sup_{D \in \mathtt{Real}(\mathcal{H}_d)} L^\star_D(n ; A) \geq \frac{1}{2}\) whenever \( n \leq d \).
Let \( A \) be a continuous proper learner and let $\eta >0$. Using the probabilistic method, we construct a dataset \( D \) such that $L^\star_D(n ; A) \geq \frac{1}{2} - \eta$ as follows. For a large enough $N=N(\eta)$ (to be specified later) let
\( (x_1, y_1), \ldots, (x_N, y_N) \) be a sequence of IID examples drawn from the following distribution over \( \mathbb{R}^d \times \{\pm 1\} \): each label \( y_i \) is chosen uniformly at random from \( \{\pm 1\} \), and each feature vector \( x_i = (x_i(1), \ldots, x_i(d)) \) is sampled independently such that the last coordinate satisfies \( x_i(d) = 0 \) and each of the first \( d-1 \) coordinates \( x_i(j) \) is sampled uniformly from \( [0, 1] \) and independently of the others.
We claim that, with probability \(>0\), the dataset \(D\) satisfies the following properties:  
\begin{enumerate}
    \item Every subset of \(D\) containing at most \(d\) points is realizable by \(\mathcal{H}_{d}\).
    \item Every halfspace \(h \in \mathcal{H}_{d}\) has a classification error on \(D\) of at least \(\frac{1}{2} - \eta\).
\end{enumerate}
Let us begin with the first property. We claim that the first property holds with probability $1$. 
Let \( D' \) denote the \( (d-1) \)-dimensional dataset obtained by omitting the last coordinate from each point \( x_i \) in \( D \) (noting that this coordinate is \( 0 \) for all \( x_i \)'s). Specifically, \( D' = \{(x_i', y_i)\}_{i=1}^N \), where \( x_i' = (x_i(1), \ldots, x_i(d-1)) \). 
Let \(1 \leq i_1 < \ldots < i_d \leq N\) be \(d\) distinct indices.
By the hyperplane separation theorem, we have:
\begin{align*}
    \Pr[\{(x_{i_j}, y_{i_j})\}_{j=1}^d \text{ is not realizable by } \mathcal{H}_d] 
    &= \Pr\Bigl[\mathtt{conv}\{x_{i_j} : y_{i_j} = +1\} \cap \mathtt{conv}\{x_{i_j} : y_{i_j} = -1\} \neq \emptyset\Bigr] \\
    &\leq \Pr\Bigl[\{x_{i_j}'\}_{j=1}^d \text{ are affinely dependent in } \mathbb{R}^{d-1}\Bigr] \\
    &= 0.
\end{align*}
The inequality holds because if the convex hulls intersect, there exists a point \( x \) such that:
\[
x = \sum_{j : y_{i_j}=+1} \alpha_j x_{i_j} = \sum_{j : y_{i_j}=-1} \beta_j x_{i_j},
\]
where \(\alpha_j, \beta_j \geq 0\) and \(\sum_{j : y_{i_j}=+1} \alpha_j = \sum_{j : y_{i_j}=-1} \beta_j = 1\). Subtracting these and omitting the last coordinate gives:
\[
0 = \sum_{j : y_{i_j}=+1} \alpha_j x_{i_j}' - \sum_{j : y_{i_j}=-1} \beta_j x_{i_j}',
\]
which implies a non-trivial affine dependency among the \( x_{i_j}' \)'s. 
Since the \( x_i' \)'s are independent uniform samples from the continuous cube \([0,1]^{d-1} \subseteq \mathbb{R}^{d-1}\), with probability \(1\), any \(d\) of them are affinely independent~\citep[p.~3, General position]{Matousek2002}. Thus,
\[
\Pr[\{(x_{i_j}, y_{i_j})\}_{j=1}^d \text{ is not realizable by } \mathcal{H}_d] \leq \Pr\big[\{x_{i_j}'\}_{j=1}^d \text{ are affinely dependent in } \mathbb{R}^{d-1}\big] = 0.
\]
Since this holds for any choice of \(d\) distinct indices \(i_1, \ldots, i_d\), it also holds simultaneously for all such choices, as a finite union of measure zero events has measure zero. This establishes that the first property holds with probability \(1\).

We next show that the second property holds with high probability. Specifically, we prove:  
\[
\Pr\left(\exists h \in \mathcal{H}_{d} \,|\, L_D(h) < \frac{1}{2} - \eta\right) \to 0 \quad \text{as } N \to \infty.
\]
Let \(P\) denote the distribution over \(\mathbb{R}^d \times \{\pm 1\}\) from which \(D\) is sampled.  
Since the labels \(y\) are sampled uniformly from \(\{\pm 1\}\) and independently of \(x\), we have \(L_P(h) = \Pr_{(x,y)\sim P}[h(x)\neq y] = 1/2\). Therefore,  
\[
\Pr_{D \sim P^N}\left[\exists h \in \mathcal{H}_{d} \,|\, L_D(h) < \frac{1}{2} - \eta\right] \leq 
\Pr_{D \sim P^N}\left[\exists h \in \mathcal{H}_{d} \,|\, \lvert L_D(h) - L_P(h) \rvert > \eta\right],
\]
where the inequality holds because \(L_P(h) = 1/2\) for all \(h \in \mathcal{H}_d\).  
Since \(\mathcal{H}_d\) has finite VC dimension (specifically, \(\mathtt{vc}(\mathcal{H}_d) = d+1\)), uniform convergence guarantees that:  
\[
\Pr_{D \sim P^N}\left[\exists h \in \mathcal{H}_{d} \,|\, \lvert L_D(h) - L_P(h) \rvert > \eta\right] \to 0 \quad \text{as } N \to \infty,
\]
as desired.

We conclude that there exists a finite dataset \( D \) satisfying both properties:
\begin{enumerate}
    \item Every sub-dataset of \( D \) containing at most \( d \) points is realizable by \(\mathcal{H}_{d}\). In fact, every such sub-dataset has feature vectors that are affinely independent.
    \item Every halfspace \( h \in \mathcal{H}_{d} \) has a classification error on \( D \) of at least \( \frac{1}{2} - \eta \).
\end{enumerate}

For each \( \epsilon > 0 \), define a dataset \( D_\epsilon = \{(x_i + y_i \cdot \epsilon \cdot e_d, y_i)\}_{i=1}^N \), where \( e_d \) is the unit vector in the \(d\)-th direction. That is, the vector \( x_i + y_i \cdot \epsilon \cdot e_d \) has the same first \( d-1 \) coordinates as \( x_i \), while its last coordinate is shifted by \( y_i \cdot \epsilon \). This transformation "lifts" the points with label \( +1 \) and "lowers" the points with label \( -1 \). Observe that for every \( \epsilon > 0 \), the dataset \( D_\epsilon \) is realizable by the halfspace \( (x(1), \ldots, x(d)) \mapsto \mathtt{sign}(x(d)) \). Note also that when \( \epsilon = 0 \), we recover \( D_0 = D \).

Now, let \( D' \subseteq D \) be a sub-dataset of size \( n \), and let \( D'_\epsilon \subseteq D_\epsilon \) be the corresponding sub-dataset of \( D_\epsilon \). Denote by \( (w, b) = A(D') \) and \( (w_\epsilon, b_\epsilon) = A(D'_\epsilon) \) the parameters of the halfspace output by \( A \) when applied to \( D' \) and \( D'_\epsilon \), respectively. 

For any \( x \in \mathbb{R}^d \) such that \( w \cdot x + b \neq 0 \), continuity implies that for every sufficiently small \( \epsilon = \epsilon(x) > 0 \), we have:
\[
w_\epsilon\cdot x +b_\epsilon \neq 0 \quad\text{ and } \quad
\mathtt{sign}(w \cdot x + b) = \mathtt{sign}(w_\epsilon \cdot x + b_\epsilon).
\]
By the first property of \( D \), there are at most \( d \) feature vectors \( x_i \) in \( D \) such that \( w \cdot x_i + b = 0 \). Therefore, for every sufficiently small \( \epsilon = \epsilon(D') > 0 \), we obtain:
\[
L_{D_\epsilon}(A(D'_\epsilon)) \geq \frac{1}{2} - \eta - \frac{d}{N}.
\]
Finally, by choosing \( \epsilon^\star > 0 \) smaller than \( \epsilon(D') > 0 \) for all sub-datasets \( D' \subseteq D \) of size \( n \), we construct a realizable dataset \( D_{\epsilon^\star} \) such that:
\[
L_{D_{\epsilon^\star}}(n; A) \geq \frac{1}{2} - \eta - \frac{d}{N}.
\]
The proof concludes by noting that \( \eta \) can be made arbitrarily small and \( N \) arbitrarily large.

\end{proof}

\subsubsection{Proof of Theorem~\ref{thm:binary}}

\begin{theorem*}[Restatement of Theorem~\ref{thm:binary}]
Every hypothesis class \(\mathcal{H} \subseteq \{\pm 1\}^X\) satisfies exactly one of the following:
\begin{enumerate}
    \item \(\mathtt{R}_{\mathcal{H}}^\star(n) = 1\), for all \(n \in \mathbb{N}\). \hfill (Trivial Rate)
    \item \(\frac{C_1}{n} \leq \mathtt{R}_{\mathcal{H}}^\star(n) \leq  \frac{C_2 \cdot\log n}{n}\), for all \(n \in \mathbb{N}\). Here \( C_1=C_1(\H),C_2=C_2(\H)\) are positive constants that depend on \(\mathcal{H}\) (but not on $n$). \hfill (Linear Rate)
    \item \(\mathtt{R}_{\mathcal{H}}^\star(n) = 0\), for all sufficiently large \(n \geq n_0(\mathcal{H})\). \hfill (Zero Rate)
\end{enumerate}
\end{theorem*}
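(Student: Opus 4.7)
The overall strategy is to reduce the quantity $\mathtt{R}_\H^\star(n)$ to the minimum size of an $\varepsilon$-net for a set family $\mathcal{F}(\H, D)$ built from pairwise disagreements of hypotheses on the dataset $D$. The trichotomy then follows by combining this reduction with classical results characterizing $\varepsilon$-net sizes in terms of VC dimension and star number, exactly as the proof sketch in the excerpt already foreshadows.

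\textbf{Reduction to $\varepsilon$-nets.} Fix a dataset $D$ and let $h^\star \in \arg\min_{h \in \H} L_D(h)$. I would first handle the realizable case $L_D(h^\star)=0$: for any sub-dataset $S$ of size $n$, the worst adversarial ERM returns the element of $\H$ consistent with $h^\star$ on $S$ that is most wrong on $D$. To guarantee regret below $\varepsilon$, $S$ must intersect the symmetric-difference set $R_h=\{x\in D : h(x)\neq h^\star(x)\}$ for every $h\in\H$ with $|R_h|/|D|\geq\varepsilon$. This is exactly the $\varepsilon$-net condition for the family $\mathcal{F}(\H, D)=\{R_h : h\in\H\}$ under the uniform distribution on $D$. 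The agnostic case reduces to the realizable one by passing to the symmetric-difference class $\H\,\triangle\,\H$: an $h\in\H$ can beat $h^\star$ on $S$ only if $S$ fails to witness the excess-loss-generating subset on which $h^\star$ is right and $h$ is wrong, up to the symmetric counterpart; a careful accounting shows that bounding regret by $\varepsilon$ is again equivalent to hitting all members of a disagreement family of mass $\gtrsim \varepsilon$.

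\textbf{The three regimes.} For Item 1, if $\mathtt{vc}(\H)=\infty$, for any $n$ and $\eta>0$ one takes $D$ to be a set shattered by $\H$ of size $N \gg n/\eta$, labeled by a fixed shattering witness; shattering then supplies, for any $n$-element selection $S$, a hypothesis in $\H$ that agrees with $h^\star$ exactly on $S$, so an adversarial ERM can force regret $(N-n)/N\geq 1-\eta$, yielding $\mathtt{R}_\H^\star(n)=1$. For Item 3, when the star number $\mathfrak{s}(\H)$ is finite, a maximal sub-family of $\mathcal{F}(\H, D)$ that is pairwise disjoint on $h^\star$ is exactly a star centered at $h^\star$, hence of size $\leq \mathfrak{s}(\H)$; picking one point from each element of such a sub-family gives a \emph{universal} hitting set of size $\leq \mathfrak{s}(\H)$, so $n\geq\mathfrak{s}(\H)$ already forces zero regret. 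Conversely, if $\mathfrak{s}(\H)=\infty$, embedding a star of size $n+1$ in $D$ forces any $n$-element selection to miss at least one arm, leaving one star hypothesis indistinguishable on $S$ from $h^\star$ and producing regret $\Omega(1/n)$; this is the lower bound in Item 2. The upper bound in Item 2 follows by applying the Haussler--Welzl $\varepsilon$-net theorem to $\mathcal{F}(\H, D)$ (whose VC dimension is controlled by $\mathtt{vc}(\H)$): an $\varepsilon$-net of size $O\!\bigl(\mathtt{vc}(\H)\,\varepsilon^{-1}\log\varepsilon^{-1}\bigr)$ always exists, and inverting this bound gives $\mathtt{R}_\H^\star(n) = O(\mathtt{vc}(\H)\log n / n)$. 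Finally, finite VC dimension together with infinite star number is the only remaining case, consistent with the taxonomy.

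\textbf{Main obstacle.} The delicate step is the reduction to $\varepsilon$-nets in the agnostic setting: regret is a difference of empirical losses that does not localize to a single disagreement set in an obvious way, and the family $\mathcal{F}(\H, D)$ depends on the data-dependent optimum $h^\star$, so one has to argue that the family's $\varepsilon$-net size can be uniformly controlled by combinatorial parameters of $\H$ alone. Once this reduction is nailed down, the three regimes assemble from off-the-shelf facts about shattering, star numbers, and the Haussler--Welzl $\varepsilon$-net theorem.
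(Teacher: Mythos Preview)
Your overall architecture matches the paper's: reduce $\mathtt{R}_\H^\star(n)$ to $\varepsilon$-nets for the disagreement family, then split into the four cases (finite/infinite star number, finite/infinite VC). Your treatments of infinite VC and infinite star number are essentially identical to the paper's Cases~4 and~2. Two points deserve comment.

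\textbf{Agnostic reduction.} You propose to pass to $\H\,\triangle\,\H$ and do a ``careful accounting'' of excess loss. The paper avoids this entirely: given arbitrary $D$, it takes a \emph{maximal realizable} subdataset $D'\subseteq D$ (equivalently, the set of examples correctly classified by an optimal $h^\star$), so that $L_D^\star = (|D|-|D'|)/|D|$. One then selects the $\varepsilon$-net from $D'$ only; since $D'$ is realizable, any ERM on the selected points is consistent with them, and the $\varepsilon$-net property on $\operatorname{err}(\H\mid D')=\{\,\{z\in D':h(x)\neq y\}:h\in\H\}$ forces error $\le\varepsilon$ on $D'$, hence regret $\le\varepsilon$ on $D$. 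This sidesteps the difficulty you flag as the main obstacle.

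\textbf{A gap in your finite-star-number argument.} You claim that a maximal pairwise-disjoint subfamily of $\mathcal{F}(\H,D)$ has size $\le\mathfrak{s}(\H)$ (true), and that choosing one point from each member yields a hitting set for all of $\mathcal{F}(\H,D)$. The second claim is false in general. Take $D=\{1,2,3,4\}$, $h^\star$ the all-$+$ labeling, and $R_{h_1}=\{1,2\}$, $R_{h_2}=\{3,4\}$, $R_{h_3}=\{1,3\}$. Then $\{R_{h_1},R_{h_2}\}$ is maximal pairwise disjoint, but picking $\{2,4\}$ misses $R_{h_3}$. The existence of an $\varepsilon$-net of size $\le\mathfrak{s}_\mathcal{F}$ is a genuine (if short) theorem of \cite{hanneke:15b}, which the paper simply cites; its proof is not via a disjoint-family argument. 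You should either invoke that result directly or supply a correct proof.
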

We will rely on basic results from the theory of $\varepsilon$-nets for families with bounded VC dimension and star number. We begin with some useful definitions.

\begin{definition}[$\varepsilon$-nets] \label{def:eps:set}
    Let $X$ be a set, let $\mathcal{F}\subseteq 2^X$ be a family of subsets of $X$, and let \(P\)
    be a distribution over $X$. An \(\varepsilon\)-net for \(\mathcal{F}\) with respect to \(P\) is a set \(S \subseteq X\) such that for every \(F \in \mathcal{F}\),
    \[
        P(F) > \varepsilon \quad \Rightarrow \quad F \cap S \neq \emptyset.
    \]
    In other words, \(S\) intersects every set in \(\mathcal{F}\) that has probability measure greater than \(\varepsilon\).
\end{definition}

\begin{definition}[Star Number for Set Families] \label{def:star:set} 
Let \(X\) be a set and let $\mathcal{F}$ be a family of subsets of $X$.
The star number of $\mathcal{F}$ ,  denoted $s_\mathcal{F}$,  is the largest $n \in \mathbb{N}$  such that there exist points $x_1, \dots, x_n \in \mathcal{X}$  satisfying
\[
(\forall i \in \{0, \dots, n\})(\exists S_i \in \mathcal{F}):  S_i \cap \{x_1, \dots, x_n \} = \{x_i\}.
\]
If no such largest $n$ exists, define $\mathfrak{s}_\mathcal{F}=\infty$.
\end{definition}

\begin{definition}[Star Number for Hypothesis Classes] \label{def:star:hc} Let \(\mathcal{X}\) be a set and let hypothesis class $\mathcal{H}\subseteq \{\pm1\}^\mathcal{X}$ and let $h\in\mathcal{H}$. The star number of $\mathcal{H}$  centered at $h$,  denoted $s_h = s_h(\mathcal{H})$,  is the largest $n \in \mathbb{N}$  such that there exist points $x_1, \dots, x_n \in \mathcal{X}$  satisfying
\[
\forall i \in \{0, \dots, n\}, \ \exists h_i \in \mathcal{H} \text{ such that } \forall j \in \{1, \dots, n\}, \ h_i(x_j) = h(x_j) \iff j \neq i.
\]
 If no such largest $n$  exists, define $s_h = \infty$. The star number of $\mathcal{H}$  is defined as
\(\mathfrak{s}_\mathcal{H} = \sup_{h \in \mathcal{H}} s_h.\)
\end{definition}
We will rely on the following upper bounds on $\varepsilon$-nets in terms of the VC dimension and star number:
\begin{theorem}[$\varepsilon$-nets \citep*{haussler:87,hanneke:15b}]\label{thm:epsnet}
Let $\mathcal{F}\subseteq 2^X$ be a family of sets. Then,
\begin{enumerate}
    \item If $\mathfrak{s}_\mathcal{F} <\infty$ then for every distribution $P$ over $X$ and every $\varepsilon>0$,
    there exists an $\varepsilon$-net for $\mathcal{F}$ with respect to $P$ of size at most $\mathfrak{s}_\mathcal{F}$.
    \item If  \(\mathtt{vc}(\mathcal{F}) < \infty\) then for every distribution $P$ over $X$ and every $\varepsilon>0$,
    there exists an $\varepsilon$-net for $\mathcal{F}$ with respect to $P$ of size at most $O(\frac{\mathtt{vc}(\mathcal{F})\log(1/\varepsilon)}{\varepsilon})$, where the big oh conceals a multiplicative universal numerical constant.
\end{enumerate}
\end{theorem}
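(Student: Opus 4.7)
The plan is to prove the two items via distinct techniques: a probabilistic double-sample argument for item~2, and a greedy extremal argument for item~1. I sketch each in turn.

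For item~2 (the VC bound), I would follow the classical Haussler--Welzl probabilistic method. Draw an i.i.d.\ sample $S$ of size $m$ from $P$, and let $\mathcal{B}$ denote the ``bad'' event that $S$ misses some $F\in\mathcal{F}$ with $P(F)>\varepsilon$. Introduce an independent ghost sample $S'$ of the same size and symmetrize: conditional on $\mathcal{B}$, whenever $m\geq 8/\varepsilon$, Chebyshev's inequality yields that with probability at least $1/2$ the offending $F$ satisfies $|F\cap S'|\geq \varepsilon m/2$. Thus it suffices to upper bound the probability of this joint event. By Sauer--Shelah, the number of distinct traces on $S\cup S'$ is at most $(2em)^{\vc(\mathcal{F})}$; and for each fixed trace that is missed by $S$ but hit by at least $\varepsilon m/2$ points of $S'$, the conditional probability (given the multiset $S\cup S'$) that a random $S$/$S'$ split places all those hits on the $S'$ side is at most $2^{-\varepsilon m/2}$. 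A union bound, together with the choice $m=\Theta(\vc(\mathcal{F})\log(1/\varepsilon)/\varepsilon)$, drives this quantity below $1$, so by the probabilistic method a good $S$ exists.

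For item~1 (the star number bound), let $\mathcal{F}_\varepsilon=\{F\in\mathcal{F}:P(F)>\varepsilon\}$, and consider a \emph{maximal} star configuration inside $\mathcal{F}_\varepsilon$: a set of points $x_1,\dots,x_k$ together with witness sets $F_1,\dots,F_k\in\mathcal{F}_\varepsilon$ satisfying $F_i\cap\{x_1,\dots,x_k\}=\{x_i\}$ for each $i$. By the definition of the star number, $k\leq\mathfrak{s}_\mathcal{F}$. The claim to establish is that $S:=\{x_1,\dots,x_k\}$ is already an $\varepsilon$-net for $\mathcal{F}$; if not, there is $F\in\mathcal{F}_\varepsilon$ with $F\cap S=\emptyset$, and one would like to extend the configuration by appending a point $y\in F$ (with witness $F$), thereby contradicting maximality.

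The main obstacle lies in the extension step of item~1. After adding $y$, each old witness $F_i$ must still satisfy $F_i\cap(S\cup\{y\})=\{x_i\}$, which forces $y\notin F_i$ for every $i$; a careless $y$ could lie in some $F_i$, and in the worst case $F\subseteq\bigcup_i F_i$ leaves no safe $y\in F$ at all. I would handle this by relaxing the notion of ``maximal'' so that the witnesses themselves may be swapped as the configuration grows---for example, replacing a clashing $F_j$ by the new candidate $F$ (and $x_j$ by a carefully chosen $y\in F\cap F_j\setminus\bigcup_{i\neq j}F_i$) when direct extension is blocked. Making this case analysis exhaustive, possibly through induction on $k$ or via a linear-programming duality reformulation of the hitting-set problem for $\mathcal{F}_\varepsilon$, is the heart of the proof of item~1.
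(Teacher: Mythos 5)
Your sketch of item~2 is the standard Haussler--Welzl double-sampling argument and is correct in outline. Note, though, that the paper does not actually prove either item: the theorem is cited to prior work (Haussler for the VC bound, Hanneke--Yang for the star-number bound), so there is no in-paper proof to compare against.

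The genuine issue is item~1, and the approach you chose does not go through. You correctly diagnose the failure mode: extending a maximal star $\{x_1,\dots,x_k\}$ by a point $y$ from an unhit $F\in\mathcal{F}_\varepsilon$ requires $y\notin\bigcup_i F_i$, and nothing prevents $F\subseteq\bigcup_i F_i$ with every point of $F$ covered by two or more of the $F_i$'s --- which also defeats the single-swap repair you sketch, since there is then no $y\in F\cap F_j\setminus\bigcup_{i\neq j}F_i$ for any $j$. Gesturing at ``induction on $k$'' or ``LP duality'' does not close this gap; a maximal star need not be a hitting set at all, and maximality is simply not the right extremal principle here.

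The correct extremal object is the dual one: take a \emph{minimal} $\varepsilon$-net $H$ for $\mathcal{F}_\varepsilon$ rather than a maximal star. (A finite $\varepsilon$-net exists to minimize from, for instance by item~2, since a shattered set of size $d$ immediately gives a star of size $d$, so finite star number forces finite VC dimension.) Minimality of $H$ means that for every $x\in H$ the set $H\setminus\{x\}$ fails to be an $\varepsilon$-net, so some $F_x\in\mathcal{F}_\varepsilon$ satisfies $F_x\cap(H\setminus\{x\})=\emptyset$; since $H$ itself is an $\varepsilon$-net, $F_x\cap H=\{x\}$. These witnesses are precisely a star configuration on the points of $H$, so $\lvert H\rvert\leq\mathfrak{s}_\mathcal{F}$, and $H$ is already the desired $\varepsilon$-net. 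No extension step, no clashes, no swaps: minimality supplies the star for free, and the obstacle you were fighting never arises.
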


\noindent To establish Theorem~\ref{thm:binary}, we consider 4 cases:

\begin{enumerate}
    \item If \(\mathfrak{s}_\mathcal{H} < \infty\), then $\mathtt{R}_{\mathcal{H}}^\star(n)=0$ for every $n\geq \mathfrak{s}_\mathcal{H}$.
    
    \item If \(\mathfrak{s}_\mathcal{H} = \infty\), we show that \(\mathtt{R}_{\mathcal{H}}^\star(n)\) is lower bounded by \(\frac{1}{n+1}\) for all $n$.
    
    \item If the VC dimension of \(\mathcal{H}\) is finite, i.e., \(\mathtt{vc}(\mathcal{H}) = d < \infty\) then \(\mathtt{R}_{\mathcal{H}}^\star(n)\leq O(\frac{d \log n}{n})\) for all $n$.
    
    \item If \(\mathtt{vc}(\mathcal{H}) = \infty\), we prove that \(\mathtt{R}_{\mathcal{H}}^\star(n)=1\) for all $n$.
\end{enumerate}
These statements together establish the proof of Theorem~\ref{thm:binary}. 

\paragraph{Case 1: \(\mathfrak{s}(\mathcal{H}) < \infty\).} 
Given a dataset \(D\), consider a maximal realizable subdataset \(D' \subseteq D\). Noting that the optimal loss on \(D\) satisfies 
\[L^\star_D = \min_{h\in\mathcal{H}}L_D(h) =  \frac{\lvert D\rvert - \lvert D'\rvert}{\lvert D\rvert}.\]
Consider the following family of datasets:
\[
\operatorname{err}(\mathcal{H} \mid D') = \bigl\{\operatorname{err}(h \mid D') : h \in \mathcal{H} \bigr\}, \] 
where $\operatorname{err}(h \mid D') = \{(x,y) \in D' : h(x) \neq y\}$ is the subdataset of $D'$ which
$h$ classifies incorrectly.
Notice that the star number of \(\operatorname{err}(\mathcal{H} \mid D')\) is at most \(\mathfrak{s}_\mathcal{H}\). Let $P$ be the uniform distribution over $D'$.
By Theorem~\ref{thm:epsnet}, for every \(\varepsilon > 0\), there exists an \(\varepsilon\)-net \(D'_\varepsilon\subseteq D'\) for \(\operatorname{err}(\mathcal{H} \mid D')\) with respect to $P$ of size at most \(\mathfrak{s}_\mathcal{H}\).  Let $\mathcal{A}$ be an ERM for $\mathcal{H}$ and let $h=\mathcal{A}(D'_\varepsilon)$ denote the output hypothesis of $\mathcal{A}$ when given the $\varepsilon$-net as input.
Thus, by the $\varepsilon$-net property it follows that $L_{D'}(h)\leq \varepsilon$. Consequently, the total error on the full dataset \(D\) is bounded by \(L^\star_D + \varepsilon.\) Since this holds for any \(\varepsilon > 0\), we obtain \(\mathtt{R}_{\mathcal{H}}^\star(n, A) < \varepsilon\), implying that \(\mathtt{R}_{\mathcal{H}}^\star(n, A) = 0\), thus establishing the claim.

\paragraph{Case 2: \(\mathfrak{s}_\mathcal{H} = \infty\).} 
Given \(n\), select a dataset \(D\) that forms the center of a star-set of size \(n+1\). Such a dataset is realizable. Now, consider an ERM  that, whenever possible, avoids selecting the center hypothesis—specifically, when presented with any proper subdataset of \(D\), it chooses a hypothesis that is inconsistent with at least one point in \(D\). Consequently, that learner will misclassify at least one of the \(n+1\) points, leading to an error rate of at least \(\frac{1}{n+1}\). This establishes the desired lower bound:
\[
\mathtt{R}_{\mathcal{H}}^\star(n) \geq \frac{1}{n+1}.
\]

\paragraph{Case 3: \(\mathtt{vc}(\mathcal{H}) = d < \infty\).} 
The argument here is identical to the argument in Case 1. The only difference is that we invoke the 2nd Item of Theorem~\ref{thm:epsnet} which applies to the VC dimension. Specifically, given a selection budget of $n$ points, we obtain an $\varepsilon(n)$-net $D'_\varepsilon\subseteq D'$ for
\[
\varepsilon(n) = O\left(\frac{d \log (n/d)}{n} + \frac{\log (1/\delta)}{n} \right).
\]
such that applying any ERM on $D'_\varepsilon\subseteq D'$ yields an hypothesis whose loss is at most
$L^\star_D + \varepsilon(n)$.

Using this bound, we obtain an upper bound of the form \(\mathtt{R}_{\mathcal{H}}^\star(n) = O(\frac{d \log n}{n})\), establishing the linear rate.

\paragraph{Case 4: \(\mathtt{vc}(\mathcal{H}) = \infty\).} 
Following a similar approach to the \(\mathfrak{s}(\mathcal{H}) = \infty\) case, we now leverage the fact that \(\mathcal{H}\) has infinite VC dimension. Specifically, for any given \(n\) and arbitrarily small \(\varepsilon > 0\), we select a shattered set of size \(N = n/\varepsilon\). 
Next, construct a dataset \(D\) on this shattered set in which all points are labeled \(+1\). Consider a learner that, for any point in \(D\) not included in its training subset, always predicts \(-1\). Since the dataset is shattered, there exists a hypothesis in \(\mathcal{H}\) consistent with any labeling of \(D\), meaning the learner's choice is valid within the hypothesis class. By construction, the learner correctly classifies only the \(n\) training points, while misclassifying the remaining \(N - n\) points. This results in an error rate of \(\frac{N - n}{N} = 1 - \frac{n}{N} = 1 - \varepsilon.\) Taking the limit as \(\varepsilon \to 0\) yields a lower bound of \(1\), establishing the claim:
\[
\mathtt{R}_{\mathcal{H}}^\star(n) \geq 1.
\]
\subsection{Stochastic Convex Optimization}

\subsubsection{Proof of Theorem~\ref{thm:linreg}}

\begin{theorem*}[Theorem~\ref{thm:linreg} Restatement]
Let \(A^\star\) denote the min-norm ERM. Then,
\[
\sup_{D} \frac{L^\star_D(n ; A)}{L^\star_D} =
\begin{cases}
1 & \text{if } n {\geq 2d}, \\
d+1 & \text{if } n = d, \\
\infty & \text{if } n < d,
\end{cases}
\]
where \(D\) ranges over all finite datasets and \( L^\star_D = \inf_{w \in \mathcal{W}} L_D(w) \) denotes the optimal loss. Above, the ratio \(\frac{L^\star_D(n ; A)}{L^\star_D}\) is defined to be \(1\) when both the numerator and denominator are \(0\). If only the denominator is \(0\), the ratio is defined as \(\infty\).

Furthermore, the lower bound in the cased $n<d$ holds for every weakly continuous ERM.
\end{theorem*}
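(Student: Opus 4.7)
The plan is to treat the three cases separately, since each requires a different technique: a Steinitz-style sparsification for $n \geq 2d$, a volume-sampling argument for $n = d$, and a continuity-based impossibility adapted from Theorem~\ref{thm:linbinary} for $n < d$.

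For $n \geq 2d$: Proposition~\ref{prop:car} does not apply directly, since the squared losses $\ell_z(w) = (w \cdot x - y)^2$ are convex but not strictly convex --- they are flat in directions orthogonal to $x$. Instead, I would start from the first-order optimality conditions for the min-norm ERM $w^\star$ on $D$: one has $\sum_i \lambda_i \nabla \ell_{z_i}(w^\star) = 0$ with $\lambda_i \geq 0$ summing to $1$, together with the constraint $w^\star \in \mathrm{span}\{x_i\}$. Applying Steinitz's theorem to the gradient vectors $\nabla \ell_{z_i}(w^\star) \in \mathbb{R}^d$ yields a subset of at most $2d$ of them whose convex hull still contains $0$ in its relative interior, producing a weighted subset of at most $2d$ examples on which the same $w^\star$ remains the min-norm ERM, hence matching $L^\star_D$. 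The subtle case is when some $\nabla \ell_{z_i}(w^\star)$ vanish at $w^\star$; I would handle this separately, achieving optimality with at most $d + d' \leq 2d$ examples, where $d'$ is the dimension of the span of the nonzero gradients.

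For $n = d$: The plan is to invoke the volume-sampling construction of \citet*{Derezinski17}. Sampling a subset of size $d$ with probability proportional to the squared determinant of the corresponding feature Gram matrix and applying the min-norm ERM yields, in expectation, a predictor with $D$-loss at most $(d+1) L^\star_D$, from which an averaging argument extracts a deterministic subset achieving the same bound. Tightness is established by an explicit construction showing that no selection of $d$ points can beat the $d+1$ factor. For the lower bound when $n < d$: when a weakly continuous ERM $A$ is trained on an $n$-subset $S$ with linearly independent features, the affine subspace of empirical minimizers is $(d-n)$-dimensional, and $A(S)$ depends continuously on $S$ alone, so it is oblivious to the remaining data. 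By varying examples in $D \setminus S$ while keeping $S$ fixed, the unique $D$-fit sweeps out a positive-dimensional family, so one can exhibit a realizable $D$ (with $L^\star_D = 0$) on which $A(S)$ disagrees with the $D$-fit for every $n$-subset $S$ in general position, forcing $L^\star_D(n; A) > 0$ and an infinite multiplicative ratio. A probabilistic construction in the spirit of the proof of Theorem~\ref{thm:linbinary} is used to enforce this simultaneously across all $n$-subsets of $D$.

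The main obstacle I expect is the $n \geq 2d$ case: combining the Steinitz sparsification of the gradient zero-sum condition with the span condition of the min-norm ERM, and in particular handling the degenerate case where some gradients vanish at $w^\star$ without exceeding the $2d$ budget, requires the most delicate bookkeeping.
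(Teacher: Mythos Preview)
Your proposal is correct and follows essentially the same approach as the paper: Steinitz's theorem on the gradients at $w^\star$ (with the span/min-norm constraint and the separate handling of vanishing gradients via a $d+d'$ budget) for $n\geq 2d$, the volume-sampling result of \citet{Derezinski17} plus an explicit tight construction for $n=d$, and the probabilistic dataset from the proof of Theorem~\ref{thm:linbinary} combined with an $\epsilon$-perturbation and continuity for $n<d$. The only minor difference is presentational: for $n<d$ the paper goes directly through the classification-error lower bound (each misclassified point contributes regression loss at least $1$) rather than your ``vary $D\setminus S$ while fixing $S$'' framing, but both routes land on the same construction and the same continuity argument.
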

The proof proceeds in two steps: we first derive an upper bound for each of the three cases—\( n \geq 2d \), \( n = d \), and \( n < d \)—and establish show the corresponding lower bounds. 
\begin{remark}\label{rmk:fulldim}  
Without loss of generality, we restrict our analysis to datasets \( D \) whose feature vectors \( \{x \mid (x,y)\in D\} \) span \( \mathbb{R}^d \). Indeed, if the feature vectors lie in a lower-dimensional subspace, we can analyze the problem within that subspace instead. This does not compromise generality, since for any \( w\in\mathbb{R}^d \), we have \( L_D(w) = L_D(w_{||}) \), where \( w_{||} \) is the projection of \( w \) onto the space spanned by the feature vectors (see Lemma~\ref{lem:minnorm}).
\end{remark}
\subsubsection*{Upper Bounds}
\paragraph{Case 1: \( n \geq 2d \)}  
Had the loss functions in linear regression been strictly convex, we could have directly applied Proposition~\ref{prop:car}, yielding a stronger result with a selection budget of \( d+1 \) rather than \( 2d \). However, since the loss functions are not strictly convex and the empirical risk minimizer is not unique, additional care is required to handle this case.  

In linear regression, the loss function for an example \( z = (x, y) \) is given by  
\[
\ell_z(w) = (w \cdot x - y)^2,
\]  
where \( w \in \mathbb{R}^d \). While this function is convex in \( w \), it is not strictly convex for \( d > 1 \) and has multiple empirical risk minimizers. Among them, the min-norm ERM \( A^\star \) selects the solution \( w^\star \) with the smallest Euclidean norm. A key property of this solution is that it always lies within the span of the input vectors \(\{x_i \mid (x_i, y_i) \in D\}\). This is formalized in the following lemma:  

\begin{lemma}\label{lem:minnorm}
Let \( D = \{(x_i, y_i)\}_{i=1}^N \). The min-norm minimizer of $L_D(\cdot)$ is unique and it belongs to  
\(\mathrm{span}\bigl(\{x_i \mid i = 1, \ldots, N\}\bigr).\)
Moreover, any minimizer of $L_D(\cdot)$ can be expressed as \( w_{||} + w_{\perp} \), where  
\(
w_{\perp} \perp \mathrm{span}\bigl(\{x_i \mid i = 1, \ldots, N\}\bigr), 
\)  
and $w_{||}$ is the projection of $w$ on $\mathrm{span}\bigl(\{x_i \mid i = 1, \ldots, N\}\bigr)$.
\end{lemma}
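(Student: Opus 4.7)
The plan is to exploit the fact that $\ell_z(w) = (w\cdot x - y)^2$ depends on $w$ only through the inner product $w\cdot x$. Let $V = \mathrm{span}\bigl(\{x_i : 1\leq i \leq N\}\bigr)$ and decompose any $w\in\mathbb{R}^d$ as $w = w_{||} + w_{\perp}$ with $w_{||}\in V$ and $w_{\perp}\in V^{\perp}$. Since $w\cdot x_i = w_{||}\cdot x_i$ for every $i$, we have $L_D(w) = L_D(w_{||})$. This immediately yields the ``moreover'' clause of the lemma and also shows that whenever $w$ minimizes $L_D$, its projection $w_{||}$ onto $V$ is also a minimizer.

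Next I would verify that the restriction $L_D\big|_V$ is strictly convex, so that it has a unique minimizer $w^{\star}\in V$. The Hessian of $L_D$ is $H = \frac{2}{N}\sum_{i=1}^{N} x_i x_i^{\top}$; for any nonzero $v\in V$, $v^{\top}H v = \frac{2}{N}\sum_{i=1}^{N}(v\cdot x_i)^{2}$, and this is strictly positive because a nonzero vector of $V$ cannot be orthogonal to every $x_i$ (these vectors span $V$). Thus $L_D\big|_V$ is a strictly convex quadratic. Together with coercivity on $V$ (a standard consequence of positive definiteness of $H$ on $V$), it attains a unique minimum at some $w^{\star}\in V$.

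Finally I would combine these pieces: any minimizer $w$ of $L_D$ on $\mathbb{R}^d$ satisfies $L_D(w_{||}) = L_D(w)$, so $w_{||}$ is also a minimizer and, lying in $V$, must coincide with $w^{\star}$ by uniqueness on $V$. Hence the set of all minimizers is exactly $M = w^{\star} + V^{\perp}$. For any $v\in V^{\perp}$, the Pythagorean identity gives $\|w^{\star} + v\|^{2} = \|w^{\star}\|^{2} + \|v\|^{2}$, which is strictly larger than $\|w^{\star}\|^{2}$ unless $v = 0$; hence $w^{\star}$ is the unique element of $M$ of smallest Euclidean norm, belongs to $V$, and is therefore the unique min-norm ERM. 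There is no real obstacle here; the only point that deserves minor care is ensuring that $L_D\big|_V$ actually attains its minimum, which follows from strict convexity and coercivity of a positive-definite quadratic on the finite-dimensional space $V$.
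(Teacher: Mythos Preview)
Your proposal is correct and follows essentially the same route as the paper: orthogonal decomposition $w = w_{||} + w_{\perp}$, the observation that $L_D(w) = L_D(w_{||})$, and the Pythagorean identity to conclude that the min-norm minimizer lies in $V$. Your argument is in fact more complete than the paper's, which leaves the uniqueness of the minimizer within $V$ implicit; your Hessian computation showing that $L_D\big|_V$ is strictly convex fills that gap cleanly.
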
  
\begin{proof}
The proof follows from basic linear algebra: any vector \( w \) can be decomposed as \( w = w_{||} + w_{\perp} \), where \( w_{||} \) lies in the span of \( \{x_i \mid i = 1, \ldots, N\} \) and \( w_{\perp} \) is orthogonal to it. Since only the component within the span contributes to the loss \( L_D(w) \), both \( w \) and \( w_{||} \) achieve the same loss. However, \( w_{||} \) has a strictly smaller Euclidean norm, implying that the min-norm loss minimizer must lie within the span. Furthermore, any other loss minimizer can differ from \( w \) only by a component orthogonal to the span, completing the proof.
\end{proof}
By Lemma~\ref{lem:minnorm}, it suffices to find a dataset of \( n \) examples \( D' = \{(x_{i_j}, y_{i_j})\}_{j=1}^n \) such that $w^\star$ minimizes $L_{D'}(\cdot)$ and
\[
w^\star \in \mathrm{span}\bigl(\{x_{i_j} \mid j = 1, \ldots, n\}\bigr).
\]
We establish this using an argument similar to the one in the proof of Proposition~\ref{prop:car}, namely, by applying Carathéodory’s theorem to the gradients. A naive application of Carathéodory’s theorem, as in the proof of Proposition~\ref{prop:car}, guarantees the existence of a weighted subdataset of size \( d+1 \) for which \( w^\star \) minimizes the corresponding weighted loss function. However, the issue is that other minimizers may exist, and some may have smaller norm then \(w^\star\). To ensure uniqueness, we show that selecting \( 2d \) points instead of \( d+1 \) suffices. Moreover, this increase in the selection budget is necessary, as demonstrated in Example~\ref{ex:requirement_n_greater_d}.

By Remark~\ref{rmk:fulldim}, we may assume that \(\mathrm{span}(\{x_i \mid (x_i, y_i) \in D\}) = \mathbb{R}^d\), and hence the minimizer of $L_D(\cdot)$ is unique and equals to $w^\star$. Since the function \( L_D(\cdot) \) is differentiable, we have \( 0 = \nabla L_D(w^\star) \), where \( \nabla L_D(w^\star) \) denotes the gradient of $L_D(\cdot)$ at $w^\star$. By linearity,
\[
0 = \sum_{i=1}^N \frac{1}{N} \nabla \ell_{z_i}(w^\star).
\]
Observe that for \( z = (x, y) \) the gradient of $\ell_z(\cdot)$ is parallel to $x$, specifically: \( \nabla \ell_z(w) = c\cdot x \), where $c$ is the scalar \( c = 2 \operatorname{sign}(wx - y) \sqrt{\lvert \ell_{z}(w) \rvert} \). 
We split the proof into two cases. First, we assume that \( \{\nabla\ell_{z_i}(w^\star)\} \) spans \( \mathbb{R}^d \) and proceed with the proof under this assumption. Later, we will address the complementary case where \( \{\nabla\ell_{z_i}(w^\star)\} \) does not span \( \mathbb{R}^d \) and explain how to handle it.


In the first case, $0$ is an interior point of \(\mathtt{conv}(\{\nabla\ell_{z_i}(w^\star) : 1\leq i \le N\})\), because it is average of all points \(\nabla\ell_{z_i}(w^\star)\) and they span the \(\mathbb{R}^d\). We proceed by employing a variant of Carath\'eodory's theorem, due to Steinitz \citep[p.~8]{matousek}, originally from \cite{Steinitz1916}.
\begin{theorem*}[\cite{Steinitz1916}]
    Consider $X \subset \mathbb{R}^d$ and $x$ a point in the interior of the convex hull of $X$. Then, $x$ belongs to the interior of the convex hull of a set of at most $2d$ points of $X$.
\end{theorem*}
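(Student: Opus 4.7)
The plan is to prove Steinitz's theorem by induction on the dimension $d$, after translating so that $x = 0$. In the base case $d = 1$, if $0 \in \mathrm{int}(\mathtt{conv}(X))$ inside $\mathbb{R}$, then $X$ must contain both a strictly positive and a strictly negative element, and taking one of each produces a subset $Y$ of size $2 = 2d$ with $0 \in \mathrm{int}(\mathtt{conv}(Y))$, completing the base.

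For the inductive step, I would fix any unit vector $n \in \mathbb{R}^d$ and let $\pi$ denote orthogonal projection onto the hyperplane $H = n^\perp$. Because $\pi$ is an open linear map and $0$ lies in the interior of $\mathtt{conv}(X)$, we obtain $0 \in \mathrm{int}_H(\mathtt{conv}(\pi(X)))$ inside $H \cong \mathbb{R}^{d-1}$. Applying the inductive hypothesis in $H$, I would extract points $y_1, \ldots, y_{2d-2} \in X$ whose projections $\pi(y_1), \ldots, \pi(y_{2d-2})$ contain $0$ in the interior of their convex hull in $H$. Separately, since $0 \in \mathrm{int}(\mathtt{conv}(X))$ in $\mathbb{R}^d$, the set $X$ must contain points $y^+, y^-$ with $\langle y^+, n\rangle > 0$ and $\langle y^-, n\rangle < 0$; otherwise all of $X$ would lie in a closed halfspace through $0$. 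The candidate subset would then be $Y = \{y_1, \ldots, y_{2d-2}, y^+, y^-\}$, of size at most $2d$.

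It remains to verify $0 \in \mathrm{int}(\mathtt{conv}(Y))$. Full-dimensionality of $\mathtt{conv}(Y)$ is automatic, since $\pi(y_1), \ldots, \pi(y_{2d-2})$ already span $H$ and $y^+$ contributes a nonzero component in the $n$-direction. To place $0$ in the relative interior, I would exhibit a strictly positive convex combination $0 = \sum_i \lambda_i y_i + \mu^+ y^+ + \mu^- y^-$: starting from the strictly positive combination $0 = \sum_i \bar\lambda_i \pi(y_i)$ supplied by the inductive hypothesis, pick small $\mu^+, \mu^- > 0$ satisfying $\mu^+ \langle y^+, n\rangle + \mu^- \langle y^-, n\rangle = 0$ (feasible because $\langle y^+, n\rangle$ and $\langle y^-, n\rangle$ have opposite signs), and absorb the induced $H$-perturbation $-(\mu^+ \pi(y^+) + \mu^- \pi(y^-))$ by perturbing the $\bar\lambda_i$'s, which remain strictly positive provided $\mu^\pm$ are small enough.

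The main obstacle is exactly this last simultaneous balancing: the $n$-component equation and the $(d-1)$ $H$-component equations must be solved together while every coefficient stays strictly positive and the total sums to $1$. Because the inductive representation $\bar\lambda_i$ is strictly positive, there is genuine slack in each coordinate, so the construction reduces to a standard linear-algebraic perturbation argument; still, this is where careful bookkeeping is required to pin down the admissible range of $\mu^\pm$.
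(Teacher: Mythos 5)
The paper does not actually prove this theorem; it is cited as a classical result of Steinitz, with a pointer to Matou\v{s}ek's book. So the comparison below is against the standard proof, not a proof in the paper.

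Your inductive step has a genuine gap: you never kill the $n$-component of the combination. After applying the inductive hypothesis in $H$, you have strictly positive $\bar\lambda_i$ with $\sum_i \bar\lambda_i \pi(y_i) = 0$, but the vector $\sum_i \bar\lambda_i y_i$ itself equals $c\,n$ for some scalar $c = \sum_i \bar\lambda_i \langle y_i, n\rangle$, which is generically nonzero. Your balancing constrains $\mu^+\langle y^+,n\rangle + \mu^-\langle y^-,n\rangle = 0$, so the $y^\pm$-terms contribute nothing in the $n$-direction, and perturbing the $\lambda_i$'s by a small amount to absorb the $H$-perturbation changes the $n$-component only slightly. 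The total $n$-component therefore stays near $c \ne 0$, and the combination is not $0$. Insisting the $\mu^\pm$ be small is precisely what prevents them from canceling $c$; dropping that restriction makes the $H$-correction to $\lambda_i$ large, which can push some $\bar\lambda_i$ negative. A concrete instance in $\mathbb{R}^2$ shows this is not fixable by bookkeeping: take $X = \{(-1,5),(1,5),(3,1),(3,-1),(-3,-1)\}$ and $n = e_2$. Here $0$ is interior to $\mathtt{conv}(X)$, the projections of $y_1=(-1,5),\ y_2=(1,5)$ surround $0$ in $H$, and $y^+=(3,1),\ y^-=(3,-1)$ have the required signs, yet $0$ is not even in $\mathtt{conv}(\{y_1,y_2,y^+,y^-\})$; replacing $y^-$ by $(-3,-1)$ puts $0$ on the boundary but still not in the interior, because the only solution to the linear system forces $\lambda_1=\lambda_2=0$.

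The standard inductive proof reduces differently and avoids this problem. Apply Carath\'eodory to write $0 = \sum_{i=1}^{k}\lambda_i x_i$ with $\lambda_i > 0$ and $k$ minimal; the $x_i$ affinely span a linear subspace $V$ through $0$ of dimension $k-1 \le d$. If $V = \mathbb{R}^d$ you are done with $k \le d+1 \le 2d$ points. Otherwise pass to the quotient $\mathbb{R}^d / V$ (not a generic hyperplane projection), where the images of $X$ still contain $0$ in the interior of their convex hull, and apply induction to get at most $2(d-\dim V)$ more points $q_1,\dots,q_m$ whose images give a strictly positive combination of $0$. Their actual sum $w = \sum \mu_j q_j$ lies in $V$, and the crucial step — the one your argument is missing — is to absorb $w$ using the slack in $V$: since $0$ is in the \emph{relative interior} of $\mathtt{conv}(x_1,\dots,x_k)$ inside $V$, one can write $-tw$ as a strictly positive convex combination of the $x_i$ for small $t>0$, and then $\sum\lambda'_i x_i + t\sum\mu_j q_j = 0$ with all coefficients positive. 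The count $(\dim V + 1) + 2(d - \dim V) \le 2d$ closes the induction. Two structural differences from your attempt: the ``extra'' subspace is quotiented out rather than projected onto, so the lower-dimensional combination genuinely contributes a vector lying in $V$ rather than a stray $n$-component you cannot control; and the correction is supplied by the relative-interior slack in $V$, not by bolting on two fresh points.
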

By applying Steinitz's theorem, we can find a subdataset $D'$ for which the corresponding gradients \(\{\nabla \ell_{z_{i_j}}(w^\star)\}\) still contain \(0\) as a \(d\)-interior point of their convex hull. In particular $D'$ must be full dimensional (i.e.\ its feature vectors span $\mathbb{R}^d$) and hence, by Lemma~\ref{lem:minnorm}, \(w^\star\) remains the \underline{unique} minimizer for this subdataset (since the feature vectors of the subdataset span \(\mathbb{R}^d\)). Thus, this subdataset achieves the desired bound.

In the second case, suppose that \( \{\nabla\ell_{z_i}(w^\star)\} \) does not span \( \mathbb{R}^d \), and let \( V \) be the subspace it spans. Consider the subdataset consisting of points whose loss has a non-zero gradient on $w^\star$ \( \{z_{i_j} \mid \ell_{z_{i_j}}(w^\star) \neq 0 \} \). The function \( w^\star \) remains an ERM on this subdataset, but it is not necessarily the min-norm ERM. Let \( w'^\star \) denote the min-norm ERM computed on this subdataset.
Since \( V \) has dimension \( d' < d \), the first part of the proof guarantees the existence of a weighted dataset of \( 2d' \) points whose feature vectors span \( V \) and for which the min-norm ERM outputs \( w'^\star \){, while Lemma~\ref{lem:minnorm} guarantees that \(w^\star\) is also an ERM for this weighted dataset}. 
We augment this set by adding \( d - d' \) points for which \( \ell_{z}(w^\star) = 0 \), ensuring that the full set of \( d + d' \) points spans \( \mathbb{R}^d \). Since adding realizable points does not affect the gradient value at \( w^\star \), it remains an ERM for this augmented dataset. By Lemma~\ref{lem:minnorm}, \( w^\star \) is now the unique minimizer, implying that it is also the minimum-norm solution.

\paragraph{Case 2: \(\boldsymbol{n = d}\).}  
We now show that for any dataset \(D\) in \(\mathbb{R}^d \times \mathbb{R}\) with \(n = d\) {examples}, the ratio \(\sup_{D} \frac{L^\star_D(d ; A)}{L^\star_D}\) is at most \(d+1\). 
The result follows directly from Theorem~5 of \citet{Derezinski17}:
\begin{theorem*}[Theorem 5, \citet{Derezinski17}]  
If the input matrix \(\mathbf{X} \in \mathbb{R}^{d \times n}\) is in general position, then for any label vector \(\mathbf{y} \in \mathbb{R}^{n}\), the expected squared loss (over all \(n\) labeled vectors) of the optimal solution \(\mathbf{w}^*(S)\) for the subproblem \((\mathbf{X}_S, \mathbf{y}_S)\), where the \(d\)-element subset \(S\) is obtained via volume sampling, satisfies:

\[
\mathbb{E}[L(\mathbf{w}^*(S))] = (d + 1) L(\mathbf{w}^*).
\]
If \(\mathbf{X}\) is not in general position, the expected loss is upper-bounded by \((d + 1) L(\mathbf{w}^*)\).
\end{theorem*}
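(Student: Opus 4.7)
The plan is to prove the expected-loss identity $\mathbb{E}_S[L(\mathbf{w}^*(S))] = (d+1)L(\mathbf{w}^*)$ by a determinantal double-counting argument. Three algebraic ingredients drive the proof: (i) the Cauchy--Binet formula $\sum_{|S|=d}\det(X_S)^2 = \det(XX^T)$, which gives the normalizing constant of the volume-sampling distribution; (ii) a Schur-complement identity relating each out-of-subset residual to a determinant of an augmented matrix; and (iii) a combinatorial double-count pairing $d$-subsets $S$ together with an out-of-subset index $i\notin S$ against $(d+1)$-subsets $T$.

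Let $\tilde X\in\mathbb{R}^{(d+1)\times n}$ denote the augmented matrix obtained by appending $y^T$ as a new row below $X$. When $X_S$ is invertible, the least-squares solution is $\mathbf{w}^*(S) = X_S^{-T}y_S$, which interpolates the labels on $S$, so only the out-of-subset residuals $r_i(S) := y_i - x_i^T\mathbf{w}^*(S)$ for $i\notin S$ are nonzero. Applying the Schur-complement formula to the $(d+1)\times(d+1)$ matrix consisting of $X_S$ augmented with column $x_i$ on the right and row $(y_S^T, y_i)$ on the bottom yields $\det(\tilde X_{S\cup\{i\}}) = \pm\det(X_S)\cdot r_i(S)$ (the sign depending on column orderings), giving the key identity
\[
\det(\tilde X_{S\cup\{i\}})^2 \;=\; \det(X_S)^2\cdot r_i(S)^2.
\]

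Combining the pieces, write $L(\mathbf{w}^*(S)) = \tfrac{1}{n}\sum_{i\notin S} r_i(S)^2$ and multiply by $P(S)\cdot\det(XX^T) = \det(X_S)^2$; summing over $S$ then yields
\[
n\cdot\det(XX^T)\cdot\mathbb{E}_S\bigl[L(\mathbf{w}^*(S))\bigr] \;=\; \sum_{S:\,\det(X_S)\ne 0}\ \sum_{i\notin S}\det(\tilde X_{S\cup\{i\}})^2,
\]
where the restriction arises because volume sampling assigns zero probability to rank-deficient $S$. Extending the sum to all $d$-subsets only increases the right-hand side, after which each $(d+1)$-subset $T$ is counted exactly $d+1$ times (once per $i\in T$ with $S = T\setminus\{i\}$), producing the bound $(d+1)\sum_{|T|=d+1}\det(\tilde X_T)^2$. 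A second application of Cauchy--Binet, this time to $\tilde X$, identifies this sum with $(d+1)\det(\tilde X\tilde X^T)$; a Schur-complement expansion of $\det(\tilde X\tilde X^T)$ around its $XX^T$ block produces $\det(XX^T)\cdot\bigl(\|y\|^2 - y^T X^T(XX^T)^{-1}Xy\bigr)$, and the quantity in parentheses equals exactly $n\cdot L(\mathbf{w}^*)$ by the normal equations. Dividing through by $n\cdot\det(XX^T)$ yields the upper bound $(d+1)L(\mathbf{w}^*)$, with equality precisely when every $(d+1)$-subset $T$ having $\det(\tilde X_T)\ne 0$ contains no singular $d$-subset, which is the general-position hypothesis.

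The main obstacle will be tracking the sign conventions in the Schur-complement residual identity (column reorderings introduce signs that cancel on squaring, but deserve careful verification) and handling the degenerate case cleanly: the upper-bound-versus-equality distinction hinges on the fact that a $(d+1)$-subset $T$ may have $\det(\tilde X_T)\ne 0$ even when some $d$-subset $S\subset T$ is rank-deficient, and such $T$ are uncounted in the expectation on the left but counted on the right. Edge cases where $X$ itself is not full row-rank (so $\det(XX^T)=0$ and volume sampling is ill-defined) reduce to the full-rank setting by restricting to the column span of $X$, as justified by Remark~\ref{rmk:fulldim}.
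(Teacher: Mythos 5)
The paper does not prove this statement; it imports it verbatim as Theorem~5 of Dereziński and Warmuth (2017) and uses it as a black box in Case~2 of the proof of Theorem~\ref{thm:linreg}, so there is no in-paper argument to compare against. Your reconstruction is, to the best of my checking, a faithful and correct rendering of the original determinantal argument: the residual--volume identity $\det(\tilde X_{S\cup\{i\}})^2 = \det(X_S)^2\,r_i(S)^2$ follows from the Schur complement exactly as you describe, the double count over pairs $(S,i)$ is right, Cauchy--Binet applied to $\tilde X$ gives $\det(\tilde X\tilde X^T)$, and the final Schur expansion recovers $n\,\det(XX^T)\,L(\mathbf{w}^*)$. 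One small imprecision: your stated equality condition (``every $(d+1)$-subset $T$ with $\det(\tilde X_T)\neq 0$ contains no singular $d$-subset'') is strictly weaker than general position (all $d\times d$ minors of $X$ nonzero); it is \emph{implied by} general position rather than being equivalent to it, which is all the theorem needs, but the phrase ``which is the general-position hypothesis'' should be softened to ``which holds in particular when $X$ is in general position.''
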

Above, the subproblem \( (X_S, y_S) \) corresponds to applying an ERM on the subdataset \( S \).  
Volume sampling refers to selecting \( S \) from a distribution over all subdatasets of \( D \) of size \( n = d \). Specifically, each subdataset \( S \) is sampled with probability proportional to the volume of the parallelogram spanned by the feature vectors \( \{x : (x, y) \in S\} \).  
As a result, only subdatasets \( S \) whose feature vectors form a basis of \( \mathbb{R}^d \) are sampled. This ensures that every ERM applied to \( S \) returns the unique hypothesis that interpolates the data.  
Overall, this establishes, via a probabilistic argument, that there exists a subdataset \( S \) of size \( n = d \) such that any ERM applied to it returns a hypothesis whose loss is at most \( d+1 \) times the optimal solution \( L_D^\star \).  



\paragraph{Case 3: \(n < d\).}  
For \(n < d\), we only need to prove a lower bound.

\subsubsection*{Lower Bounds}
We now construct examples showing that each of the upper bounds in Theorem~\ref{thm:linreg} is tight.

\paragraph{Case 1: \(n \geq 2d\).}  
    Trivially, for any finite dataset \(D\) and any $n$, we have \(L^\star_D(n; A) \ge L^\star_D\).

\paragraph{Case 2: \(n = d\).}
    Consider the following dataset $D$ consisting of $N=d+1$ examples:
    \[
      D \;=\; \bigl\{(e_i, i \cdot c)\bigr\}_{i=1}^d \;\cup\; \bigl\{(-\sum_{i=1}^d e_i,\, -c \cdot \frac{d(d+1)}{2} -d-1)\bigr\},
    \]
    where \(\{e_i\}_{i=1}^d\) denotes the standard basis of \(\mathbb{R}^d\) and \(c\) is any number greater than \(\sqrt{2}(d+1)\).
A standard calculation shows that the unique optimal solution for the entire dataset is
\[
  w^\star \;=\; (c+1,\, 2c+1,\, \ldots,\, dc+1),
\]
and that the optimal loss is \(L^\star_D = 1\), because each data point contributes a regression loss of \(1\). 

    Next, we show that for any proper subdataset \( S \subset D \), the regression error is at least \( d+1 \). First, observe that if the dataset \(S\) has $d$ distinct examples, then the regression function will incur a loss of \( (d+1)^2 \) on the excluded data point, leading to the desired total loss.
    If \(S\) has less than $d$ distinct examples, we consider two cases:
    
    \begin{enumerate}
    \item[(i)] If \(\bigl\{(-\sum_{i=1}^d e_i,\, -c \cdot \frac{d(d+1)}{2} -d-1)\bigr\} \notin S\), then the regression function defined on \(S\) takes the form 
    \[
    w = \operatorname{mask}_S(c, 2c, \dots, dc),
    \]
    where \(\operatorname{mask}_S\) denotes an operation that sets certain coordinates to zero, specifically those corresponding to any basis vector \( e_i \) for which \( (e_i, ic) \notin S \). This results in an error of at least \( c^2 \) on one of the data points. By the selection of \( c \), \( c^2 > (d+1)^2 \), thereby the desired error bound is acieved.
    \item[(ii)] If \(\bigl\{(-\sum_{i=1}^d e_i,\, -c \cdot \frac{d(d+1)}{2} -d-1)\bigr\} \in S\), then at least two basis vectors, say \( e_i \) and \( e_j \), have their corresponding data points excluded from the dataset. By Lemma~\ref{lem:minnorm}, the min-norm hypothesis $w$ has the same value on both indices, i.e., \( w(i) = w(j) = a \). The sum of the losses on these two excluded data points is given by
    \[
    (iC - a)^2 + (jc - a)^2 \geq \frac{c^2}{2} \geq (d+1)^2.
    \]
    Thus, the total loss remains at least \( d+1 \), ensuring the required error bound.
\end{enumerate}
    We conclude that this dataset \(D\) has \(\frac{L_D^\star(d, A)}{L_D^\star} = d+1\), as stated.



 \paragraph{Case 3: \(n < d\).} 
For each \( \eta > 0 \), we apply the construction from Theorem~\ref{thm:linbinary}, which provides a dataset \( D = \{(x_i, y_i)\} \) satisfying:
\begin{enumerate}
    \item The last coordinate of each \(x_i\) is zero.
    \item Every subset of \(D\) containing at most \(d\) points has feature vectors that are linearly independent.\footnote{In the proof of Theorem~\ref{thm:linbinary}, we argued that any subset consisting of fewer than \(d+1\) points is affinely independent because the dataset \(D\) is drawn at random, and this occurs with probability one. Similarly, with probability one, any set of \(n < d\) points is linearly independent.}
    \item Every homogeneous halfspace in \(\mathcal{H}_d\) has a classification error of at least \(\frac12 - \eta\) on \(D\).
\end{enumerate}

Note that any homogeneous halfspace naturally corresponds to the linear functional defined by its normal vector. Specifically, the halfspace is defined by the equation:
\[
  \langle w, x \rangle \geq 0,
\]
where \(w \in \mathbb{R}^d\) is the normal vector. In this setting, the classification zero-one error on an example \((x_i, y_i)\)  is determined by whether the sign of \(\langle w, x_i \rangle\) matches the label \(y_i\), i.e.,  
\[
\text{sign}(\langle w, x_i \rangle) \neq y_i.
\]
The corresponding regression error on that example is given by \((\langle w, x_i \rangle - y_i)^2\). Notice that each misclassified point—where \(\text{sign}(\langle w, x_i \rangle) \neq y_i\)—incurs a regression error of at least \(1\), since
\[
  (\langle w, x_i \rangle - y_i)^2 \geq 1.
\]
This holds because if \(\text{sign}(\langle w, x_i \rangle) \neq y_i\), then \(\langle w, x_i \rangle\) and \(y_i\) have opposite signs, ensuring that their squared difference is at least \(1\). Consequently, any linear regressor applied to the dataset \(D = \{(x_i, y_i)\}\) incurs a total loss of at least \( \frac{1}{2} - \eta.\)  

Meanwhile, any subset of \(D\) containing fewer than \(d\) points consists of linearly independent feature vectors, and hence there exists a linear regressor that perfectly interpolates that subset (i.e., achieves zero loss).  

To show that the ratio \(\frac{L_D^\star(n;A)}{L_D^\star}\) can be made arbitrarily large, we apply a similar perturbation as in the proof of Theorem~\ref{thm:linbinary}. For each \(\epsilon > 0\), define:
\[
  D_\epsilon 
  \;=\; \bigl\{(x_i + y_i\,\epsilon\,e_d,\; y_i)\bigr\}_{i=1}^N,
\]
where \(e_d\) is the \(d\)-th standard basis vector in \(\mathbb{R}^d\). By continuity, every weakly continuous ERM continues to have non-negligible loss on \(n\)-sized subsets of \(D_\epsilon\) for \(n < d\) and sufficiently small \(\epsilon > 0\). Yet, the entire dataset \(D_\epsilon\) is perfectly realizable by the linear function \(w(x) = \tfrac{1}{\epsilon}x(d)\). Consequently,
\[
  L_{D_\epsilon}^\star \;=\; 0
  \quad \text{while} \quad
  L_{D_\epsilon}^\star(n; A) >\; \frac12 - \eta,
\]
for some small \(\epsilon\), and therefore:
\[
  \sup_D\frac{L_{D}^\star(n; A)}{L_{D}^\star} 
  \;=\; \infty.
\]

\subsubsection{Analysis of Examples for Theorem~\ref{thm:sco}}\label{app:examples}

\vspace{-3pt}\begin{example}[Strict Convexity Requirement]  
\label{ex:strict_convexity}
We provide a \(2\)-dimensional construction, which can be extended to higher dimensions.  

For a vector \(v \in \mathbb{R}^d\), define \(f_v : \mathbb{R}^d \to \mathbb{R}\) by  
\[
f_v(x) =  
\begin{cases}  
0, & \text{if } |\langle v, x \rangle| \leq 1, \\  
|\langle v, x \rangle| - 1, & \text{otherwise},  
\end{cases}  
\]  
where \(\langle v, x \rangle\) is the scalar product of \(v\) and \(x\).  
Note that \(f_v\) is convex but not strictly convex.  

Let \(v_1, v_2, v_3 \in \mathbb{R}^2\) be such that the triangle \(v_1 v_2 v_3\) is regular, and consider the dataset \(D = \{f_{v_1}, f_{v_2}, f_{v_3}\}\) and the function \(f = \frac{f_{v_1} + f_{v_2} + f_{v_3}}{3}\).  
The function \(f\) achieves its global minimum with value \(0\), and the set of global minimizers forms a regular hexagon.  
Moreover, any convex combination of just two functions in \(D\) results in a function whose set of global minimizers is a parallelogram, which strictly contains the hexagon.  
Thus, an ERM \(A^\star\) that outputs a point in the parallelogram but outside the hexagon witnesses an unbounded ratio.  
\begin{figure}[t]
\centering
\begin{tikzpicture}[scale=2]
    \coordinate (A) at (0, 1);
    \coordinate (B) at ({-sqrt(3)/2}, -0.5);
    \coordinate (C) at ({sqrt(3)/2}, -0.5);

    \foreach \i in {0, 1, 2, 3, 4, 5} {
        \coordinate (P\i) at ({cos(60 * \i) * 2/sqrt(3)}, {sin(60 * \i) * 2/sqrt(3)});
    }

    \coordinate (PB) at ({-sqrt(3)}, 1);
    \coordinate (PC) at ({sqrt(3)}, -1);
    
    \fill[blue!70!black, opacity=0.5] (P1) -- (PB) -- (P4) -- (PC) -- cycle;
    \draw[thick, blue!70!black] (P1) -- (PB) -- (P4) -- (PC) -- cycle;
    
    \draw[thick, red] (P0) -- (P1) -- (P2) -- (P3) -- (P4) -- (P5) -- cycle;
    \fill[red, opacity=0.7] (P0) -- (P1) -- (P2) -- (P3) -- (P4) -- (P5) -- cycle;

    \draw[->, thick] (0, 0) -- (A) node[midway, left] {$\mathbf{v}_1$};
    \draw[->, thick] (0, 0) -- (B) node[midway, below left] {$\mathbf{v}_2$};
    \draw[->, thick] (0, 0) -- (C) node[midway, below right] {$\mathbf{v}_3$};

    \node[circle, fill=blue, scale=0.5] at (A) {};
    \node[circle, fill=blue, scale=0.5] at (B) {};
    \node[circle, fill=blue, scale=0.5] at (C) {};

    \foreach \i in {0, 1, 2, 3, 4, 5} {
        \node[circle, fill=red, scale=0.3] at (P\i) {};
    }
\end{tikzpicture}
\caption{A 2D illustration for Example~\ref{ex:strict_convexity}. The three vectors
$(\mathbf{v}_1,\mathbf{v}_2,\mathbf{v}_3)$ define convex functions $f_{\mathbf{v}_1}$,
$f_{\mathbf{v}_2}$, $f_{\mathbf{v}_3}$. The red hexagon marks the common zero set
(minimizers) when all three functions are combined. Removing $f_{\mathbf{v}_3}$ enlarges
the zero set to the blue region (a rhombus), showing how the feasible set for an ERM grows
once strict convexity is violated.}
\label{fig:strict_convexity}
\end{figure}
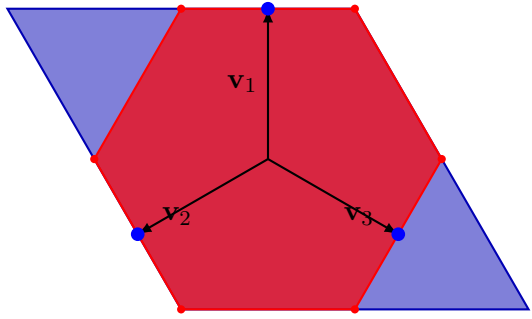
\end{example}

\begin{example}[Necessity of the \(n > d\) Assumption]\label{app:requirement_n_greater_d}
\label{ex:requirement_n_greater_d}
 We construct an example illustrating that when \(n \leq d\), the loss \(L^\star_D(n; A^\star)\) can significantly exceed the loss \(L^\star_D\).


Consider a set of \( d+1 \) points \( \{v_1, \dots, v_{d+1}\} \) in \( \mathbb{R}^d \) arranged as a regular simplex, with each point positioned at a distance of \( 1 \) from the hyperplane spanned by the remaining \( d \) points. For each vertex \( v_i \), define the loss function as 
\[
f_{v_i}(w) = \|w - v_i\|^P,
\]
where \( P > 2 \).

Now, suppose we exclude a point \( v_i \) when forming a subdataset. 
In this case, the minimizer \( w \) must lie within the hyperplane spanned by the remaining \( d \) points: indeed, if \( w \) had a nonzero orthogonal component, say \( w = w_{||} + w_{\perp} \) (where \( w_{||} \) lies in the hyperplane and \( w_{\perp} \) is the orthogonal component), then by the Pythagorean theorem, the loss of \( w_{||} \) would be strictly smaller than that of \( w \).

Since \( w \) lies in the hyperplane, the loss at the omitted point \( v_i \) satisfies \( f_{v_i}(w) \geq 1 \), contributing at least \( \frac{1}{d+1} \) to the loss. Thus, we obtain the lower bound:
\[
  L^\star_D(n; A^\star) \geq \frac{1}{d+1}.
\]
When all \( d+1 \) functions are included, the global minimizer is the centroid of the simplex, leading to:
\[
  L^\star_D = \biggl(\frac{d}{d+1}\biggr)^P.
\]
To establish that \( L^\star_D(n; A^\star) > C \cdot L^\star_D \), we require:
\[
  \frac{1}{d+1} > C \cdot \biggl(\frac{d}{d+1}\biggr)^P.
\]
Notice that this inequality indeed holds for a sufficiently large \( P \). Therefore, when \( n \leq d \), the ratio \( \frac{L^\star_D(n; A^\star)}{L^\star_D} \) can be made arbitrarily large, establishing the necessity of the \( n > d \) condition for the guarantees in Theorem~\ref{thm:sco}.
\end{example}


\subsection{High-Dimensional Mean Estimation}\label{sec:p20p21}

\begin{proposition}\label{prop:subsampling-bounds}
    Consider the stochastic convex optimization problem of estimating the mean of a distribution over \(\mathbb{R}^d\) with squared loss \(\ell_z(h) = \|z - h\|^2\). 
    For every \( n \geq 1 \) and \( d \geq 1 \), let~\( L_D^\star \) denote the optimal loss over the dataset \( D \subseteq \mathbb{R}^d \), and \( L_D^\star(n) \) the optimal loss achieved when selecting only~\(n\) datapoints from \( D \).
    Then, the following bounds hold:
    \[
        \frac{2n}{2n - 1} \leq \sup_{D \subseteq \mathbb{R}^d} \frac{L_D^\star(n)}{L_D^\star} \leq \frac{n + 1}{n}.
    \]
\end{proposition}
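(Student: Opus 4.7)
The plan is to prove the two bounds separately: the upper bound will come from a short, dimension-free probabilistic argument, while the lower bound will be inherited from the one-dimensional extremal construction already used in Theorem~\ref{t:mean}.

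For the lower bound, I will take the 1D extremal dataset $D_0 \subseteq \mathbb{R}$ from the proof of Theorem~\ref{t:mean} -- namely, $2n-1$ copies of $0$ together with one copy of $1$ -- and view it as a subset of $\mathbb{R}^d$ via the embedding $x \mapsto (x,0,\ldots,0)$. Since the squared loss $\|z-h\|^2$ decomposes coordinatewise and only the first coordinate carries any spread, both $L_D^\star$ and the best $n$-point average in $\mathbb{R}^d$ reduce to the 1D quantities analyzed in Theorem~\ref{t:mean}, yielding the ratio $\frac{2n}{2n-1}$ in every dimension $d \geq 1$.

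For the upper bound, I will use the $d$-dimensional identity from Lemma~\ref{lem:13},
\[
L_D(h) \;=\; L_D^\star + \|h - \mu_D\|^2,
\]
where $\mu_D$ is the mean of $D$. This reduces the task to exhibiting $n$ points $z_1,\ldots,z_n \in D$ whose average $\bar z$ satisfies $\|\bar z - \mu_D\|^2 \leq L_D^\star/n$. I will argue by probabilistic selection: draw $Z_1,\ldots,Z_n$ i.i.d.\ uniformly from $D$ and set $\bar Z = \frac{1}{n}\sum_i Z_i$. By independence and linearity,
\[
\mathbb{E}\|\bar Z - \mu_D\|^2 \;=\; \frac{1}{n}\,\mathbb{E}\|Z_1 - \mu_D\|^2 \;=\; \frac{L_D^\star}{n},
\]
so $\mathbb{E}\bigl[L_D(\bar Z)\bigr] = \frac{n+1}{n}\,L_D^\star$, and hence some deterministic realization must attain $L_D(\bar z) \leq \frac{n+1}{n}\,L_D^\star$.

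I do not anticipate a substantive obstacle. The one genuinely subtle planning point is choosing the right strategy for the upper bound: a direct appeal to Proposition~\ref{prop:car} (Carath\'eodory for convex functions) would reduce to at most $d+1$ support points and thereby leak a dimension-dependent factor, whereas the simple variance identity $\operatorname{Var}(\bar Z) = \operatorname{Var}(Z_1)/n$, combined with the affine loss decomposition in Lemma~\ref{lem:13}, yields the sharp dimension-free constant $\frac{n+1}{n}$. It is precisely this mismatch that leaves open the true behavior of $\sup_{D\subseteq \mathbb{R}^d} L_D^\star(n)/L_D^\star$ for fixed intermediate $d$.
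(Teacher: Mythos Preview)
Your proposal is correct and follows essentially the same approach as the paper: the lower bound is obtained by embedding the one-dimensional extremal dataset from Theorem~\ref{t:mean} into $\mathbb{R}^d$, and the upper bound is obtained by the same probabilistic argument—drawing $n$ points i.i.d.\ uniformly from $D$, applying Lemma~\ref{lem:13}, and using the variance identity $\mathbb{E}\|\bar Z-\mu_D\|^2=\tfrac{1}{n}L_D^\star$ to conclude that some realization achieves loss at most $\tfrac{n+1}{n}L_D^\star$.
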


\begin{proof}
    We begin with the lower bound:
    \[
        \frac{2n}{2n - 1} \leq \sup_{D \subseteq \mathbb{R}^d} \frac{L_D^\star(n)}{L_D^\star}.
    \]
    To establish this, it suffices to provide, for every \(n\) and \(d\), a distribution \(D\) supported on \(\mathbb{R}^d\) such that the ratio \(\frac{L_D^\star(n)}{L_D^\star}\) is at least \(\frac{2n}{2n - 1}\). The one-dimensional construction used in Theorem~\ref{t:mean} — namely, the dataset with \(2n - 1\) copies of \(0\) and a single copy of \(1\) — achieves this lower bound. Since this dataset can be embedded in \(\mathbb{R}^d\) for any \(d\) (by padding with zeros), the same ratio is preserved. Hence, the lower bound holds in all dimensions.

    We now turn to the upper bound:
    \[
        \sup_{D \subseteq \mathbb{R}^d} \frac{L_D^\star(n)}{L_D^\star} \leq \frac{n + 1}{n}.
    \]
    Consider a dataset \(D \subseteq \mathbb{R}^d\) and let \(\mathcal{D}\) be the uniform distribution over \(D\). Then the optimal \(n\)-point subset average minimizes the loss:
    \[
        L_D^\star(n) = \min_{z_1,\ldots,z_n \in D} L_D\left( \frac{1}{n} \sum_{i=1}^n z_i \right).
    \]
    In particular, since the minimum is always less than or equal to the expectation, we have:
    \[
        L_D^\star(n) \leq \mathbb{E}_{z_1, \ldots, z_n \sim \mathcal{D}} L_D\left( \frac{1}{n} \sum_{i=1}^n z_i \right).
    \]
    By Lemma~\ref{lem:13}, \(L_D(h) = \| h - \mu_D \|^2 + L_D^\star\), and therefore
    \[
        \mathbb{E}_{z_1, \ldots, z_n} L_D\left( \frac{1}{n} \sum_{i=1}^n z_i \right) = L_D^\star + \mathbb{E} \left\| \frac{1}{n} \sum_{i=1}^n z_i - \mu_D \right\|^2.
    \]
    The second term is the variance of the average of \(n\) i.i.d.\ draws from \(D\), which is equal to \(\frac{1}{n} \cdot \mathrm{Var}(D)= \frac{1}{n}\cdot L_D^\star\) (see Proposition~\ref{prop:car}). Therefore,
    \[
        L_D^\star(n) \leq L_D^\star + \frac{1}{n} \cdot L_D^\star.
    \]
    Dividing both sides by \(L_D^\star\) completes the proof of the upper bound.
\end{proof}

\begin{proposition}\label{prop:asymptotic-tightness}
    Under the same setting as Proposition~\ref{prop:subsampling-bounds}, the upper bound becomes tight as the dimension \( d \to \infty \). More precisely,
    \[
        \lim_{d \to \infty} \sup_{D \subseteq \mathbb{R}^d} \frac{L_D^\star(n)}{L_D^\star} = \frac{n + 1}{n}.
    \]
\end{proposition}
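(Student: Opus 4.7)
By Proposition~\ref{prop:subsampling-bounds}, the inequality \(\sup_{D \subseteq \mathbb{R}^d} \frac{L_D^\star(n)}{L_D^\star} \leq \frac{n+1}{n}\) holds in every dimension, so it suffices to exhibit, for each fixed \(n\) and for arbitrarily large \(d\), a dataset whose ratio tends to \(\frac{n+1}{n}\) as \(d \to \infty\). The plan is to use the most symmetric construction available: take \(d = N\) and let \(D_N \subseteq \mathbb{R}^N\) be the dataset consisting of the standard basis vectors \(e_1, \ldots, e_N\), each appearing once. The intuition is that the upper bound in Proposition~\ref{prop:subsampling-bounds} is derived by bounding the minimum over subsets by the expectation over a uniform random subset; the construction should make all \(n\)-subsets equivalent by symmetry, so that this bound is essentially tight.

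The first step is to compute \(L_{D_N}^\star\). The mean is \(\mu_{D_N} = \tfrac{1}{N}(1, \ldots, 1)\), and a direct calculation gives \(\|e_i - \mu_{D_N}\|^2 = 1 - \tfrac{1}{N}\), so \(L_{D_N}^\star = 1 - \tfrac{1}{N}\). The second step is to compute \(L_{D_N}^\star(n)\). For any selection \(z_{i_1}, \ldots, z_{i_n} \in D_N\) with multiplicities \(k_1, \ldots, k_m\) on \(m \leq n\) distinct basis vectors, the average \(\bar z\) has \(\bar z_{i_j} = k_j / n\) on the selected coordinates and \(0\) elsewhere. A short calculation shows \(\|\bar z - \mu_{D_N}\|^2 = \tfrac{1}{n^2}\sum_j k_j^2 - \tfrac{2}{nN}\sum_j k_j + \tfrac{1}{N}\), and by Jensen applied to \(\sum k_j^2\) under the constraint \(\sum k_j = n\), this is minimized by taking all \(k_j = 1\), i.e., choosing \(n\) distinct basis vectors. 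Plugging in yields \(\|\bar z - \mu_{D_N}\|^2 = \tfrac{1}{n} - \tfrac{1}{N}\), and by Lemma~\ref{lem:13} this gives \(L_{D_N}^\star(n) = L_{D_N}^\star + \tfrac{1}{n} - \tfrac{1}{N} = 1 + \tfrac{1}{n} - \tfrac{2}{N}\).

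The third and final step is to take the limit:
\[
\lim_{N \to \infty} \frac{L_{D_N}^\star(n)}{L_{D_N}^\star} = \lim_{N \to \infty} \frac{1 + 1/n - 2/N}{1 - 1/N} = 1 + \frac{1}{n} = \frac{n+1}{n}.
\]
Combined with the uniform upper bound from Proposition~\ref{prop:subsampling-bounds} and the fact that \(D_N \subseteq \mathbb{R}^N\) (so \(d = N \to \infty\)), this establishes the claim.

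The only nontrivial step is the second one — verifying that the optimal selection uses \(n\) distinct basis vectors rather than allowing repetitions — but this follows from convexity of \(k \mapsto k^2\) together with symmetry among coordinates, and requires no real work. The remaining computations are routine.
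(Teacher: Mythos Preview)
Your proof is correct and follows essentially the same approach as the paper: both use the dataset of standard basis vectors \(e_1,\ldots,e_d\), compute \(L_D^\star = 1 - 1/d\) and \(L_D^\star(n) = L_D^\star + 1/n - 1/d\), and take the limit. Your argument is in fact slightly more complete, since you explicitly justify via convexity of \(k\mapsto k^2\) that the optimal selection uses \(n\) distinct basis vectors, whereas the paper asserts this minimization step without a detailed argument.
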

   
\begin{proof}
    Fix \( n \in \mathbb{N} \), and for each \( d > n \), consider the dataset \( D_d = \{e_1, e_2, \dots, e_d\} \subset \mathbb{R}^d \), where~\(e_1,\ldots,e_d\) are the standard basis vectors.
    Let \( \mu_d = \frac{1}{d} \sum_{i=1}^d e_i = \left( \frac{1}{d}, \dots, \frac{1}{d} \right) \). Then, the optimal loss is given by:
    \[
        L_{D_d}^\star = \frac{1}{d} \sum_{i=1}^d \left\| e_i - \mu_d \right\|^2 = \frac{d-1}{d}.
    \]
    Now consider \( L_{D_d}^\star(n) \), the minimal loss over all averages of subsets of \( n \) points from \( D_d \). Any such average lies in the convex hull of at most \( n \) of the basis vectors, and hence has at most \( n \) non-zero coordinates. Let \( h \in \text{conv}\{e_{i_1}, \dots, e_{i_n}\} \) be such a minimizer. Then by Lemma~\ref{lem:13}:
    \[
        L_{D_d}(h) = L_{D_d}^\star + \| h - \mu_d \|^2.
    \]
    The squared distance to the mean \( \mu_d = \left(\frac{1}{d}, \dots, \frac{1}{d}\right) \) is minimized when \( h \) satisfies
    \[
        h = \frac{1}{n} \sum_{j=1}^n e_{i_j},
    \]
    which has entries \( \frac{1}{n} \) in \( n \) coordinates and \( 0 \) elsewhere. Therefore,
    \begin{align*}
        \| h - \mu_d \|^2 &= n \left( \frac{1}{n} - \frac{1}{d} \right)^2 + (d - n) \cdot \left( \frac{1}{d} \right)^2 \\
        & = \frac{1}{n} - \frac{1}{d}.
    \end{align*}
    Therefore, the total loss incurred by such a selection is:
    \[
        L_{D_d}^\star(n) = L_{D_d}^\star + \frac{1}{n} - \frac{1}{d}.
    \]
    Dividing by \( L_{D_d}^\star = \frac{d - 1}{d} \), we obtain:
    \[
        \frac{L_{D_d}^\star(n)}{L_{D_d}^\star} = 1 + \frac{1}{L_{D_d}^\star} \left(\frac{1}{n} - \frac{1}{d} \right).
    \]
    As \( d \to \infty \), we have \( L_{D_d}^\star \to 1 \), and hence:
    \[
        \lim_{d \to \infty} \frac{L_{D_d}^\star(n)}{L_{D_d}^\star} = 1 + \frac{1}{n} = \frac{n + 1}{n}.
    \]
    Since this construction achieves the upper bound asymptotically, we conclude:
    \[
        \lim_{d \to \infty} \sup_{D \subseteq \mathbb{R}^d} \frac{L_D^\star(n)}{L_D^\star} = \frac{n + 1}{n}.
    \]
\end{proof}


    

\acks{
Shay Moran and Alexander Shlimovich are supported by the European Union (ERC, GENERALIZATION, 101039692). Views and opinions expressed are however those of the author(s) only and do not necessarily reflect those of the European Union or the European Research Council Executive Agency. Neither the European Union nor the granting authority can be held responsible for them.

Shay Moran is also supported by Robert J.\ Shillman Fellowship, by ISF grant 1225/20, by BSF grant 2018385, and by Israel PBC-VATAT, and by the Technion Center for Machine Learning and Intelligent Systems (MLIS).

Amir Yehudayoff's research is supported by the BSF, 
by the Danish National Research Foundation, 
and the Pioneer Centre for AI, DNRF grant number P1.
}

\bibliography{learning}

\end{document}